\title{Computationally and statistically efficient learning of causal Bayes nets using path queries}
\author{
	Kevin Bello \\
	Department of Computer Science\\
	Purdue University\\
	West Lafayette, IN, USA \\
	\texttt{kbellome@purdue.edu} \\
\And
	Jean Honorio \\
	Department of Computer Science\\
	Purdue University\\
	West Lafayette, IN, USA \\
	\texttt{jhonorio@purdue.edu} \\
}
\begin{document}

\maketitle

\begin{abstract}
Causal discovery from empirical data is a fundamental problem in many scientific domains. 
Observational data allows for identifiability only up to Markov equivalence class. 
In this paper we first propose a polynomial time algorithm for learning the exact correctly-oriented structure of the transitive reduction of any causal Bayesian network with high probability, by using \emph{interventional path queries}. 
Each path query takes as input an origin node and a target node, and answers whether there is a directed path from the origin to the target. 
This is done by \emph{intervening} on the origin node and observing samples from the target node. 
We theoretically  show the logarithmic sample complexity for the size of interventional data per path query, for continuous and discrete networks. 
We then show how to learn the \emph{transitive} edges using also logarithmic sample complexity (albeit in time exponential in the maximum number of parents for discrete networks), which allows us to learn the full network.
We further extend our work by reducing the number of interventional path queries for learning rooted trees.
We also provide an analysis of imperfect interventions.
\end{abstract}

\section{Introduction}

\paragraph{Motivation.}

Scientists in diverse areas (e.g., epidemiology, economics, etc.) aim to unveil causal relationships within variables from collected data. For instance, biologists try to discover the causal relationships between genes. By providing a specific treatment to a particular gene (origin), one can observe whether there is an effect in another gene (target). This effect can be either direct (if the two genes are connected with a directed edge) or indirect (if there is a directed path from the origin to the target gene).

Bayesian networks (BNs) are powerful representations of joint probability distributions. BNs are also used to describe causal relationships among variables \citep{Koller09}. The structure of a \textit{causal} BN (CBN) is represented by a directed acyclic graph (DAG), where nodes represent random variables, and an edge between two nodes $X$ and $Y$ (i.e., $X \to Y$) represents that the former ($X$) is a direct cause of the latter ($Y$). Learning the DAG structure of a CBN is of much relevance in several domains, and is a problem that has long been studied during the last decades. 

From \textit{observational} data alone (i.e., \textit{passively} observed data from an undisturbed system), DAGs are only identifiable up to Markov equivalence.\footnote{Two graphs are Markov equivalent if they imply the same set of (conditional) independences. In general, two graphs are Markov equivalent iff they have the same structure ignoring arc directions, and have the same v-structures \citep{Verma:1990:ESC:647233.719736}. (A v-structure consists of converging directed edges into the same node, such as $X \to Y \gets Z$.)} 
However, since our goal is causal discovery, this is inadequate as two BNs might be Markov equivalent and yet make different predictions about the consequences of interventions (e.g., $X \gets Y$ and $X \to Y$ are Markov equivalent, but make very different assertions about the effect of changing $X$ on $Y$). 
In general, the only way to distinguish causal graphs from the same Markov equivalence class is to use \textit{interventional} data \citep{hauser2012two, he2008active, murphy2001active}. 
This data is produced after performing an  experiment (intervention) \citep{Pearl:2009:CMR:1642718}, in which one or several random variables are forced to take some specific values, irrespective of their causal parents.

\paragraph{Related work.}
Several methods have been proposed for learning the structure of Bayesian networks from \textit{observational} data.
Approaches ranging from score-maximizing heuristics, exact exponential-time score-maximizing, ordering-based search methods using MCMC, and test-based methods have been developed to name a few. The umbrella of tools for structure learning of Bayesian networks go from exact methods (exponential-time with convergence/consistency guarantees) to heuristics methods (polynomial-time without any convergence/consistency guarantee).
\cite{Hoffgen93} provide a score-maximizing algorithm that is likelihood consistent, but that needs super-exponential time.
\cite{Spirtes00,Cheng02} provide polynomial-time test-based methods that are structure consistent, but results hold only in the infinite-sample limit (i.e., when given an infinite number of samples).
\cite{Chickering02b} show that greedy hill-climbing is structure consistent in the infinite sample limit, with unbounded time.
\cite{Zuk06} show structure consistency of a single network and do not provide uniform consistency for all candidate networks (the authors discuss the issue of not using the union bound in their manuscript).
From the \textit{active learning} literature, most of the works first find a Markov equivalence class (or assume that they have one) from purely \textit{observational} data and then \redcolor{orient} the edges by using as few \textit{interventions} as possible.
\cite{murphy2001active, tong2001active} propose an exponential-time Bayesian approach relying on structural priors and MCMC.
\cite{hauser2012two, he2008active, shanmugam2015learning} present methods to find an optimal set of interventions in polynomial time for a class of chordal DAGs. 
Unfortunately, finding the initial Markov equivalence class remains exponential-time for general DAGs \cite{chickering1996learning,Pearl:2009:CMR:1642718}.
\cite{eaton2007exact} propose an exponential-time dynamic programming algorithm for learning DAG structures exactly.
\cite{triantafillou2015constraint} propose a constraint-based method to combine heterogeneous (observational and interventional) datasets but rely on solving instances of the (NP-hard) \emph{boolean satisfiability problem}. 
\cite{Eberhardt05onthe} analyzed the number of interventions sufficient and in the worst-case necessary to determine the structure of any DAG, although no algorithm or sample complexity analysis was provided.
Literature on learning \emph{structural equation models} from observational data, include the work on continuous \citep{peters2014causal,shimizu2006linear} and discrete \citep{peters2010identifying} additive noise models.
Correctness was shown for the continuous case \citep{peters2014causal} but only in the infinite-sample limit.
\cite{kocaoglu2017experimental} propose a method to learn the exact observable graph by using $\bO{\log n}$ multiple-vertex interventions, where $n$ is the number of variables, through the use of pairwise conditional independence test and assuming access to the post-interventional graph.
However the size of the intervened set is $\bO{\nicefrac{n}{2}}$ which leads to a $\bO{2^{\nicefrac{n}{2}}}$ number of experiments in the worst case.
In contrast to this work, we perform single-vertex interventions as a first step and then multiple-vertex interventions while keeping a small sample complexity. 
While this increments the number of interventions to $n$, we have a better control of the number of experiments.

\begin{remark}
	In this paper we consider one intervention as one selection of variables to intervene.
	However, we consider an experiment as the actual setting of values to the variables.
	For example, if a variable $X$ takes $p$ different values, then one experiment is $X$ taking one specific value.
	To intervene one binary variable, it is common to make 2 experiments, one under treatment, and one under no treatment.
\end{remark}
For a discussion of learning from purely interventional data, as well as availability of purely interventional data, see Appendix \ref{app:discussion}.
\paragraph{Contributions.}
We propose a polynomial time algorithm with provable guarantees for exact learning  of the transitive reduction of any CBN by using interventional path queries.
We emphasize that modeling the problem of structure learning of CBNs as a problem of reconstructing a graph using path queries is also part of our contributions.
We analyze the sample complexity for answering every interventional path query and show that for CBNs of discrete random variables with maximum domain size $r$, the sample complexity is $\mathcal{O}(\log(nr))$; whereas for CBNs of sub-Gaussian random variables, the sample complexity is $\mathcal{O}(\sigmasub\log{n})$ where $\sigmasub$ is an upper bound of the variable variances (marginally as well as after interventions). 
Then, we introduce a new type of query to learn the \emph{transitive edges} (i.e., the edges that are part of the true network but not of the transitive reduction), while the learning is not in polynomial-time for discrete CBNs in the worst case (exponential in the maximum number of parents), we show that the sample complexity is still polynomial.
We also present two extensions: for learning rooted trees the number of path queries is reduced to $\mathcal{O}(n\log n)$, which is an improvement from the $n^2$ for general DAGs.
We also provide an analysis of imperfect interventions.
We summarize our main results in Table \ref{tab:summary} and compare them to one of the closest related work \cite{kocaoglu2017experimental}.

\begin{table}[!ht]
\centering
\caption{Here $n$ is the number of variables, $\sigmasub$ is an upper bound of the variable variances (marginally as well as after interventions), $t$ is the maximum number of parents, $r$ is the maximum number of values a discrete variable can take, and $B$ denotes the time complexity of an independence-test oracle. 
Note that $B \in \bO{2^n}$ in the worst case and not $\bO{2^t}$ because \cite{kocaoglu2017experimental} can select an intervention set of $n/2$ nodes (see for example Appendix F.2). 
In this table, \textit{novel} indicates that no prior work provided results on the respective subject.
Finally, C and D denote continuous and discrete variables respectively.
}
\label{tab:summary}
\begin{scriptsize}
\begin{tabular}{@{}ccccc@{}}
\toprule
Graph                                                                                         & Var.                                        & Algorithms &  Sample complexity       & Time complexity       \\ \midrule
\multicolumn{1}{c|}{\multirow{3}{*}{\begin{tabular}[c]{@{}c@{}}General \\ DAGs\end{tabular}}} & \multicolumn{1}{c|}{\multirow{2}{*}{D}}   &   1, 5, 3, 7 (our work)       &           $\bO{n^2 2^t\log (nr)}$ (Novel, see Thms. 1, 3)              &           $\bO{n^2 2^t\log (nr)}$            \\
\multicolumn{1}{c|}{}                                                                         & \multicolumn{1}{c|}{}                            &    1, 3 in [13]     &       -                 &           $\bO{Btn^2\log^2 n}$ ($B \in \bO{2^n}$)            \\
\multicolumn{1}{c|}{}                                                                         & \multicolumn{1}{c|}{\multirow{1}{*}{C}} &    1, 6, 3, 8 (our work)     &           $\bO{n^2 \sigmasub \log n}$     (Novel, see Thms. 2, 4)         &          $\bO{n^2 \sigmasub \log n}$             \\
\multirow{1}{*}{\begin{tabular}[c]{@{}c@{}}Rooted trees\end{tabular}}                       & \multicolumn{1}{c|}{\multirow{1}{*}{D}}   &    See Section \ref{sec:extensions}        &          $\bO{n \log^2 (nr)}$ (Novel, see Section \ref{sec:extensions})              &       $\bO{n \log^2 (nr)}$                \\
\midrule
\midrule 
Graph                                                                                         & Var.                                        & Algorithms & \# of interventions & \# of experiments \\ \midrule
\multicolumn{1}{c|}{\multirow{3}{*}{\begin{tabular}[c]{@{}c@{}}General \\ DAGs\end{tabular}}} & \multicolumn{1}{c|}{\multirow{2}{*}{D}}   &   1, 5, 3, 7 (our work)       &           $\bO{n^2}$              &           $\bO{n^2 2^t}$            \\
\multicolumn{1}{c|}{}                                                                         & \multicolumn{1}{c|}{}                            &    1, 3 in [13]       &          $\bO{\log n}$              &           $\bO{2^n \log n}$ (see Appendix F.2.)            \\
\multicolumn{1}{c|}{}                                                                         & \multicolumn{1}{c|}{\multirow{1}{*}{C}} &    1, 6, 3, 8 (our work)     &           $\bO{n^2}$              &          $\bO{n^2}$             \\
\multirow{1}{*}{\begin{tabular}[c]{@{}c@{}}Rooted trees\end{tabular}}                       & \multicolumn{1}{c|}{\multirow{1}{*}{D}}   &    See Section \ref{sec:extensions}       &          $\bO{n}$               &       $\bO{nr}$                \\
\bottomrule
\end{tabular}
\end{scriptsize}
\end{table}

\section{Preliminaries}
\label{sec:preliminaries}

In this section, we introduce our formal definitions and notations. Vectors and matrices are denoted by lowercase and uppercase bold faced letters respectively. Random variables are denoted by italicized uppercase letters and their values by lowercase italicized letters. Vector $\ell_p$-norms are denoted by $\lVert \cdot \rVert_p$. For matrices, $\lVert \cdot \rVert_{p,q}$ denotes the entrywise $\ell_{p,q}$ norm, i.e., for $\lVert \mathbf{A} \rVert_{p,q} = \| ( \|(  \mathrm{A}_{1,1}, \dots, \mathrm{A}_{m,1})\|_p , \dots, \|(\mathrm{A}_{1,n},\dots,\mathrm{A}_{m,n})\|_p ) \|_q$. 

Let $ \Grm = (\Vrm, \Erm)$ be \textit{directed acyclic graph} (DAG) with vertex set $\Vrm = \{1,\ldots,n\}$ and edge set $\Erm \subset \Vrm \times \Vrm$, where $(i,j) \in \Erm$ implies the edge $i \to j$. For a node $i \in \Vrm$, we denote $\pi_\Grm(i)$ as the parent set of the node $i$. In addition, a directed path of length $k$ from node $i$ to node $j$ is a sequence of nodes $(i, v_1, v_2, \ldots, v_{k-1}, j)$ such that $\{ (i, v_1), (v_1, v_2), \ldots, (v_{k-2}, v_{k-1}), (v_{k-1}, j)\}$ is a subset of the edge set $\Erm$.

Let $\X = \{X_1,\ldots,X_n\}$ be a set of random variables, with each variable $X_i$ taking values in some domain $Dom[X_i]$. A \textit{Bayesian network} (BN) over $\X$ is a pair $\B = (\Grm,\THT)$ that represents a distribution over the joint space of $\X$. 
Here, $\Grm$ is a DAG, whose nodes correspond to the random variables in $\X$ and whose structure encodes conditional independence properties about the joint distribution, while $\THT$ quantifies the network by specifying the \textit{conditional probability distributions} (CPDs) $P(X_i | \Ui)$. 
We use $\Ui$ to denote the set of random variables which are parents of $X_i$. 
A Bayesian network represents a \textit{joint probability distribution} over the set of variables $\X$, i.e., $P(X_1,\ldots,X_n)=\prod_{i=1}^n P(X_i|\Ui)$. 

Viewed as a probabilistic model, a BN can answer any ``conditioning'' query of the form $P(\boldsymbol{Z}|\boldsymbol{E}=\boldsymbol{e})$ where $\boldsymbol{Z}$ and $\boldsymbol{E}$ are sets of random variables and $\boldsymbol{e}$ is an assignment of values to $\boldsymbol{E}$. 
Nonetheless, a BN can also be viewed as a \textit{causal model} or causal BN (CBN) \citep{Pearl:2009:CMR:1642718}. Under this perspective, the CBN can also be used to answer \textit{interventional} queries, which specify probabilities after we intervene in the model, forcibly setting one or more variables to take on particular values. 
The manipulation theorem \citep{Spirtes00,Pearl:2009:CMR:1642718} states that one can compute the consequences of such interventions (perfect interventions) by ``cutting'' all the arcs coming into the nodes which have been clamped by intervention, and then doing typical probabilistic inference in the ``mutilated'' graph (see Figure \ref{fig:intervention} as an example). 
We follow the standard notation \citep{Pearl:2009:CMR:1642718} for denoting the probability distribution of a variable $\j$ after intervening $\i$, that is, $P(\j|do(\i=x_i))$.
In this case, the joint distribution after intervention is given by $P(X_{1},\dots,X_{i-1},X_{i+1},\dots,X_n | do(X_i = x_i)) = \indicator[X_i=x_i] \prod_{j \neq i} P(X_j | \Uj)$.

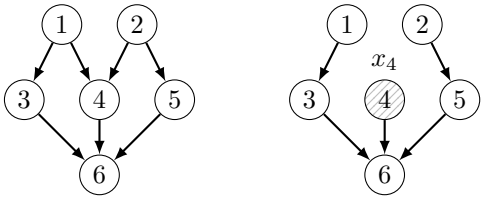
\begin{figure}[ht]
\begin{center}
\begin{tikzpicture}[scale=0.7]
\tikzset{>=latex}
\tikzstyle{vertex}=[circle, fill=white, draw, inner sep=0pt, minimum size=10pt]
\node[vertex][label=center:$1$](x0) at (1.5,3) {};
\node[vertex][label=center:$2$](x1) at (2.5,3) {};
\node[vertex][label=center:$3$](x2) at (1,2) {};
\node[vertex][label=center:$4$](x3) at (2,2) {};
\node[vertex][label=center:$5$](x4) at (3,2) {};
\node[vertex][label=center:$6$](x5) at (2,1) {};
\tikzset{EdgeStyle/.style={->}}
\Edge(x0)(x2)
\Edge(x0)(x3)
\Edge(x1)(x3)
\Edge(x1)(x4)
\Edge(x2)(x5)
\Edge(x3)(x5)
\Edge(x4)(x5)
\end{tikzpicture}
\hspace{0.4in}
\begin{tikzpicture}[scale=0.7]
\tikzset{>=latex}
\tikzstyle{vertex}=[circle, fill=white, draw, inner sep=0pt, minimum size=10pt]
\node[vertex][label=center:$1$](x0) at (1.5,3) {};
\node[vertex][label=center:$2$](x1) at (2.5,3) {};
\node[vertex][label=center:$3$](x2) at (1,2) {};
\node[vertex][label=north:$x_4$, pattern=north east lines, pattern color=gray!50](x3) at (2,2) {$4$};
\node[vertex][label=center:$5$](x4) at (3,2) {};
\node[vertex][label=center:$6$](x5) at (2,1) {};
\tikzset{EdgeStyle/.style={->}}
\Edge(x0)(x2)
\Edge(x1)(x4)
\Edge(x2)(x5)
\Edge(x3)(x5)
\Edge(x4)(x5)
\end{tikzpicture}
\end{center}
\caption{{\small (Left) A CBN of $6$ variables, where the joint distribution, $P(\X)$, is factorized as $\prod_{i} P(X_i | \Ui ).$ (Right) The mutilated CBN after intervening $X_4$ with value $x_4$. Note that the edges $\{(1,4),(2,4)\}$ are not part of the CBN after the intervention, thus, the new joint is $P(\X|do(X_4=x_4)) =  \indicator[X_4=x_4] \prod_{i \neq 4} P(X_i | \Ui ).$ }}
\label{fig:intervention}
\end{figure}

We refer to CBNs in which all random variables $\i$ have finite domain, $Dom[\i]$, as discrete CBNs. In this case, we will denote the probability mass function (PMF) of a random variable as a vector. That is, a PMF, $P(Y)$, can be described as a vector $\pbf(Y) \in [0,1]^{|Dom[Y]|}$ indexed by the elements of $Dom[Y]$, i.e., $\prm_j(Y) = P(Y=j), \forall j \in Dom[Y]$. We refer to networks with variables that have continuous domains as continuous CBNs. 

Next, we formally define transitive edges. 

\begin{definition}[Transitive edge]
\label{def:redundant_edge}
Let $\Grm = (\Vrm, \Erm)$ be a DAG. We say that an edge $(i,j) \in \Erm$ is transitive if there exists a directed path from $i$ to $j$ of length greater than 1.
\end{definition}

The algorithm for removing transitive edges from a DAG is called \emph{transitive reduction} and it was introduced in \cite{aho1972transitive}. The transitive reduction of a DAG $\Grm$, $\TR(\Grm)$, is then $\Grm$ without any of its transitive edges. Our proposed methods also make use of \textit{path queries}, which we define as follows:

\begin{definition}[Path query]
\label{def:query}
Let $\Grm = (\Vrm, \Erm)$ be a DAG. A path query is a function $Q_\Grm: \Vrm  \times \Vrm \rightarrow \{0, 1\}$ such that $Q_\Grm(i, j) = 1$ if there exists a directed path in $\Grm$ from $i$ to $j$, and $Q_\Grm(i, j) = 0$ otherwise.
\end{definition}

\paragraph{General DAGs are identifiable only up to their transitive reduction by using path queries.}

In general, DAGs can be non-identifiable by using path queries. We will use $Q(i,j)$ to denote $Q_\Grm(i,j)$ since for our problem, the DAG $\Grm$ is fixed (but unknown).  For instance, consider the two graphs shown in Figure \ref{fig:dags}. In both cases, we have that $Q(1, 2) = Q(1, 3) = Q(2, 3) = 1$. Thus, by using path queries, it is impossible to discern whether the edge $(1, 3)$ exists or not. 
Later in Subsection \ref{sec:transitive} we focus on the recovery of transitive edges, which requires a different type of query.

\begin{figure}[ht]
\begin{center}
\begin{tikzpicture}[scale=0.7]
\tikzset{>=latex}
\tikzstyle{vertex}=[circle, fill=white, draw, inner sep=0pt, minimum size=10pt]
\node[vertex][label=center:$1$](x0) at (1.5,2) {};
\node[vertex][label=center:$2$](x1) at (1,1) {};
\node[vertex][label=center:$3$](x2) at (2,1) {};
\tikzset{EdgeStyle/.style={->}}
\Edge(x0)(x1)
\Edge(x0)(x2)
\Edge(x1)(x2)
\end{tikzpicture}
\hspace{0.4in}
\begin{tikzpicture}[scale=0.7]
\tikzset{>=latex}
\tikzstyle{vertex}=[circle, fill=white, draw, inner sep=0pt, minimum size=10pt]
\node[vertex][label=center:$1$](x0) at (1.5,2) {};
\node[vertex][label=center:$2$](x1) at (1,1) {};
\node[vertex][label=center:$3$](x2) at (2,1) {};
\tikzset{EdgeStyle/.style={->}}
\Edge(x0)(x1)
\Edge(x1)(x2)
\end{tikzpicture}
\end{center}
\vspace{-0.1in}
\caption{{\small Two directed acyclic graphs that produce the same answers when using path queries.}}
\label{fig:dags}
\end{figure}
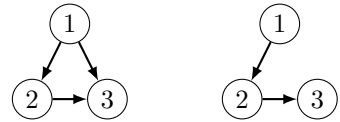

How to answer path queries is a key step in this work. Since we answer path queries by using a finite number of interventional samples, we require a noisy path query, which is defined below.

\begin{definition}[$\delta$-noisy partially-correct path query]
\label{def:noisy_query_1}
Let $\Grm = (\Vrm, \Erm)$ be a DAG, and let $Q_\Grm$ be a path query. Let $\delta \in (0,1)$ be a probability of error. A $\delta$-noisy partially-correct path query is a function $\tilde{Q}_\Grm: \Vrm  \times \Vrm \rightarrow \{0, 1\}$ such that $\tilde{Q}_\Grm(i, j) = Q_\Grm(i,j)$ with probability at least $1-\delta$ if $i \in \pi_\Grm(j)$ or if there is no  directed path from $i$ to $j$.
\end{definition}

We will use the term noisy path query to refer to $\delta$-noisy partially-correct path query. Note that Definition \ref{def:noisy_query_1}  requires a noisy path query to be correct \emph{only in certain cases}, when one variable is parent of the other, or when there is no directed path between them. 
We do not require correctness when there is a directed path between $i$ and $j$ and $i$ is not a parent of $j$, that is, when the path length is greater than 1.
Note that the uncertainty of the exact recovery of the transitive reduction relies on answering multiple noisy path queries.

\subsection{Assumptions}
Here we state the main set of assumptions used throughout our paper.

\begin{assumption}
\label{assump:observ}
Let $\Grm = (\Vrm, \Erm)$ be a DAG. All nodes in $\Grm$ are observable, furthermore, we can perform interventions on any node $i \in \Vrm$.
\end{assumption}

\begin{assumption}[Causal Markov]
\label{assump:markov}
The data is generated from an underlying CBN $(\Grm, \THT)$ over $\X$.
\end{assumption}

\begin{assumption}[Faithfulness]
\label{assump:faith}
The distribution $P$ over $\X$ induced by $(\Grm, \THT)$ satisfies no independences beyond those implied by the structure of $\Grm$.
We also assume faithfulness in the post-interventional distribution.
\end{assumption}

Assumption \ref{assump:observ} implies the availability of purely interventional data, and has been widely used in the literature \citep{murphy2001active, tong2001active, he2008active, hauser2012two, shanmugam2015learning, kocaoglu2017experimental}.
We consider only observed variables because we perform interventions on each node, thus, our method is robust to latent confounders.
(See Appendix \ref{app:latent} for more details).
With Assumption \ref{assump:markov}, we assume that any population produced by a causal graph has the independence relations obtained by applying d-separation to it, while with Assumption \ref{assump:faith}, we ensure that the population has exactly these and no additional independences \citep{Spirtes00, tong2001active, shanmugam2015learning, he2008active, triantafillou2015constraint}.

\section{Algorithms and Sample Complexity}
\label{sec:algorithmsAndSample}

	Next, we present our first set of results and provide a formal analysis on the sample complexity.

	\subsection{Algorithm for Learning the Transitive Reduction of CBNs}

		\cite{kocaoglu2017experimental} show that by using $\BigO{\log n}$ multiple-vertex interventions, one can recover the transitive reduction of a DAG.
		However, in this case, each set of intervened variables has a size of $\ABigO{\nicefrac{n}{2}}$, which means that the method of \cite{kocaoglu2017experimental} has to perform a total of $\ABigO{2^{\nicefrac{n}{2}} \log n}$ experiments, one for each possible setting of the $\ABigO{\nicefrac{n}{2}}$ intervened variables (see an example of this in Appendix \ref{app:examples}). 
		Thus, in this part we work with single-vertex interventions to avoid the exponential number of experiments.
		We can then learn the transitive reduction as follows (see mote details in Appendix \ref{app:algo_transitive_reduction}).
		
		\begin{algorithm2}
		\label{algo:algo1}
		Start with a set of edges $\hat{\Erm} = \varnothing$. 
		Then for each pair of nodes $i,j \in \Vrm$, compute the \textit{noisy path query} $\nQ(i,j)$ and add the edge $(i,j)$ to $\hat{\Erm}$ if the query returns $1$. 
		Finally, compute the transitive reduction of $\hat{\Erm}$ in poly-time \cite{aho1972transitive}, and return $\hat{\Erm}$.
		\end{algorithm2}

		As seen in the next section, each query is computed using single-vertex interventions.
		In fact, for each intervened node, we can compute $n$ queries, i.e., while the number of queries is $n^2$, the number of interventions is $n$.
		This number of single-vertex interventions is necessary in the worst case \cite{Eberhardt05onthe}.
		
		It is natural to ask what would be the benefit of using path queries.
		A query $\nQ(i,j)$ can be interpreted as observing the variable $\j$ after intervening $\i$.
		Under this viewpoint, if one could reduce the number of queries for learning certain classes of graphs, then not only might the number of interventions decrease but the number of variables to observe too.
		That is, if one knows a priori that the topology of the graph belongs to a certain family of graphs then it may be possible to reduce the number of queries (see for example Section \ref{sec:extensions}).
		This is important in practice as both performing interventions and observing variables might be costly.
		We first focus in learning general DAGs, in which a number of $\Omega(n^2)$ path queries is in the worst case necessary for any conceivable algorithm.
		(See Theorems 7 and 8 in \cite{wang2016reconstructing}).
		Later we show that a number of $\BigO{n\log n}$ noisy path queries\footnote{This path query requires a ``stronger'' version of Definition \ref{def:noisy_query_1}. See for instance Definition \ref{def:noisy_path_query}.} suffices for learning rooted trees.

	\subsection{Noisy Path Query Algorithm}

		The next two propositions are important for answering a path query. 
		\begin{proposition} 
		\label{lemma:dseparation}
		Let $\B = (\Grm,\THT)$ be a CBN with $\i,\j \in \X$ being any two random variables in $\Grm$. If there is no directed path from $i$ to $j$ in $\Grm$, then  $P(\j|do(\i=x_i)) = P(\j)$.
		\end{proposition}
		\begin{proposition}
		\label{lemma:probability}
		Let $\B = (\Grm,\THT)$ be a CBN and let $X_i$ and $X_j$ be two random variables in $\Grm$, such that $i \in \pi_\Grm(j)$. Then, there exists $x_i$ and $x_i'$ such that:
		\begin{align*}
		1. \ P(X_j) \neq P(X_j|do(X_i=x_i)) \ \text{ and } \ 2.\ P(\j|do(\i=x_i)) \neq P(\j|do(\i=x'_i))
		\end{align*}
		\end{proposition}
		See Appendix \ref{appendix_proofs} for details of all proofs. 
		Proposition \ref{lemma:probability} motivates the idea that we can search for two different values of $X_i$ to determine the causal dependence on $X_j$ (Claim 2), which is arguably useful for discrete CBNs. Alternatively, we can use the expected value of $\j$, since $\mathbb{E}[\j] \neq \mathbb{E}[\j|do(\i=x_i)]$ implies that $P(\j) \neq P(\j|do(\i=x_i))$ (Claim 1).

		Next, we propose a polynomial time algorithm for answering a noisy path query.  
		Algorithm \ref{algo:query_informal} presents the procedure in an intuitive way.
		Here, the type of statistic is motivated by Lemmas \ref{lemma:dseparation} and \ref{lemma:probability}, and the value of interventions and threshold $t$ are motivated by Theorems \ref{thrm:NRBN_discrete_general} and \ref{thrm:NRBN_continuous_general}.
		See Appendix \ref{app:noisy_queries} (Algorithms \ref{algo:query_discrete} and \ref{algo:query_continuous}) for the specific details of the algorithms for discrete and continuous CBNs.
		\setcounter{algorithm}{1}
		\begin{algorithm}[ht]
			\caption{Noisy path query algorithm}
			\label{algo:query_informal}
			\begin{algorithmic}[1]
				\Require{Nodes $i$ and $j$, number of interventional samples $m$, and threshold $t$.}
				\Ensure{$\nQ(i,j)$}
				\State Intervene $\i$ by setting its value to $x_i \in Dom[\i]$, and observe $m$ samples of $\j$
				\State Compute a statistic of $\j$ and return $1$ if it is greater than $t$.
			\end{algorithmic}
		\end{algorithm}
		\paragraph{Discrete random variables.}
		\label{sec:discrete_local}
		
		In this paper we use conditional probability tables (CPTs) as the representation of the CPDs for discrete CBNs. 
		Next, we present a theorem that provides the sample complexity of a noisy path query.
		 
		\begin{theorem}
			\label{thrm:NRBN_discrete_general}
			Let $\B = (\Grm, \THT)$ be a discrete CBN, such that each random variable $\j$ has a finite domain $Dom[\j]$, with $\abs{Dom[\j]} \leq r$. Furthermore, let 
			\begin{multline}
			\gamma = \hspace{-0.05in} \min_{\substack{j \in \Vrm \\ i \in \pi_\Grm(j)}} \hspace{0.025in} \min_{\substack{x_i, x'_i \in Dom[\i] \\ \pbf(\j|do(\i = x_i)) \neq \pbf(\j|do(\i = x'_i))  }} \hspace{-0.1in} \nrm{ \pbf(\j|do(\i = x_i)) - \pbf(\j|do(\i = x'_i)) }_{\infty}, \nonumber
			\end{multline}
			and let $\hat{\Grm}=(\Vrm,\hat{\Erm})$ be the learned graph by using Algorithm \ref{algo:algo1}. Then for $\gamma > 0$ and a fixed probability of error $\delta \in (0,1)$, we have
			\(
			\Pr{\TR(\Grm) = \hat{\Grm}} \geq 1-\delta,
			\)
			provided that $m \in \bO{\frac{1}{\gamma^2} \left(  \ln n + \ln \frac{ r}{\delta} \right)}$ interventional samples are used per $\delta$-noisy partially-correct path query in Algorithm \ref{algo:query_discrete}.
		\end{theorem}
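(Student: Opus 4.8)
The plan is to split the argument into a deterministic structural part and a probabilistic concentration part. Write $\Erm'=\{(i,j):i\neq j,\ \nQ(i,j)=1\}$ for the edge set produced just before the transitive-reduction step of Algorithm~\ref{algo:learning_tbn}, and let $\mathcal{E}$ be the event that every noisy path query that Definition~\ref{def:noisy_query_1} demands to be correct — i.e.\ every pair $(i,j)$ with $i\in\pi_\Grm(j)$, and every pair with no directed path from $i$ to $j$ — indeed returns $Q(i,j)$. I would first show that $\mathcal{E}$ forces $\hat\Grm=\TR(\Grm)$. On $\mathcal{E}$, every edge of $\TR(\Grm)$ is a parent--child pair of $\Grm$, hence lies in $\Erm'$; and every edge of $\Erm'$ corresponds to a reachable pair of $\Grm$ (otherwise that query would be required and would have answered $0$). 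Thus $\TR(\Grm)\subseteq\Erm'\subseteq\{(i,j): \text{$j$ reachable from $i$ in }\Grm\}$, so $\Erm$, $\Erm'$ and $\TR(\Grm)$ all induce the same reachability relation; since the transitive reduction of a DAG is unique and is determined solely by its reachability relation \citep{aho1972transitive}, $\TR(\Erm')=\TR(\Grm)$, i.e.\ $\hat\Grm=\TR(\Grm)$. (This is exactly the observation stated after Algorithm~\ref{algo:learning_tbn}.) It remains to show $\Pr[\mathcal{E}]\ge 1-\delta$ under the stated bound on $m$.

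Second, I would analyze one required query $(i,j)$ executed by Algorithm~\ref{algo:query_discrete}, taking its threshold to be $\gamma/2$. If there is no directed path from $i$ to $j$, Lemma~\ref{lemma:dseparation} gives $\pbf(\j\mid do(\i=x_i))=\pbf(\j)$ for all $x_i$, so all true interventional PMFs of $\j$ coincide. If instead $i\in\pi_\Grm(j)$, Lemma~\ref{lemma:probability} (claim~2) says some value pair induces distinct interventional PMFs, and by the definition of $\gamma$ any such pair $x_i,x_i'$ has $\|\pbf(\j\mid do(\i=x_i))-\pbf(\j\mid do(\i=x_i'))\|_\infty\ge\gamma>0$. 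In both cases, if every empirical PMF $\hat\pbf^{(x_i)}$ formed during the query is within $\ell_\infty$-distance $\gamma/4$ of its true interventional distribution, then by the triangle inequality all pairwise empirical distances are below $\gamma/2$ in the no-path case, while by the reverse triangle inequality the distinguishing pair exceeds $\gamma/2$ in the parent case; hence Algorithm~\ref{algo:query_discrete}'s test returns $Q(i,j)$. So this query can err only if some empirical PMF it builds deviates from the truth by at least $\gamma/4$ in $\ell_\infty$.

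Third, the concentration estimate. Each coordinate of an empirical PMF is a mean of $m$ i.i.d.\ $\{0,1\}$ indicators, so Hoeffding's inequality gives $\Pr[|\hat p_k-p_k|\ge\gamma/4]\le 2\exp(-m\gamma^2/8)$, and a union bound over the at most $r$ coordinates of $Dom[\j]$ yields $\Pr[\|\hat\pbf-\pbf\|_\infty\ge\gamma/4]\le 2r\exp(-m\gamma^2/8)$. Each query performs at most $r$ interventions, and Algorithm~\ref{algo:learning_tbn} issues at most $n^2$ queries, so one more union bound gives $\Pr[\mathcal{E}^c]\le 2n^2r^2\exp(-m\gamma^2/8)$. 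Requiring the right-hand side to be at most $\delta$ gives $m\ge\tfrac{8}{\gamma^2}\ln\big(\tfrac{2n^2r^2}{\delta}\big)=\tfrac{8}{\gamma^2}\big(2\ln n+2\ln r+\ln\tfrac{2}{\delta}\big)$, which is $m\in\bO{\tfrac{1}{\gamma^2}\big(\ln n+\ln\tfrac{r}{\delta}\big)}$, completing the proof.

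The computations are routine; the part that needs care is the structural reduction — isolating that only parent--child and no-path queries must be answered correctly, and that the final transitive-reduction step precisely absorbs the arbitrary ``don't-care'' answers on pairs joined by a directed path of length greater than $1$ — together with choosing the right $\ell_\infty$ concentration inequality, where the $\ln r$ arises from the union bound over the $\le r$ domain values rather than over the PMFs. The one mild technical choice is instantiating Algorithm~\ref{algo:query_discrete}'s threshold at a constant fraction of $\gamma$ (here $\gamma/2$, with concentration radius $\gamma/4$) so that the parent and no-path cases stay separated despite empirical error; a different constant only changes the absolute constant hidden in $\bO{\cdot}$.
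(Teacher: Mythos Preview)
Your proof is correct and follows essentially the same strategy as the paper's: reduce correctness of Algorithm~\ref{algo:learning_tbn} to simultaneous $\ell_\infty$-concentration of all empirical interventional PMFs, then use triangle/reverse-triangle inequalities to show the threshold test in Algorithm~\ref{algo:query_discrete} separates the parent case from the no-path case, with a global union bound over the $O(n^2 r)$ estimated PMFs. The only substantive difference is that the paper controls $\|\hat{\pbf}-\pbf\|_\infty$ via the Dvoretzky--Kiefer--Wolfowitz inequality (their Lemma~\ref{lemma:tau_condition}) rather than your coordinate-wise Hoeffding plus a union bound over the at most $r$ elements of $Dom[X_j]$; this trades one extra $\ln r$ in your bound for a slightly worse absolute constant in theirs, and both land in the stated $\mathcal{O}\big(\tfrac{1}{\gamma^2}(\ln n+\ln\tfrac{r}{\delta})\big)$.
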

		
		Intuitively, the value $\gamma$ characterizes the \textit{minimum causal effect} among all the pair of parent-child nodes.
		Due to Assumption \ref{assump:faith}, and the fact that an edge represents a causal relationship, we have $\gamma > 0$.
		This value is used for deciding whether two empirical PMFs are equal or not in our path query algorithm (Algorithm \ref{algo:query_discrete}), which implements Claim 2 in Proposition \ref{lemma:probability}.
		Finally, in practice, the value of $\gamma$ is unknown\footnote{Several prior works from leading experts also have $\tilde{\mathcal{O}}(\frac{1}{\gamma^2})$ sample complexity for  an \textit{unknowable} constant $\gamma$. See for instance, \citep{Brenner13, obozinski2009high, ravikumar2011high}.}.
		Fortunately, knowing a lower bound of $\gamma$ suffices for structure recovery.

		\paragraph{Continuous random variables.}
		\label{sec:continuous_local}
		
		For continuous CBNs, our algorithm compares two empirical expected values for answering a path query. This is related to Claim 1 in Proposition \ref{lemma:probability}, since $\mathbb{E}[\j] \neq \mathbb{E}[\j|do(\i=x_i)]$ implies $P(\j) \neq P(\j|do(\i=x_i))$. We analyze continuous CBNs where every random variable is sub-Gaussian. The class of sub-Gaussian variates includes for instance Gaussian variables, any bounded random variable (e.g., uniform), any random variable with strictly log-concave density, and any finite mixture of sub-Gaussian variables. Note that sample complexity using sub-Gaussian variables has been studied in the past for other models, such as Markov random fields \citep{ravikumar2011high}. Next, we present a theorem that formally characterizes the class of continuous CBNs that our algorithm can learn, and provides the sample complexity for each noisy path query.
		\begin{theorem}
		\label{thrm:NRBN_continuous_general}
			Let $\B=(\Grm,\THT)$ be a continuous CBN such that each variable $\j$ is a sub-Gaussian random variable with full support on $\R$, with mean $\mu_j=0$ and variance $\sigma^2_j$.
			Let $\mu_{j|do(\i=z)}$ and $\sigma^2_{j|do(\i=z)}$ denote the expected value and variance of $\j$ after intervening $\i$ with value $z$, assuming also that the variables remain sub-Gaussian after performing an intervention. Furthermore, let 
			\begin{equation}
			\begin{split}
			\mu(\B,z) = \min_{\substack{j \in \Vrm, i \in \pi_\Grm(j)}} \abs{ \mu_{j|do(\i=z)} }, \quad \sigma^2(\B,z) = \maxx{ \max_{\substack{j \in \Vrm, i \in \pi_\Grm(j)}} \sigma^2_{j|do(\i=z)} }{ \max_{j \in \Vrm} \sigma^2_j }, \nonumber
			\end{split}
			\end{equation}
			and let $\hat{\Grm}=(\Vrm, \hat{\Erm})$ be the learned graph by using Algorithm \ref{algo:algo1}. If there exist an upper bound $\sigmasub$ and a finite value $z$ such that $\sigma^2(\B,z) \leq \sigmasub$ and $\mu(\B,z) \geq 1$, then for a fixed probability of error $\delta \in (0,1)$, we have
			\( 
			\Pr{\TR(\Grm) = \hat{\Grm}} \geq 1-\delta,
			\)
			provided that $m \in \bO{\sigmasub \log \frac{n}{\delta}}$ interventional samples are used per $\delta$-noisy partially-correct path query in Algorithm \ref{algo:query_continuous}.
		\end{theorem}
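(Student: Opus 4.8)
The plan is to separate the argument into a purely combinatorial reduction and a concentration-plus-union-bound estimate. The reduction is: Algorithm~\ref{algo:learning_tbn} outputs $\TR(\Grm)$ on the event that every path query which Definition~\ref{def:noisy_query_1} requires to be correct is indeed answered correctly. The probabilistic part is: with $m\in\bO{\sigmasub\log\tfrac{n}{\delta}}$ samples per query in Algorithm~\ref{algo:query_continuous}, all of those queries are simultaneously correct with probability at least $1-\delta$.

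First I would make the combinatorial step precise, as sketched in the text just before Lemma~\ref{lemma:dseparation}. Work on the event $\mathcal{E}$ that $\tilde{Q}(i,j)=Q_\Grm(i,j)$ whenever $i\in\pi_\Grm(j)$ or there is no directed path from $i$ to $j$. Then on $\mathcal{E}$: (b) $\hat{\Erm}$ contains only ``reachability'' edges, since $\tilde{Q}(i,j)=1$ forces a directed path from $i$ to $j$ in $\Grm$ (otherwise $\tilde{Q}(i,j)=Q_\Grm(i,j)=0$); (a) $\hat{\Erm}$ contains every edge of $\TR(\Grm)$, because such an edge is a parent--child pair and $\tilde{Q}=1$ there; (c) any edge of $\hat{\Erm}$ not in $\TR(\Grm)$ is transitive within $\hat{\Erm}$, because its witnessing directed path of length $>1$ in $\Grm$ is a concatenation of parent--child edges, all of which lie in $\hat{\Erm}$ by (a). From these, $\TR$ deletes exactly the spurious edges and no edge of $\TR(\Grm)$ (a length-$>1$ path in $\hat{\Erm}$ between the endpoints of such an edge would, by (b), give one in $\Grm$, contradicting membership in $\TR(\Grm)$); hence $\TR(\hat{\Erm})=\TR(\Grm)$. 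This step is independent of the continuous setting.

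Next I would bound the failure probability of a single required query. Fix $(i,j)$ that must be answered correctly; Algorithm~\ref{algo:query_continuous} intervenes $X_i:=z$, draws $m$ i.i.d.\ samples of $X_j$ from $P(X_j\,|\,do(X_i=z))$, and returns $\indicator[|\hat{\mu}|>\nicefrac{1}{2}]$ with $\hat{\mu}$ the sample mean. If there is no directed path from $i$ to $j$, Lemma~\ref{lemma:dseparation} gives $P(X_j\,|\,do(X_i=z))=P(X_j)$, a sub-Gaussian law with mean $0$ and variance proxy at most $\sigmasub$ (since $\sigma^2(\B,z)\le\sigmasub$); the sub-Gaussian bound for a mean of $m$ i.i.d.\ copies gives $\Pr{|\hat{\mu}|\ge\nicefrac{1}{2}}\le 2\exp(-m/(8\sigmasub))$, so the query returns $0=Q_\Grm(i,j)$ except with that probability. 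If instead $i\in\pi_\Grm(j)$, then $|\mu_{j|do(X_i=z)}|\ge\mu(\B,z)\ge 1$ and the variance proxy is again at most $\sigmasub$, so $\Pr{|\hat{\mu}-\mu_{j|do(X_i=z)}|\ge\nicefrac{1}{2}}\le 2\exp(-m/(8\sigmasub))$, and on the complementary event $|\hat{\mu}|>1-\nicefrac{1}{2}=\nicefrac{1}{2}$, i.e.\ the query returns $1=Q_\Grm(i,j)$. (Lemma~\ref{lemma:probability}, claim~1, is what makes the normalization $\mu(\B,z)\ge 1$ plausible --- for additive-noise families a suitable finite $z$ exists after rescaling --- and the $\nicefrac{1}{2}$-versus-$1$ gap only affects constants.) Finally I would union-bound over the at most $n(n-1)<n^2$ pairs required to be correct: the chance that some one of them fails is at most $2n^2\exp(-m/(8\sigmasub))$, which is $\le\delta$ once $m\ge 8\sigmasub\ln(2n^2/\delta)\in\bO{\sigmasub\log\tfrac{n}{\delta}}$; on that $(1-\delta)$-probability event $\mathcal{E}$ holds, and the combinatorial step gives $\hat{\Grm}=\TR(\Grm)$.

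The individual pieces are routine; the point needing the most care is justifying that the \emph{post-intervention} law of $X_j$ is sub-Gaussian with variance proxy controlled by $\sigmasub$ --- precisely why $\sigma^2(\B,z)$ is a maximum over intervened variances as well as marginal ones, and why the hypothesis posits one uniform bound $\sigmasub$ --- together with invoking the combinatorial reduction with exactly the notion of correctness in Definition~\ref{def:noisy_query_1}.
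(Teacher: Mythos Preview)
Your proposal is correct and follows essentially the same approach as the paper: the probabilistic core is identical (sub-Gaussian concentration for the sample mean with threshold $t=\nicefrac{1}{2}$, giving per-query failure probability $2\exp(-m/(8\sigmasub))$, then a union bound over at most $n^2$ pairs to obtain $m\ge 8\sigmasub\ln(2n^2/\delta)$). Your combinatorial reduction is actually spelled out more carefully than in the paper, which simply asserts that correctness on parent--child and no-path pairs suffices after transitive reduction; one small remark is that in step~(c) you should invoke the full strength of your event $\mathcal{E}$ (every parent--child edge of $\Grm$, not just those in $\TR(\Grm)$, lands in $\hat{\Erm}$) rather than citing~(a), but the argument is sound.
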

		Note that the conditions $\mu_j=0, \forall j \in V$, and $\mu(\mathcal{B},z) \geq 1$ are set to offer clarity in the derivations. One could for instance set an upper bound for the magnitude of $\mu_j$, assume $\mu(\mathcal{B},z)$  to be greater than this upper bound plus $1$, and still have the same sample complexity. Finally, our motivation for giving such conditions is that of guaranteeing a proper separation of the expected values in cases where there is effect of a variable $\i$ over another variable $\j$, versus cases where there is no effect at all. 
		
		Next, we define the additive sub-Gaussian noise model (ASGN).
		\begin{definition}
		\label{def:subgaussian}
			Let $\Grm=(\Vrm,\Erm)$ be a DAG, let $\W \in \R^{n\times n}$ be the matrix of edge weights and let $\S = \{\sigma^2_i \in \R_+ | i\in \Vrm\}$ be the set of noise variances. An additive sub-Gaussian noise network is a tuple $(\Grm, \P(\W,\S))$ where each variable $\i$ can be written as follows:
			\( \i = \sum_{j \in \pi_\Grm(i)} \Wrm_{ij}X_j + N_i,\ \forall i \in \Vrm,\)
			with $N_i$ being an independent sub-Gaussian noise with full support on $\R$, with zero mean and variance $\sigma_i^2$ for all $i \in \Vrm$, and $\Wrm_{ij} \neq 0$ iff $(j,i) \in \Erm$.
		\end{definition}

		\begin{remark}
		\label{remark:asgm}
			Let $\B=(\Grm, \P(\W,\S))$ be an ASGN network. We can rewrite the model in vector form as: $\xbf = \W \xbf + \nbf$ or equivalently $\xbf = (\I - \W)^{-1} \nbf$, where $\xbf=(X_1,\ldots,X_n)$ and $\nbf=(N_1,\ldots,N_n)$ are  the vector of random variables and the noise vector respectively. Additionally, we denote $\odot_i \W$ as the weight matrix $\W$ with its $i$-th row set to $0$. This means that we can interpret $\odot_i \W$ as the weight matrix after performing and intervention on node $i$ (mutilated graph). 
		\end{remark}
		We  now present a corollary that fulfills the conditions presented in Theorem \ref{thrm:NRBN_continuous_general}. 
		\begin{corollary}[Additive sub-Gaussian noise model]
		\label{corol:NRBN_subgaussian}
			Let $\B=(\Grm,\P(\W,\S))$ be an ASGN network as in Definition \ref{def:subgaussian}, such that $\noisej \leq \noisemax, \forall j \in \Vrm$. Also, let $\wmin = \min_{(i,j) \in \Erm} |\{(\I-\odot_i \W)^{-1}\}_{ji}|$, and $w_{max} = \max ( \lVert(\I-\W)^{-1}\rVert^2_{\infty,2}, \max_{i \in \Vrm} \lVert(\I-\odot_i\W)^{-1}\rVert^2_{\infty,2} )$. 
			If $z = 1/\wmin$ and $\sigmasub = \noisemax w_{max}$, then for a fixed probability of error $\delta \in (0,1)$, we have $P(\TR(\Grm) = \hat{\Grm}) \geq 1-\delta$. 
			Where $\hat{\Grm}=(\Vrm,\hat{\Erm})$ is the learned graph by using Algorithm \ref{algo:algo1}, and provided that  $m \in \bO{\sigmasub \log \frac{n}{\delta}}$ interventional samples are used per $\delta$-noisy partially-correct path query in Algorithm \ref{algo:query_continuous}.
		\end{corollary}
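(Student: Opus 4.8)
The plan is to show that every ASGN network meets all the hypotheses of Theorem~\ref{thrm:NRBN_continuous_general} with the prescribed $z = 1/\wmin$ and $\sigmasub = \noisemax\, w_{max}$, and then obtain the claim by invoking that theorem verbatim.

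\textbf{Marginal properties.} By Remark~\ref{remark:asgm}, $\xbf = (\I-\W)^{-1}\nbf$, so each $\j = \sum_{k} \{(\I-\W)^{-1}\}_{jk} N_k$ is a linear combination of independent, zero-mean sub-Gaussian noises; hence $\j$ is sub-Gaussian with mean $\mu_j = 0$, and it has full support on $\R$ because $\{(\I-\W)^{-1}\}_{jj}=1\neq 0$ and each $N_k$ has full support. Its variance is $\sigma_j^2 = \sum_k \{(\I-\W)^{-1}\}_{jk}^2\,\sigma_k^2 \leq \noisemax \sum_k \{(\I-\W)^{-1}\}_{jk}^2 \leq \noisemax\,\lVert(\I-\W)^{-1}\rVert_{\infty,2}^2 \leq \noisemax\, w_{max} = \sigmasub$, where the second inequality holds since $\sum_k \{(\I-\W)^{-1}\}_{jk}^2$ is the squared $\ell_2$-norm of row $j$, which is at most $\sum_k \bigl(\max_l |\{(\I-\W)^{-1}\}_{lk}|\bigr)^2 = \lVert(\I-\W)^{-1}\rVert_{\infty,2}^2$.

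\textbf{Post-intervention law.} Clamping $X_i = z$ (a perfect intervention on $\i$) replaces the model by the mutilated linear system $\xbf = (\odot_i\W)\xbf + \mathbf{b}$ with $b_i = z$ and $b_k = N_k$ for $k\neq i$; since the $i$-th row of $\odot_i\W$ is zero this forces $X_i = z$, and since the mutilated graph is still a DAG, $\I-\odot_i\W$ is invertible, so $\xbf = (\I-\odot_i\W)^{-1}\mathbf{b}$. Splitting $\mathbf{b} = z\,\mathbf{e}_i + \tilde{\mathbf{n}}$, where $\tilde{\mathbf{n}}$ is the noise vector with its $i$-th entry zeroed, and taking expectations gives $\mu_{j|do(\i=z)} = z\,\{(\I-\odot_i\W)^{-1}\}_{ji}$, while the deterministic shift $z\,\mathbf{e}_i$ does not affect the variance, so $\sigma^2_{j|do(\i=z)} = \sum_{k\neq i} \{(\I-\odot_i\W)^{-1}\}_{jk}^2\,\sigma_k^2 \leq \noisemax\,\lVert(\I-\odot_i\W)^{-1}\rVert_{\infty,2}^2 \leq \noisemax\, w_{max} = \sigmasub$ by the same row-norm bound as above. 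Together with the marginal bound, $\sigma^2(\B,z)\leq\sigmasub$. For the mean-separation condition, for every $(i,j)\in\Erm$ (that is, $i\in\pi_\Grm(j)$) we have $\abs{\mu_{j|do(\i=z)}} = \tfrac{1}{\wmin}\,\abs{\{(\I-\odot_i\W)^{-1}\}_{ji}} \geq 1$ by the definition of $\wmin$, so $\mu(\B,z)\geq 1$. All hypotheses of Theorem~\ref{thrm:NRBN_continuous_general} now hold, and its conclusion $\Pr{\TR(\Grm)=\hat{\Grm}}\geq 1-\delta$ with $m\in\bO{\sigmasub\log\frac{n}{\delta}}$ transfers directly.

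\textbf{Main obstacle.} The one genuinely delicate point is that $\wmin > 0$ must hold for $z = 1/\wmin$ to be well defined (this also ensures the relevant post-intervention targets retain full support on $\R$). Note $\{(\I-\odot_i\W)^{-1}\}_{ji}$ equals the sum, over all directed paths from $i$ to $j$ in the mutilated graph, of the products of the edge weights along the path; when $(i,j)\in\Erm$ this sum contains the nonzero direct term $\Wrm_{ji}$, but in principle longer paths could cancel it. Ruling this out is exactly the role of faithfulness (Assumption~\ref{assump:faith}), which forbids exact cancellation of the total causal effect of a parent on its child. Everything else — the two $\lVert\cdot\rVert_{\infty,2}$ row-norm bounds and the observation that zeroing the $i$-th row of $\W$ preserves invertibility of $\I-\odot_i\W$ — is routine bookkeeping.
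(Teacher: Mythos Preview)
Your proof is correct and follows essentially the same route as the paper's: verify the hypotheses of Theorem~\ref{thrm:NRBN_continuous_general} by exploiting the linear representation $\xbf=(\I-\odot_i\W)^{-1}\mathbf{b}$ to compute post-intervention means and bound variances via the $\lVert\cdot\rVert_{\infty,2}$ row-norm, then invoke the theorem. If anything, your write-up is slightly more complete, since you explicitly justify the row-norm inequality and spell out why faithfulness guarantees $\wmin>0$, points the paper either leaves implicit or relegates to the surrounding discussion.
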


		The values of $\wmin$ and $\wmax$ follow the specifications of Theorem \ref{thrm:NRBN_continuous_general}. 
		In addition, the value of $\wmin$ is guaranteed to be greater than $0$ because of the faithfulness assumption (see Assumption \ref{assump:faith}). 
		For an example about our motivation to use the faithfulness assumption, see Appendix \ref{app:examples}.

\subsection{Recovery of Transitive Edges}
\label{sec:transitive}

In this section, we show a method to recover the transitive edges by using multiple-vertex interventions. This allows us to learn the full network. For this purpose, we present a new query defined as follows.

\begin{definition}[$\delta$-noisy transitive query]
	\label{def:transitive_query}
	Let $\Grm = (\Vrm, \Erm)$ be a DAG, and  let $\delta \in (0,1)$ be a probability of error. A $\delta$-noisy transitive query is a function $\nT_\Grm: \Vrm \times \Vrm \times 2^\Vrm \rightarrow \{0, 1\}$ such that $\nT_\Grm(i, j, \Srm) = 1$ with probability at least $1-\delta$ if $(i,j) \in \Erm$ is a transitive edge (where the additional path from $i$ to $j$ goes through $\Srm$), and $0$ otherwise. Here $\Srm \subseteq \pi_{\Grm}(j)$ is an auxiliary set necessary to answer the query, in order to block any influence from $i$ to $\Srm$, and to unveil the direct effect from $i$ to $j$.
\end{definition}

Algorithms \ref{algo:query_discrete_transitive} and $\ref{algo:query_continuous_transitive}$ (see Appendix \ref{app:noisy_transitive_queries}) show how to answer a transitive query for discrete and continuous CBNs respectively. Both algorithms are motivated on a property of CBNs, that is, $\forall i \in \Vrm$ and for every set $\Srm$ disjoint of $\{i, \pi_\Grm(i)\}$, we have $P(X_i | do(X_{\pi_\Grm(i)}=x_{\pi_\Grm(i)}),do(X_\Srm=x_\Srm)) = P(X_i | do(X_{\pi_\Grm(i)}=x_{\pi_\Grm(i)}))$. Thus, both algorithms intervene all the variables in $\Srm$, if $\Srm$ is the parent set of $j$, then $i$ will have no effect on $j$ and they return $0$, and $1$ otherwise. 

Recall that by using Algorithm \ref{algo:algo1} we obtain the transitive reduction of the CBN, thus, we have the true topological ordering of the CBN, and also for each node $i \in \Vrm$, we know its parent set or a subset of it. 
Using these observations, we can cleverly set the input $i$, $j$, and $\Srm$ of a noisy transitive query, as done in Algorithm \ref{algo:transitive_edges}. 
It is clear that Algorithm \ref{algo:transitive_edges} makes $\bO{n^2}$ noisy transitive queries in total. 
The time complexity to answer a transitive query for a discrete CBN is exponential in the maximum number of parents in the worst case. 
However, the sample complexity for queries in discrete and continuous CBNs remains polynomial in $n$ as prescribed in the following theorems.

\begin{algorithm}[ht]
	\caption{Learning the transitive edges by using noisy transitive queries}
	\label{algo:transitive_edges}
	\begin{algorithmic}[1]
		\Require{Transitively reduced DAG $\hat{\Grm}=(\Vrm, \hat{\Erm})$ (output of Algorithm 1)} 
		\Ensure{DAG $\tilde{\Grm}=(\Vrm,\tilde{\Erm})$}
		\State $\Psi \gets \texttt{TopologicalOrder}(\hat{\Grm})$;\ \ \ $\hat{\pi}(i) \gets \{u \in \Vrm  | (u,i) \in \hat{\Erm} \}$ (current parents of $i$);\ \ \ $\tilde{\Erm} \gets \hat{\Erm}$
		\For{$j=2 \ldots n$}
			\For{$i=j-1,j-2, \ldots 1$}
				\State \textbf{if}  $\nT(\Psi_i, \Psi_j, \hat{\pi}(\Psi_j)) = 1$ \textbf{then} $\tilde{\Erm} \gets \tilde{\Erm} \cup \{(\Psi_i, \Psi_j)\}$ and $\hat{\pi}(\Psi_j) \gets \hat{\pi}(\Psi_j) \cup \Psi_i$
			\EndFor
		\EndFor
	\end{algorithmic}
\end{algorithm}

\begin{theorem}
	\label{thrm:discrete_transitive}
	Let $\B = (\Grm,\THT)$ be a discrete CBN, such that each random variable $\j$ has a finite domain $Dom[\j]$, with $\abs{Dom[\j]} \leq r$. Furthermore, let 
	\begin{multline}
	\hspace{-0.08in}	\gamma = \hspace{-0.1in} \min_{\substack{j \in \Vrm \\ \Srm \subseteq \pi_\Grm(j), |S| \geq 1}} \hspace{0.025in} \min_{\substack{x_\Srm, x'_\Srm \in \times_{i \in \Srm} Dom[\i]  \\ \pbf(\j|do(X_\Srm = x_\Srm)) \neq \pbf(\j|do(X_\Srm = x'_\Srm))  }} \hspace{-0.25in} \nrm{ \pbf(\j|do(X_\Srm = x_\Srm)) - \pbf(\j|do(X_\Srm = x'_\Srm)) }_{\infty}, \nonumber
	\end{multline}
	and let $\tilde{\Grm}=(\Vrm,\tilde{\Erm})$ be the output of Algorithm \ref{algo:transitive_edges}. Then for $\gamma > 0$ and a fixed probability of error $\delta \in (0,1)$, we have
	\(	\Pr{\Grm = \tilde{\Grm}} \geq 1-\delta, \)
	provided that $m \in \bO{\frac{1}{\gamma^2} \left(  \ln n + \ln \frac{ r}{\delta} \right)}$ interventional samples are used per $\delta$-noisy transitive query in Algorithm \ref{algo:query_discrete_transitive}.
\end{theorem}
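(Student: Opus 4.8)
The plan is to mirror the two‑layer argument behind Theorem~\ref{thrm:NRBN_discrete_general}. First I would show that \emph{if every noisy transitive query issued by Algorithm~\ref{algo:transitive_edges} behaves as in Definition~\ref{def:transitive_query}}, then the output $\tilde\Grm$ equals $\Grm$; then I would choose $m$ large enough that, by a Hoeffding bound and a union bound over all empirical PMFs ever computed, every such query does behave correctly. Since Algorithm~\ref{algo:transitive_edges} consumes the output $\hat\Grm$ of Algorithm~\ref{algo:learning_tbn}, I would split the error budget in two: by Theorem~\ref{thrm:NRBN_discrete_general} run at level $\delta/2$ we may assume $\hat\Grm=\TR(\Grm)$, so the computed order $\Psi$ is a true topological order and each initial set $\hat\pi(\Psi_j)$ consists exactly of the non‑transitive parents of $\Psi_j$ (those $\Psi_k$ with $(\Psi_k,\Psi_j)\in\TR(\Grm)$); conditioned on this, it remains to show Algorithm~\ref{algo:transitive_edges} recovers $\Grm$ with probability at least $1-\delta/2$.

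The crux — and the step I expect to be the main obstacle — is a structural invariant on the running set $\hat\pi(\Psi_j)$. Fix $j$ and consider the loop $i=j-1,\dots,1$. I would prove, by downward induction on $i$, that just before the call $\nT(\Psi_i,\Psi_j,\hat\pi(\Psi_j))$,
\[ \hat\pi(\Psi_j)=\{\,\Psi_k\in\pi_\Grm(\Psi_j):k>i\,\}\ \cup\ \{\text{non-transitive parents of }\Psi_j\}; \]
the base case $i=j-1$ is the initialization, and the inductive step follows once one shows the query at iteration $i$ returns $1$ iff $(\Psi_i,\Psi_j)\in\Erm$, for then $\hat\pi(\Psi_j)$ (and $\tilde\Erm$) gains $\Psi_i$ exactly on the edges. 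To see the query is answered correctly, note that any directed path of length $\ge2$ from $\Psi_i$ to $\Psi_j$ enters $\Psi_j$ through a parent $\Psi_m\in\pi_\Grm(\Psi_j)$, and since the path starts at $\Psi_i$ the topological order forces $m>i$; by the inductive hypothesis (or because $\Psi_m$ is a non-transitive parent) $\Psi_m\in\hat\pi(\Psi_j)=\Srm$, so intervening on $\Srm$ severs every such path. Hence in the graph mutilated by $do(X_\Srm)$ the only channel from $\Psi_i$ to $\Psi_j$ is the direct edge, if present: if it is absent, $\Psi_i$ is disconnected from $\Psi_j$ there, so Lemma~\ref{lemma:dseparation} applied to that mutilated CBN gives $\pbf(X_{\Psi_j}\mid do(X_\Srm=x_\Srm),do(X_{\Psi_i}=x_i))=\pbf(X_{\Psi_j}\mid do(X_\Srm=x_\Srm))$ for every $x_i$, so no pair of true PMFs differs and the query must return $0$; if it is present, $\Psi_i$ is still a parent of $\Psi_j$ in that mutilated CBN, so by faithfulness (Lemma~\ref{lemma:probability}) there are $x_\Srm,x_i\neq x_i'$ with distinct PMFs, and since then $\Srm\cup\{\Psi_i\}\subseteq\pi_\Grm(\Psi_j)$ the definition of $\gamma$ forces that $\ell_\infty$ gap to be at least $\gamma$. (If $\Psi_i$ is itself a non-transitive parent it already lies in $\Srm$; this is harmless, as Algorithm~\ref{algo:query_discrete_transitive} re-intervenes on $X_{\Psi_i}$ inside its inner loop and merely re-detects an edge.) Applying the invariant across all $j$, and using that every edge of $\Grm$ is $(\Psi_i,\Psi_j)$ with $i<j$, gives $\tilde\Erm=\Erm$.

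For the sample complexity I would count the empirical PMFs: Algorithm~\ref{algo:transitive_edges} issues at most $\binom n2<n^2$ transitive queries, and each call of Algorithm~\ref{algo:query_discrete_transitive} computes at most $r^{|\Srm|}\!\cdot r\le r^{d+1}$ of them, where $d=\max_j|\pi_\Grm(j)|$ and $|\Srm|\le d$; hence $\bO{n^2 r^{d+1}}$ empirical PMFs in total. With the decision threshold set to $\gamma/2$, it suffices that each empirical PMF be within $\ell_\infty$ distance $\gamma/4$ of the truth: then, by the previous paragraph, a non-edge produces only equal true PMFs, so no empirical pair exceeds $\gamma/2$, while an edge produces a witnessing true pair at distance $\ge\gamma$, hence an empirical pair at distance $>\gamma/2$, so the query — and its early STOP — returns the correct value. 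By Hoeffding coordinatewise, a single empirical PMF is $\gamma/4$-accurate in $\ell_\infty$ with probability at least $1-2r\exp(-m\gamma^2/8)$; taking $m\in\bO{\tfrac1{\gamma^2}\big(\ln n+\ln\tfrac r\delta\big)}$ (the $d\ln r$ contribution being absorbed when the maximum in-degree is treated as constant, in line with the time complexity already being exponential in it) makes this at least $1-\tfrac{\delta}{2n^2 r^{d+1}}$, and a union bound over all computed PMFs ensures every transitive query is correct with probability at least $1-\delta/2$. Combined with the $1-\delta/2$ event for Algorithm~\ref{algo:learning_tbn}, the output satisfies $\Grm=\tilde\Grm$ with probability at least $1-\delta$. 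The concentration step is essentially that of Theorem~\ref{thrm:NRBN_discrete_general}; the genuine work is the invariant on $\hat\pi(\Psi_j)$ and the mutilated-graph blocking argument that makes each transitive query meaningful.
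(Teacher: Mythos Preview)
Your approach is essentially the paper's: it too reduces to the concentration argument of Theorem~\ref{thrm:NRBN_discrete_general} after observing that intervening on $\Srm=\hat\pi(\Psi_j)$ blocks all indirect channels from $\Psi_i$ to $\Psi_j$, so any remaining effect must come from a direct edge. Your treatment is in fact more careful than the paper's sketch---you make explicit the downward-inductive invariant on $\hat\pi(\Psi_j)$, the topological-order reason every length-$\geq 2$ path must enter $\Psi_j$ through a node already in $\Srm$, the split of the error budget with Algorithm~\ref{algo:learning_tbn}, and the hidden $d\ln r$ term in the union bound that the paper silently absorbs into the big-$\mathcal{O}$ constant.
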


\begin{theorem}
	\label{thrm:continuous_transitive}
	Let $\B=(\Grm,\THT)$ be a continuous CBN such that each variable $\j$ is a sub-Gaussian random variable with full support on $\R$, with mean $\mu_j=0$ and variance $\sigma^2_j$. Let $\mu_{j|do(X_\Srm=\mathbf{1}z)}$ and $\sigma^2_{j|do(X_\Srm=\mathbf{1}z)}$ denote the expected value and variance of $\j$ after intervening each node of  $X_\Srm$ with value $z$. Furthermore, let 
	\begin{equation}
	\begin{split}
		\mu(\B,z_1,z_2) &= \min_{\substack{j \in \Vrm, \Srm \subseteq \pi_\Grm(j), |\Srm|\geq 2, i\in \Srm} } \abs{ \mu_{j|do(X_{\Srm-\{i\}}=\mathbf{1}z_1, X_i=z_2)} }, \\
		\sigma^2(\B,z_1,z_2) &= \max \Bigg(  \max_{j \in \Vrm} \sigma^2_j,  
		  \max_{\substack{j \in \Vrm, \Srm \subseteq \pi_\Grm(j), |\Srm|\geq 2,  i\in \Srm}}  \sigma^2_{j|do(X_{\Srm-\{i\}}=\mathbf{1}z_1, X_i=z_2)} \Bigg), \nonumber
	\end{split}
	\end{equation}
	and let $\tilde{G}=(\Vrm,\tilde{\Erm})$ be the output of Algorithm \ref{algo:transitive_edges}. If there exist an upper bound $\sigmasub$ and finite values $z_1,z_2$ such that $\sigma^2(\B,z_1,z_2) \leq \sigmasub$ and $\mu(\B,z_1,z_2) \geq 1$, then for a fixed probability of error $\delta \in (0,1)$, we have
	\( \Pr{\Grm = \tilde{\Grm}} \geq 1-\delta, \)
	provided that $m \in \bO{\sigmasub \log \frac{n}{\delta}}$ interventional samples are used per $\delta$-noisy transitive query in Algorithm \ref{algo:query_continuous_transitive}.
\end{theorem}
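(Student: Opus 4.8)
The plan is to use the same two-layer argument as in the proof of Theorem~\ref{thrm:NRBN_continuous_general}: first show that \emph{if every noisy transitive query returns its intended value, then Algorithm~\ref{algo:transitive_edges} outputs exactly $\Grm$}, and then, via a sub-Gaussian tail bound and a union bound, pin down the number $m$ of interventional samples per query that makes all queries correct with probability at least $1-\delta$. The only genuinely new wrinkle compared with Theorem~\ref{thrm:NRBN_continuous_general} is that a transitive query also clamps the auxiliary set $\hat\pi(\Psi_j)$, so one cannot simply invoke Lemma~\ref{lemma:dseparation} to argue that $\mathbb{E}[X_{\Psi_j}]$ stays at $0$ when there is no edge $\Psi_i\to\Psi_j$; this is handled by taking $z_1=0$ (the value Corollary~\ref{corol:NRBN_subgaussian_transitive} prescribes), since for an ASGN network $\mathbb{E}[X_k\,|\,do(X_\Srm=\mathbf 0)]=0$ for every $k\notin\Srm$ by a one-line topological-order induction on the recursion of Definition~\ref{def:subgaussian}. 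The value $z_2$ is then taken finite but large enough that $\mu(\B,0,z_2)\ge 1$; faithfulness (Assumption~\ref{assump:faith}) makes the relevant effects nonzero, so such a $z_2$ exists, exactly as $\wmin>0$ in Corollary~\ref{corol:NRBN_subgaussian}.

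For the structural step, since Algorithm~\ref{algo:transitive_edges} starts with $\tilde\Erm$ equal to the edges of $\TR(\Grm)$ and only adds edges, it suffices to check that it adds exactly the transitive edges of $\Grm$. I would prove by induction --- on $j$ in topological order, and within each $j$ on the inner index running from $j-1$ down to $1$ --- the invariant: at the moment the query $\nT(\Psi_i,\Psi_j,\hat\pi(\Psi_j))$ is issued, $\hat\pi(\Psi_j)$ is the union of the parents of $\Psi_j$ (in $\Grm$) of index $>i$ with the parents of $\Psi_j$ in $\TR(\Grm)$; in particular $\hat\pi(\Psi_j)\subseteq\pi_\Grm(\Psi_j)$, and $\Psi_i\in\hat\pi(\Psi_j)$ iff $(\Psi_i,\Psi_j)$ is an edge of $\TR(\Grm)$. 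The graph-theoretic core is that every directed path from $\Psi_i$ to $\Psi_j$ ends at a parent of $\Psi_j$ of index $\ge i$, with equality only for the direct edge. Hence: (a) if $\Psi_i\notin\pi_\Grm(\Psi_j)$, every directed path $\Psi_i\to\Psi_j$ passes through a node of index $>i$, which by the invariant is in $\hat\pi(\Psi_j)$ and hence clamped, so in the graph mutilated by $do(X_{\hat\pi(\Psi_j)}=\mathbf 0)$ there is no directed path $\Psi_i\to\Psi_j$, and Lemma~\ref{lemma:dseparation} applied to that mutilated CBN gives $\mathbb{E}[X_{\Psi_j}\,|\,do(X_{\hat\pi(\Psi_j)}=\mathbf 0,X_{\Psi_i}=z_2)]=\mathbb{E}[X_{\Psi_j}\,|\,do(X_{\hat\pi(\Psi_j)}=\mathbf 0)]=0$; (b) if $(\Psi_i,\Psi_j)$ is a transitive edge, then $\Srm:=\hat\pi(\Psi_j)\cup\{\Psi_i\}\subseteq\pi_\Grm(\Psi_j)$ with $\abs{\Srm}\ge 2$ (the length-$\ge 2$ witnessing path supplies a second parent, of index $>i$, already in $\hat\pi(\Psi_j)$) and $\Srm-\{\Psi_i\}=\hat\pi(\Psi_j)$, so $\abs{\mathbb{E}[X_{\Psi_j}\,|\,do(X_{\hat\pi(\Psi_j)}=\mathbf 0,X_{\Psi_i}=z_2)]}=\abs{\mu_{\Psi_j\,|\,do(X_{\Srm-\{\Psi_i\}}=\mathbf 0,\,X_{\Psi_i}=z_2)}}\ge\mu(\B,0,z_2)\ge 1$; (c) if $(\Psi_i,\Psi_j)$ is an edge of $\TR(\Grm)$ the outcome is immaterial, as the edge is already in $\tilde\Erm$. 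Since in cases (a) and (b) the conditional mean is $0$ versus at least $1$ in magnitude, the threshold $\tfrac12$ in Algorithm~\ref{algo:query_continuous_transitive} distinguishes them; the invariant propagates, $\hat\pi(\Psi_j)=\pi_\Grm(\Psi_j)$ after the $j$-loop, and $\tilde\Erm=\Erm$.

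For the sample step, fix one query. Its $m$ observations of $X_{\Psi_j}$ under the intervention are i.i.d., sub-Gaussian (a sum of independent sub-Gaussian noises plus a constant), and of variance at most $\sigmasub$ --- this is what $\sigma^2(\B,0,z_2)\le\sigmasub$ guarantees in case (b), and in case (a) only nodes are clamped so the variance of $X_{\Psi_j}$ is still at most $\sigmasub$. Sub-Gaussian concentration then gives $\Pr[\,\abs{\hmu-\mathbb{E}[X_{\Psi_j}\,|\,do(\cdot)]}\ge\tfrac12\,]\le 2\exp(-m/(8\sigmasub))$, so $\hmu$ lands on the correct side of $\tfrac12$ except with that probability; forcing this to be $\le\delta/\binom n2$ gives $m\in\bO{\sigmasub\log(n/\delta)}$. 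Finally I would union-bound over the $\binom n2$ pairs queried by Algorithm~\ref{algo:transitive_edges}: adaptivity is harmless because, conditioned on the outcomes of the earlier queries, the invariant makes $\hat\pi(\Psi_j)$ a fixed set while the new query uses fresh samples, so it fails with probability at most $\delta/\binom n2$ regardless; chaining gives $\Pr[\text{all queries correct}]\ge 1-\delta$, and together with the structural step (a union bound with Theorem~\ref{thrm:NRBN_continuous_general} absorbs the correctness of the input $\TR(\Grm)$ into the same bound up to constants) this yields $\Pr[\Grm=\tilde\Grm]\ge 1-\delta$.

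The main obstacle I anticipate is the structural step: carefully tracking what $\hat\pi(\Psi_j)$ contains at each query, and checking both that the clamped auxiliary set $d$-separates $\Psi_i$ from $\Psi_j$ except through the direct edge and that, in the transitive-edge case, this set is a subset of $\pi_\Grm(\Psi_j)$ of size at least two, so that the hypothesis $\mu(\B,0,z_2)\ge 1$ actually applies. The choice $z_1=0$ --- which keeps clamping the auxiliary set from moving $\mathbb{E}[X_{\Psi_j}]$ away from $0$ in the no-edge case --- is the one design decision on which the separation argument rests; beyond that, the concentration and union-bound portions are essentially those of Theorem~\ref{thrm:NRBN_continuous_general}.
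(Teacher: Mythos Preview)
Your overall approach matches the paper's: a two-layer argument that first shows Algorithm~\ref{algo:transitive_edges} is correct whenever every transitive query returns the right answer, and then bounds the per-query failure probability via sub-Gaussian concentration and a union bound, exactly as in Theorem~\ref{thrm:NRBN_continuous_general}. The paper's own proof is a one-paragraph sketch that simply asserts ``intervening $\Srm$ blocks any possible effect of $X_i$ on $X_j$ through any node in $\Srm$'' and then defers to Theorem~\ref{thrm:NRBN_continuous_general}; your inductive invariant on $\hat\pi(\Psi_j)$ (parents of index $>i$ plus the $\TR$-parents) is a genuinely more careful version of that claim, and it is correct.

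There is, however, a scope mismatch in your handling of the no-edge case. You argue that $\mathbb{E}[X_{\Psi_j}\mid do(X_{\hat\pi(\Psi_j)}=\mathbf 0)]=0$ by taking $z_1=0$ and invoking the linear recursion of the ASGN model. But Theorem~\ref{thrm:continuous_transitive} is stated for arbitrary sub-Gaussian CBNs and for \emph{any} finite $z_1,z_2$ satisfying $\mu(\B,z_1,z_2)\ge 1$ and $\sigma^2(\B,z_1,z_2)\le\sigmasub$; neither hypothesis constrains $\mu_{j\mid do(X_\Srm=\mathbf 1 z_1)}$ in the no-edge case, and for a general sub-Gaussian CBN clamping parents to a nonzero $z_1$ can move that mean arbitrarily. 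So what you have written proves the ASGN instantiation (essentially Corollary~\ref{corol:NRBN_subgaussian_transitive}), not Theorem~\ref{thrm:continuous_transitive} as literally stated. The paper's sketch does not address this point either---it never says why the post-intervention mean is $0$ when $(\Psi_i,\Psi_j)\notin\Erm$---so you have in fact surfaced a gap that the paper glosses over rather than introduced one. If you want to stay at the generality of the theorem, the cleanest repair is to add, as an explicit hypothesis alongside $\mu(\B,z_1,z_2)\ge 1$, that $\mu_{j\mid do(X_\Srm=\mathbf 1 z_1)}=0$ for all $j$ and $\Srm\subseteq\pi_\Grm(j)$ (which is exactly what $z_1=0$ buys in the ASGN case); otherwise, state clearly that your argument covers the regime of Corollary~\ref{corol:NRBN_subgaussian_transitive}.
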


Next, we show that ASGN networks can fulfill the conditions in Theorem \ref{thrm:continuous_transitive}.

\begin{corollary}
	\label{corol:NRBN_subgaussian_transitive}
	Let $\B=(\Grm,\P(\W,\S))$, and $\noisemax$ follow the same definition as in Corollary \ref{corol:NRBN_subgaussian}. 
	Let $\wmin = \min_{ij} |\Wrm_{ij}|$, and $\wmax = \max( \lVert(\I-\W)^{-1}\rVert^2_{\infty,2}, \max_{\substack{j \in \Vrm, \Srm \subseteq \pi_\Grm(j)}} \lVert(\I-\odot_\Srm \W)^{-1}\rVert^2_{\infty,2})$. 
	If $z_1=0, z_2= 1/\wmin$, and $\sigmasub = \noisemax w_{max}$, then for a fixed probability of error $\delta \in (0,1)$, we have $P(\Grm = \tilde{\Grm}) \geq 1-\delta$, provided that  $m \in \bO{\sigmasub \log \frac{n}{\delta}}$ interventional samples are used per $\delta$-noisy transitive query in Algorithm \ref{algo:query_continuous_transitive}.
\end{corollary}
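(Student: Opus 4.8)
The plan is to verify that the ASGN network satisfies the two hypotheses of Theorem~\ref{thrm:continuous_transitive}---$\sigma^2(\B,z_1,z_2)\le\sigmasub$ and $\mu(\B,z_1,z_2)\ge 1$---under the prescribed choices $z_1=0$, $z_2=1/\wmin$, $\sigmasub=\noisemax\wmax$; the statement then follows by invoking that theorem directly. This parallels the way Corollary~\ref{corol:NRBN_subgaussian} is derived from Theorem~\ref{thrm:NRBN_continuous_general}. The starting point is the closed form of Remark~\ref{remark:asgm}: intervening $X_{\Srm\setminus\{i\}}$ to $z_1$ and $X_i$ to $z_2$ cuts every arc into $\Srm$, so the intervened model reads $\xbf=(\I-\odot_\Srm\W)^{-1}\mathbf{b}$, where $\mathbf{b}$ carries $z_2$ in coordinate $i$, $z_1=0$ in coordinates $\Srm\setminus\{i\}$, and the noise $N_k$ in coordinates $k\notin\Srm$. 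Since the noises have zero mean and are independent, this gives $\mu_{j|do(\cdot)}=z_2\,\{(\I-\odot_\Srm\W)^{-1}\}_{ji}$ and $\sigma^2_{j|do(\cdot)}=\sum_{k\notin\Srm}\{(\I-\odot_\Srm\W)^{-1}\}_{jk}^2\,\sigma^2_k$ (the clamped coordinates contribute no variance).

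For the variance condition, I would bound $\sigma^2_{j|do(\cdot)}\le\noisemax\sum_k\{(\I-\odot_\Srm\W)^{-1}\}_{jk}^2\le\noisemax\sum_k\big(\max_l|\{(\I-\odot_\Srm\W)^{-1}\}_{lk}|\big)^2=\noisemax\,\lVert(\I-\odot_\Srm\W)^{-1}\rVert^2_{\infty,2}$, and the identical computation with $\W$ in place of $\odot_\Srm\W$ controls the marginal variances $\sigma^2_j$. Taking maxima over $j\in\Vrm$ and over $\Srm\subseteq\pi_\Grm(j)$ yields $\sigma^2(\B,z_1,z_2)\le\noisemax\wmax=\sigmasub$, which is exactly the $\wmax$ in the statement.

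For the mean condition---the delicate step---I would observe that $z_2\,\{(\I-\odot_\Srm\W)^{-1}\}_{ji}$ is $z_2$ times the total effect of $i$ on $j$ in the mutilated graph, i.e.\ a signed sum over the directed $i$-to-$j$ paths avoiding $\Srm\setminus\{i\}$ as internal vertices. For the sets $\Srm$ that arise inside Algorithm~\ref{algo:transitive_edges}---after the query's own intervention the clamped set is $\hat{\pi}(\Psi_j)\cup\{\Psi_i\}$, and by a short induction on the topological order $\hat{\pi}(\Psi_j)$ already contains every parent of $\Psi_j$ reachable from $\Psi_i$---any $i$-to-$j$ path of length $\ge 2$ has its penultimate vertex among those clamped parents and is thus broken, so only the edge $i\to j$ survives and $\{(\I-\odot_\Srm\W)^{-1}\}_{ji}=\Wrm_{ji}$. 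Since $(i,j)\in\Erm$ and $\wmin=\min_{(a,b)\in\Erm}|\Wrm_{ab}|>0$ (strict positivity is built into Definition~\ref{def:subgaussian}), we get $|\mu_{j|do(\cdot)}|=|\Wrm_{ji}|/\wmin\ge 1$; the same computation gives mean $0$ when $(i,j)\notin\Erm$, which is what makes the threshold-$\nicefrac{1}{2}$ test of Algorithm~\ref{algo:query_continuous_transitive} correct. Feeding $\sigmasub,z_1,z_2$ into Theorem~\ref{thrm:continuous_transitive} then yields $P(\Grm=\tilde{\Grm})\ge 1-\delta$ with $m\in\bO{\sigmasub\log\frac{n}{\delta}}$ samples per transitive query.

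I expect the main obstacle to be precisely this last step: one must keep careful track of which directed paths remain in $\odot_\Srm\W$, and reducing the post-intervention mean to the single weight $\Wrm_{ji}$ relies both on the topological-ordering invariant maintained by Algorithm~\ref{algo:transitive_edges} for $\hat{\pi}(\Psi_j)$ and on the faithfulness assumption to exclude degenerate parametrizations in which genuine effects cancel (in the spirit of Figure~\ref{fig:asgn_faith}). The variance bound and the closed-form substitution are then routine linear algebra.
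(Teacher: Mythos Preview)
Your proposal is correct and follows essentially the same approach as the paper: both argue that, for the sets $\Srm=\hat{\pi}(\Psi_j)$ produced by Algorithm~\ref{algo:transitive_edges}, intervening on $\Srm\cup\{\Psi_i\}$ blocks every indirect $\Psi_i$-to-$\Psi_j$ path, so the post-intervention mean collapses to $\Wrm_{ji}\,z_2$ and the choice $z_2=1/\wmin$ with $\wmin=\min_{ij}|\Wrm_{ij}|$ gives $|\mu|\ge 1$, while the variance bound via $\lVert(\I-\odot_\Srm\W)^{-1}\rVert^2_{\infty,2}$ yields $\sigmasub=\noisemax\wmax$. Your write-up is in fact more explicit than the paper's terse proof---you spell out the closed form from Remark~\ref{remark:asgm}, the $\ell_{\infty,2}$ norm computation, and the topological-order invariant that guarantees $\hat{\pi}(\Psi_j)$ already contains all parents of $\Psi_j$ lying between $\Psi_i$ and $\Psi_j$---but the underlying argument is the same.
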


	\vspace{-0.1in}
	\section{Extensions}
	\label{sec:extensions}
		\paragraph{Learning rooted trees.}
			Here we make use of the results in \cite{wang2016reconstructing}, for rooted trees of node degree at most $d$.
			Theorem 4 in \cite{wang2016reconstructing} states that for a fixed probability error $\delta \in (0,1)$, one can reconstruct a rooted tree with probability $1-\delta$ in $\bO{\frac{1}{\delta} \frac{1}{(1/2 - \epsilon )^2} dn \log^2n\log\frac{dn}{\delta}}$ time provided that a total of $\bO{\frac{1}{(1/2-\epsilon)^2} n \log\frac{dn}{\delta}}$ noisy path queries are used, where $\epsilon$ relates to the confidence of the noisy path query.
			The number of queries is improved with respect to the $n^2$ queries used for general DAGs in the previous section.
			Finally, recall that in the previous section we made use of \textit{partially-correct} path queries, for this part we require a stronger version of noisy path query, which is defined below.
			
			\begin{definition}[$\epsilon$-noisy path query]
			\label{def:noisy_path_query}
			Let $\Grm = (\Vrm, \Erm)$ be a DAG, and let $Q_\Grm$ be a path query. Let $\epsilon \in (0,1/2)$ be a probability of error. A $\epsilon$-noisy path query is a function $\tilde{Q}_\Grm: \Vrm  \times \Vrm \rightarrow \{0, 1\}$ such that $\tilde{Q}_\Grm(i, j) = Q_\Grm(i,j)$ with probability at least $1-\epsilon$, and $\tilde{Q}_\Grm(i, j) = 1 - Q_\Grm(i,j)$ with probability at most $\epsilon$.
			\end{definition}
			The following states the sample complexity for exact learning of rooted trees in the discrete case.
			\begin{proposition}
				\label{prop:trees}
				Let $\B = (\Grm, \THT)$ be a discrete CBN, such that each random variable $\j$ has a finite domain $Dom[\j]$, with $\abs{Dom[\j]} \leq r$. Furthermore, let 
				\begin{multline}
				\gamma = \hspace{-0.05in} \min_{\substack{j \in \Vrm \\ i \in \Vrm}} \hspace{0.025in} \min_{\substack{x_i, x'_i \in Dom[\i] \\ \pbf(\j|do(\i = x_i)) \neq \pbf(\j|do(\i = x'_i))  }} \hspace{-0.1in} \nrm{ \pbf(\j|do(\i = x_i)) - \pbf(\j|do(\i = x'_i)) }_{\infty}, \nonumber
				\end{multline}
				and let $\hat{\Grm}=(\Vrm,\hat{\Erm})$ be the learned graph by using Algorithm 7 in \cite{wang2016reconstructing}. Then for $\gamma > 0$ and a fixed probability of error $\delta \in (0,1)$, we have
				\(
				\Pr{\Grm = \hat{\Grm}} \geq 1-\delta,
				\)
				provided that $m \in \bO{\frac{1}{\gamma^2} \left(  \ln n + \ln \frac{ r}{\delta} \right)}$ interventional samples are used per $\delta$-noisy path query in Algorithm \ref{algo:query_discrete}.
			\end{proposition}
			We use the same Algorithm \ref{algo:query_discrete} to answer a $\epsilon$-noisy path query.
			The difference is that now $\gamma$ represents the \textit{minimum causal effect} among all pair nodes and not only parent-child nodes.

	\paragraph{On Imperfect  Interventions.}
	\label{sec:imperfect}
		Here we state some results on imperfect interventions. In Appendix \ref{app:imperfect_interventions}, we show that the sample complexity for discrete CBNs is scaled by $\alpha^{-1}$, where $\alpha$ accounts for the degree of uncertainty in the intervention. While for CBNs of sub-Gaussian random variables, the sample complexity still has the same dependence on an upper bound of the variances.

	\section{Experiments}
	\label{sec:Experiments}
	
		In Appendix \ref{appendix:experiments_synthetic}, we tested our algorithms for perfect and imperfect interventions in  synthetic networks, in order to empirically show the logarithmic phase transition of the number of interventional samples (see Figure \ref{fig:experiments_main} as an example). 
		Appendix \ref{appendix_proportion} shows that in several benchmark BNs, most of the graph belongs to its transitive reduction, meaning that one can learn most of the network in polynomial time. 
		Appendix \ref{appendix_experiments} shows experiments on some of these benchmark networks, using the aforementioned algorithms and also our algorithm for learning transitive edges, thus recovering the full networks.
		Finally, in Appendix \ref{appendix_nature}, as an illustration of the availability of interventional data, we show experimental evidence using three gene perturbation datasets from \cite{xiao2015gene,harbison2004transcriptional}.
		\begin{figure}[!ht]
			\centering
			\begin{minipage}[c]{0.41\linewidth}
				\centering
				\includegraphics[width=\linewidth]{./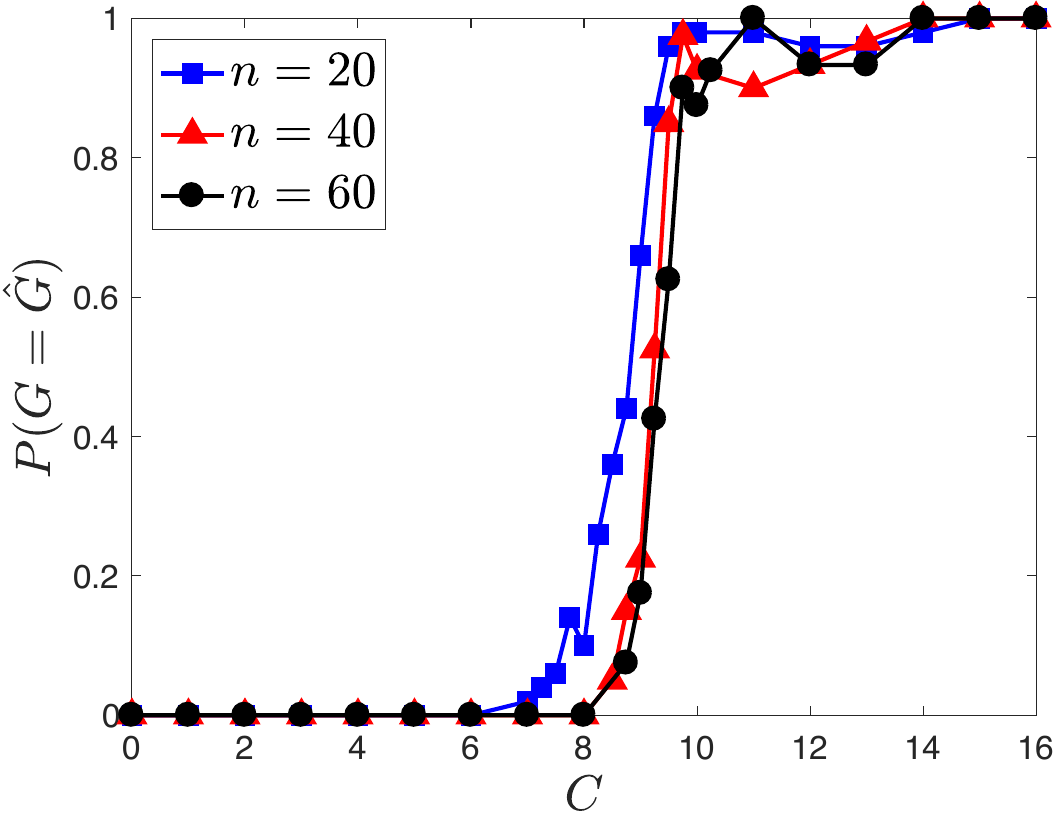}
			\end{minipage}
			\hspace{0.5in}
			\begin{minipage}[c]{0.41\linewidth}
				\centering
				\includegraphics[width=\linewidth]{./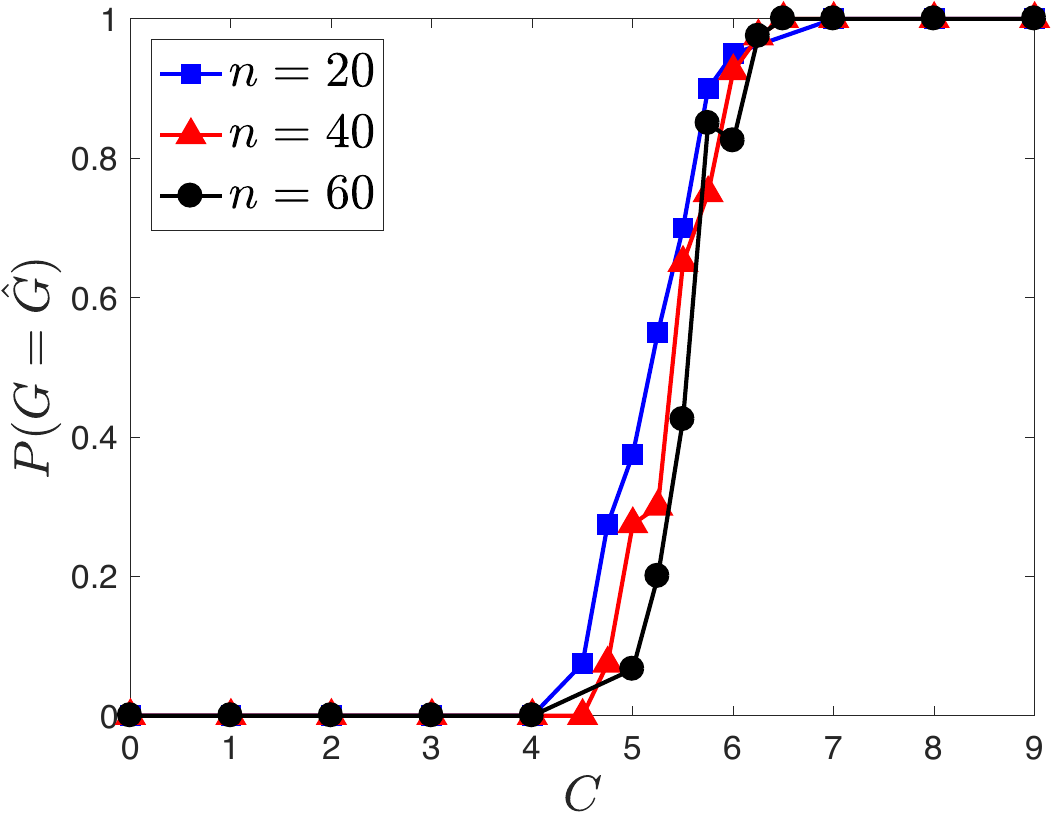}  
			\end{minipage}
			\caption{
			{\small
			(Left) Probability of correct structure recovery of the transitive reduction of a discrete CBN vs. number of samples per query, where the latter was set to $e^C\log nr$, with all CBNs having $r=5$ and $\gamma \geq 0.01$. 
			(Right) Similarly, for continuous CBNs, the number of samples per query was set to $e^C\log n$, with all CBNs having $\lVert(\I-\W)^{-1}\rVert^2_{2,\infty} \leq 20$. 
			Finally, we observe that there is a sharp phase transition from recovery failure to success in all cases, and the $\log n$ scaling holds in practice, as prescribed by Theorems \ref{thrm:NRBN_discrete_general}, \ref{thrm:NRBN_continuous_general}.}
			}
			\label{fig:experiments_main}
		\end{figure}

	\section{Future Work}
	\label{sec:future}
	There are several ways of extending this work. For instance, it would be interesting to analyze other classes of interventions with uncertainty, as in \cite{eaton2007exact}. 
	For continuous CBNs, we opted to use expected values and not to compare continuous distributions directly. 
	The fact that the conditioning is with respect to a continuous random variable makes this task more complex than the typical comparison of continuous distributions. 
	Still, it would be interesting to see whether kernel density estimators \cite{liu2012exponential} could be beneficial.

	\bibliographystyle{icml2018}
	\bibliography{paper}

\begin{thebibliography}{34}
\providecommand{\natexlab}[1]{#1}
\providecommand{\url}[1]{\texttt{#1}}
\expandafter\ifx\csname urlstyle\endcsname\relax
  \providecommand{\doi}[1]{doi: #1}\else
  \providecommand{\doi}{doi: \begingroup \urlstyle{rm}\Url}\fi

\bibitem[Aho et~al.(1972)Aho, Garey, and Ullman]{aho1972transitive}
Aho, A., Garey, M., and Ullman, J.
\newblock The transitive reduction of a directed graph.
\newblock \emph{SIAM Journal on Computing}, 1\penalty0 (2):\penalty0 131--137,
  1972.

\bibitem[Brenner \& Sontag(2013)Brenner and Sontag]{Brenner13}
Brenner, E. and Sontag, D.
\newblock Sparsity\uppercase{B}oost: A new scoring function for learning
  \uppercase{B}ayesian network structure.
\newblock \emph{UAI}, 2013.

\bibitem[Cheng et~al.(2002)Cheng, Greiner, Kelly, Bell, and Liu]{Cheng02}
Cheng, J., Greiner, R., Kelly, J., Bell, D., and Liu, W.
\newblock Learning \uppercase{B}ayesian networks from data: An
  information-theory based approach.
\newblock \emph{Artificial Intelligence Journal}, 2002.

\bibitem[Chickering(1996)]{chickering1996learning}
Chickering, D.
\newblock Learning \uppercase{B}ayesian networks is \uppercase{NP}-complete.
\newblock In \emph{Learning from data}, pp.\  121--130. Springer, 1996.

\bibitem[Chickering \& Meek(2002)Chickering and Meek]{Chickering02b}
Chickering, D. and Meek, C.
\newblock Finding optimal \uppercase{B}ayesian networks.
\newblock \emph{UAI}, 2002.

\bibitem[Dvoretzky et~al.(1956)Dvoretzky, Kiefer, and
  Wolfowitz]{dvoretzky1956asymptotic}
Dvoretzky, A., Kiefer, J., and Wolfowitz, J.
\newblock Asymptotic minimax character of the sample distribution function and
  of the classical multinomial estimator.
\newblock \emph{The Annals of Mathematical Statistics}, pp.\  642--669, 1956.

\bibitem[Eaton \& Murphy(2007)Eaton and Murphy]{eaton2007exact}
Eaton, D. and Murphy, K.
\newblock Exact \uppercase{B}ayesian structure learning from uncertain
  interventions.
\newblock In \emph{Artificial Intelligence and Statistics}, pp.\  107--114,
  2007.

\bibitem[Eberhardt et~al.(2005)Eberhardt, Glymour, and
  Scheines]{Eberhardt05onthe}
Eberhardt, F., Glymour, C., and Scheines, R.
\newblock On the number of experiments sufficient and in the worst case
  necessary to identify all causal relations among \uppercase{N} variables.
\newblock In \emph{UAI}, pp.\  178--184. AUAI Press, 2005.

\bibitem[Harbison et~al.(2004)Harbison, Gordon, Lee, Rinaldi, Macisaac,
  Danford, Hannett, Tagne, Reynolds, Yoo, et~al.]{harbison2004transcriptional}
Harbison, Christopher~T, Gordon, D~Benjamin, Lee, Tong~Ihn, Rinaldi, Nicola~J,
  Macisaac, Kenzie~D, Danford, Timothy~W, Hannett, Nancy~M, Tagne, Jean-Bosco,
  Reynolds, David~B, Yoo, Jane, et~al.
\newblock Transcriptional regulatory code of a eukaryotic genome.
\newblock \emph{Nature}, 2004.

\bibitem[Hauser \& B{\"u}hlmann(2012)Hauser and B{\"u}hlmann]{hauser2012two}
Hauser, A. and B{\"u}hlmann, P.
\newblock Two optimal strategies for active learning of causal models from
  interventions.
\newblock In \emph{Proceedings of the 6th European Workshop on Probabilistic
  Graphical Models}, 2012.

\bibitem[He \& Geng(2008)He and Geng]{he2008active}
He, Y. and Geng, Z.
\newblock Active learning of causal networks with intervention experiments and
  optimal designs.
\newblock \emph{Journal of Machine Learning Research}, 9\penalty0 (Nov), 2008.

\bibitem[H\"offgen(1993)]{Hoffgen93}
H\"offgen, K.
\newblock Learning and robust learning of product distributions.
\newblock \emph{COLT}, 1993.

\bibitem[Kocaoglu et~al.(2017)Kocaoglu, Shanmugam, and
  Bareinboim]{kocaoglu2017experimental}
Kocaoglu, Murat, Shanmugam, Karthikeyan, and Bareinboim, Elias.
\newblock Experimental design for learning causal graphs with latent variables.
\newblock In \emph{Advances in Neural Information Processing Systems}, pp.\
  7021--7031, 2017.

\bibitem[Koller \& Friedman(2009)Koller and Friedman]{Koller09}
Koller, D. and Friedman, N.
\newblock \emph{Probabilistic Graphical Models: Principles and Techniques}.
\newblock The MIT Press, 2009.

\bibitem[LeGall(2014)]{le2014powers}
LeGall, F.
\newblock Powers of tensors and fast matrix multiplication.
\newblock In \emph{Proceedings of the 39th international symposium on symbolic
  and algebraic computation}, pp.\  296--303. ACM, 2014.

\bibitem[Liu et~al.(2012)Liu, Wasserman, and Lafferty]{liu2012exponential}
Liu, H., Wasserman, L., and Lafferty, J.
\newblock Exponential concentration for mutual information estimation with
  application to forests.
\newblock In \emph{Advances in Neural Information Processing Systems}, pp.\
  2537--2545, 2012.

\bibitem[Louizos et~al.(2017)Louizos, Shalit, Mooij, Sontag, Zemel, and
  Welling]{louizos2017causal}
Louizos, Christos, Shalit, Uri, Mooij, Joris, Sontag, David, Zemel, Richard,
  and Welling, Max.
\newblock Causal effect inference with deep latent-variable models.
\newblock \emph{NIPS}, 2017.

\bibitem[Massart(1990)]{massart1990tight}
Massart, P.
\newblock The tight constant in the
  \uppercase{D}voretzky-\uppercase{K}iefer-\uppercase{W}olfowitz inequality.
\newblock \emph{The Annals of Probability}, pp.\  1269--1283, 1990.

\bibitem[Murphy(2001)]{murphy2001active}
Murphy, K.
\newblock Active learning of causal \uppercase{B}ayes net structure.
\newblock Technical report, 2001.

\bibitem[Obozinski et~al.(2009)Obozinski, Wainwright, and
  Jordan]{obozinski2009high}
Obozinski, Guillaume~R, Wainwright, Martin~J, and Jordan, Michael~I.
\newblock High-dimensional support union recovery in multivariate regression.
\newblock In \emph{Advances in Neural Information Processing Systems}, 2009.

\bibitem[Pearl(2009)]{Pearl:2009:CMR:1642718}
Pearl, J.
\newblock \emph{Causality: Models, Reasoning and Inference}.
\newblock Cambridge University Press, 2nd edition, 2009.

\bibitem[Peters et~al.(2010)Peters, Janzing, and
  Sch{\"o}lkopf]{peters2010identifying}
Peters, J., Janzing, D., and Sch{\"o}lkopf, B.
\newblock Identifying cause and effect on discrete data using additive noise
  models.
\newblock In \emph{AIStats}, pp.\  597--604, 2010.

\bibitem[Peters et~al.(2014)Peters, Mooij, Janzing, Sch{\"o}lkopf,
  et~al.]{peters2014causal}
Peters, J., Mooij, J., Janzing, D., Sch{\"o}lkopf, B., et~al.
\newblock Causal discovery with continuous additive noise models.
\newblock \emph{Journal of Machine Learning Research}, 15\penalty0
  (1):\penalty0 2009--2053, 2014.

\bibitem[Ravikumar et~al.(2011)Ravikumar, Wainwright, Raskutti, B.Yu,
  et~al.]{ravikumar2011high}
Ravikumar, P., Wainwright, M., Raskutti, G., B.Yu, et~al.
\newblock High-dimensional covariance estimation by minimizing
  $\ell_1$-penalized log-determinant divergence.
\newblock \emph{Electronic Journal of Statistics}, 5:\penalty0 935--980, 2011.

\bibitem[Shanmugam et~al.(2015)Shanmugam, Kocaoglu, Dimakis, and
  Vishwanath]{shanmugam2015learning}
Shanmugam, K., Kocaoglu, M., Dimakis, A., and Vishwanath, S.
\newblock Learning causal graphs with small interventions.
\newblock In \emph{Advances in Neural Information Processing Systems}, pp.\
  3195--3203, 2015.

\bibitem[Shimizu et~al.(2006)Shimizu, Hoyer, Hyv{\"a}rinen, and
  Kerminen]{shimizu2006linear}
Shimizu, Shohei, Hoyer, Patrik~O, Hyv{\"a}rinen, Aapo, and Kerminen, Antti.
\newblock A linear non-gaussian acyclic model for causal discovery.
\newblock \emph{Journal of Machine Learning Research}, 7\penalty0
  (Oct):\penalty0 2003--2030, 2006.

\bibitem[Spirtes et~al.(2000)Spirtes, Glymour, and Scheines]{Spirtes00}
Spirtes, P., Glymour, C., and Scheines, R.
\newblock \emph{Causation, Prediction and Search}.
\newblock The MIT Press, second edition edition, 2000.

\bibitem[Tong \& Koller(2001)Tong and Koller]{tong2001active}
Tong, S. and Koller, D.
\newblock Active learning for structure in \uppercase{B}ayesian networks.
\newblock In \emph{International joint conference on artificial intelligence},
  2001.

\bibitem[Triantafillou \& Tsamardinos(2015)Triantafillou and
  Tsamardinos]{triantafillou2015constraint}
Triantafillou, S. and Tsamardinos, I.
\newblock Constraint-based causal discovery from multiple interventions over
  overlapping variable sets.
\newblock \emph{Journal of Machine Learning Research}, 16:\penalty0 2147--2205,
  2015.

\bibitem[Tsamardinos et~al.(2006)Tsamardinos, Brown, and
  Aliferis]{Tsamardinos06}
Tsamardinos, I., Brown, L., and Aliferis, C.
\newblock The max-min hill climbing \uppercase{B}ayesian network structure
  learning algorithm.
\newblock \emph{Machine Learning}, 2006.

\bibitem[Verma \& Pearl(1991)Verma and Pearl]{Verma:1990:ESC:647233.719736}
Verma, T. and Pearl, J.
\newblock Equivalence and synthesis of causal models.
\newblock In \emph{Proceedings of the Sixth Annual Conference on Uncertainty in
  Artificial Intelligence}, UAI '90. Elsevier Science Inc., 1991.

\bibitem[Wang \& Honorio(2016)Wang and Honorio]{wang2016reconstructing}
Wang, Z. and Honorio, J.
\newblock Reconstructing a bounded-degree directed tree using path queries.
\newblock \emph{arXiv preprint arXiv:1606.05183}, 2016.

\bibitem[Xiao et~al.(2015)Xiao, Gong, Lv, Lan, Hu, Li, Xu, Bai, Deng, Liu,
  et~al.]{xiao2015gene}
Xiao, Yun, Gong, Yonghui, Lv, Yanling, Lan, Yujia, Hu, Jing, Li, Feng, Xu,
  Jinyuan, Bai, Jing, Deng, Yulan, Liu, Ling, et~al.
\newblock Gene perturbation atlas (gpa): a single-gene perturbation repository
  for characterizing functional mechanisms of coding and non-coding genes.
\newblock \emph{Scientific reports}, 2015.

\bibitem[Zuk et~al.(2006)Zuk, Margel, and Domany]{Zuk06}
Zuk, O., Margel, S., and Domany, E.
\newblock On the number of samples needed to learn the correct structure of a
  \uppercase{B}ayesian network.
\newblock \emph{UAI}, 2006.

\end{thebibliography}
	
	\normalsize
	\clearpage
	\begin{appendices}
	\onecolumn

\def\toptitlebar{
	\hrule height4pt
	\vskip .25in}

\def\bottomtitlebar{
	\vskip .25in
	\hrule height1pt
	\vskip .25in}

\thispagestyle{empty}
\hsize\textwidth
\linewidth\hsize \toptitlebar {\centering
{\large\bf SUPPLEMENTARY MATERIAL \\ Computationally and statistically efficient learning of causal Bayes nets using path queries \par}}
\vspace{-0.1in} \bottomtitlebar

\section{Discussion}
\label{app:discussion}

	\paragraph{Learning causal Bayes nets from purely interventional data.}
	
	Our interest in purely interventional data stems from our goal of discovering the true causal relationships.
	We perform single-vertex interventions for each node, which agrees with the numbers of single-vertex interventions sufficient and in the worst-case necessary to identify any DAG, as shown in \cite{Eberhardt05onthe}.
	
	\paragraph{Availability of purely interventional data.}
	
	The availability of purely interventional data is an implicit assumption in several prior works, which equivalently assume that one can perform an intervention on any node \citep{murphy2001active, tong2001active, he2008active, hauser2012two, shanmugam2015learning, kocaoglu2017experimental}.
	As an illustration of the availability of interventional data, as well as the applicability of our method, we show experimental evidence using three gene perturbation datasets from \citep{xiao2015gene, harbison2004transcriptional}. (See Appendix \ref{appendix_nature}.) 

\section{Algorithms}
\label{app:algorithms}

	\subsection{Algorithm for Transitive Reduction}
	\label{app:algo_transitive_reduction}

		As proved in  \cite{aho1972transitive}, the time complexity of the best algorithm for finding the  transitive reduction of a DAG is the same as the time to compute the transitive closure of a graph or to perform Boolean matrix multiplication. 
		Therefore, we can use any exact algorithm for fast matrix multiplication, such as \cite{le2014powers}, which has $\bO{n^{2.3729}}$ time complexity. 
		As a result, the time complexity of Algorithm \ref{algo:learning_tbn} is dominated by the computation of the transitive reduction since answering a query $\nQ(i,j)$ is in $\tilde{\mathcal{O}}(\log n)$. Finally, note that performing $n^2$ queries (one per each node pair) is equivalent to performing $n$ single-vertex interventions, in which we intervene one node and observe the remaining $n-1$ nodes. This number of interventions is necessary in the worst case, as discussed in \cite{Eberhardt05onthe}.
		
		\begin{algorithm}[ht]
		\caption{Learning the transitive reduction by using noisy path queries}
		\label{algo:learning_tbn}
		\begin{algorithmic}[1]
		\Require{  Vertex set $\Vrm$}
		\Ensure{  Edge set $\hat{\Erm}$}
		\State $\hat{\Erm} \gets \varnothing$
		\For{$i=1 \ldots n$}
			\For{$j=1 \ldots n$}
				\If{$i \neq j$ and $\nQ(i,j) = 1$}
					\State $\hat{\Erm} \gets \hat{\Erm} \cup \{(i,j)\}$
				\EndIf
			\EndFor
		\EndFor
		\State $\hat{\Erm} \gets \TR(\hat{\Erm})$ 
		\end{algorithmic}
		\end{algorithm}

		Assuming that we have correct answers for all path queries, Algorithm \ref{algo:learning_tbn} will indeed exactly recover the $\TR(\Grm)$ of any DAG $\Grm$. However, this is not necessary. We can recover the true transitive reduction, $\TR(\Grm)$, if we have correct answers for queries $\Q(i,j)$ when $i \in \pi_\Grm(j)$, and when there is no directed path from $i$ to $j$, and arbitrary answers when there is a directed path from $i$ to $j$.
		This is because the \emph{transitive reduction} step will remove every transitive edge. It is the previous observation that motivated our characterization of noisy queries given in Definition \ref{def:noisy_query_1}. 

	\subsection{Noisy Path Query Algorithms}
	\label{app:noisy_queries}
	
		Algorithms \ref{algo:query_discrete} and \ref{algo:query_continuous} present our algorithms for answering a noisy path query $\tilde{Q}(i,j)$ motivated by Theorems \ref{thrm:NRBN_discrete_general} and \ref{thrm:NRBN_continuous_general} respectively. For discrete CBNs, we first create a list $\Li$ of size $d = |Dom[\i]|$, containing the empirical probability mass functions (PMFs) of $\j$ after intervening $\i$ with all the possible values from its domain $Dom[\i]$. Next, if the $\linf$-norm of the difference of any pair of PMFs in $\Li$ is greater than a constant $\gamma$, then we answer the query with $1$, and $0$ otherwise. For continuous CBNs, we intervene $\i$ with a constant value $z$ and compute the empirical expected value of $\j$. We then output $1$ if the absolute value of the expected value is greater than $\nicefrac{1}{2}$, and $0$ otherwise.
		(The threshold of $\nicefrac{1}{2}$ is due to the particular way to set $z$, as prescribed by Theorem \ref{thrm:NRBN_continuous_general} and Corollary \ref{corol:NRBN_subgaussian}.)
		
		\begin{algorithm}[ht]
			\caption{Noisy path query algorithm for discrete variables}
			\label{algo:query_discrete}
			\begin{algorithmic}[1]
				\Require{Nodes $i$ and $j$, number of interventional samples $m$, and constant $\gamma$.}
				\Ensure{$\nQ(i,j)$}
				\State $\Li \gets$ emptyList()
				\For{$x_i \in Dom[\i]$}
				\State Intervene $\i$ by setting its value to $x_i$, and obtain $m$ samples $x\si{1}_j,\ldots,x\si{m}_j$ of $\j$
				\State $\hat{p}_k = \frac{1}{m} \sum_{l=1}^m \indicator[x^{(l)}_j = k], \forall k \in \hspace{-0.01in} Dom[X_j]$
				\State Add $\hat{\pbf}$ to the list $\Li$
				\EndFor
				\State $\nQ(i,j) \gets \indicator [( \exists\ \hat{\pbf}, \hat{\qbf} \in \Li)\ \lVert \hat{\pbf}-\hat{\qbf} \rVert_\infty > \gamma ]$
			\end{algorithmic}
		\end{algorithm}
		
		\begin{algorithm}[ht]
			\caption{Noisy path query algorithm for continuous variables}
			\label{algo:query_continuous}
			\begin{algorithmic}[1]
				\Require{Nodes $i$ and $j$, number of interventional samples $m$, and constant $z$ \redcolor{(set as prescribed by Theorem \ref{thrm:NRBN_continuous_general} or Corollary \ref{corol:NRBN_subgaussian}.)}}
				\Ensure{$\nQ(i,j)$}
				\State Intervene $\i$ by setting its value to $z$, and obtain $m$ samples $x_j^{(1)},\ldots,x_j^{(m)}$ of $\j$
				\State $\hmu \gets \frac{1}{m} \sum_{k=1}^m x_j^{(k)}$
				\State $\nQ(i,j) \gets \indicator[|\hmu| > \nicefrac{1}{2}]$ \Comment{\redcolor{(The threshold of \nicefrac{1}{2} is due to the particular way to set $z$, as prescribed by Theorem \ref{thrm:NRBN_continuous_general} and Corollary \ref{corol:NRBN_subgaussian}.)}}
			\end{algorithmic}
		\end{algorithm}

	\subsection{Noisy Transitive Query Algorithms}
	\label{app:noisy_transitive_queries}
	
		Algorithms \ref{algo:query_discrete_transitive} and $\ref{algo:query_continuous_transitive}$ show how to answer a transitive query for discrete and continuous CBNs respectively. Both algorithms are motivated on a property of CBNs, that is, $\forall i \in \Vrm$ and for every set $\Srm$ disjoint of $\{i, \pi_\Grm(i)\}$, we have $P(X_i | do(X_{\pi_\Grm(i)}=x_{\pi_\Grm(i)}),do(X_\Srm=x_\Srm)) = P(X_i | do(X_{\pi_\Grm(i)}=x_{\pi_\Grm(i)}))$. Thus, both algorithms intervene all the variables in $\Srm$, if $\Srm$ is the parent set of $j$, then $i$ will have no effect on $j$ and they return $0$, and $1$ otherwise. 
		
		\begin{algorithm}[ht]
			\caption{Noisy transitive query algorithm for discrete variables}
			\label{algo:query_discrete_transitive}
			\begin{algorithmic}[1]
				\Require{Nodes $i$ and $j$, set of nodes $\Srm$, number of interventional samples $m$, and constant $\gamma$.}
				\Ensure{$\nT(i,j,\Srm)$}
				\State $\Li \gets$ emptyList()
				\For{$x_\srm \in \times_{k\in \Srm} Dom[X_k]$}
				\State Intervene set $X_\Srm$ by setting its value to $x_\srm$
				\For{$x_i \in Dom[\i]$}
				\State Intervene $\i$ by setting its value to $x_i$, and obtain $m$ samples $x\si{1}_j,\ldots,x\si{m}_j$ of $\j$
				\State $\hat{p}_k = \frac{1}{m} \sum_{l=1}^m \indicator[x^{(l)}_j = k], \forall k \in \hspace{-0.01in} Dom[X_j]$
				\State Add $\hat{\pbf}$ to the list $\Li$
				\EndFor
				\State $\nT(i,j,\Srm) \gets \indicator [( \exists\ \hat{\pbf}, \hat{\qbf} \in \Li)\ \lVert \hat{\pbf}-\hat{\qbf} \rVert_\infty > \gamma ]$
				\If{$\nT(i,j,\Srm) = 1$}
			 	STOP
				\EndIf
				\EndFor
			\end{algorithmic}
		\end{algorithm}

		\begin{algorithm}[ht]
			\caption{Noisy transitive query algorithm for continuous variables}
			\label{algo:query_continuous_transitive}
			\begin{algorithmic}[1]
				\Require{Nodes $i$ and $j$, set of nodes $\Srm$, number of interventional samples $m$, and constants $z_1, z_2$ \redcolor{(set as prescribed by Theorem \ref{thrm:continuous_transitive} or Corollary \ref{corol:NRBN_subgaussian_transitive}.)}}
				\Ensure{$\nT(i,j,\Srm)$}
				\State Intervene all variables $X_\Srm$ by setting their values to  $z_1$
				\State Intervene $\i$ by setting its value to $z_2$, and obtain $m$ samples $x_j^{(1)},\ldots,x_j^{(m)}$ of $\j$
				\State $\hmu \gets \frac{1}{m} \sum_{k=1}^m x_j^{(k)}$
				\State $\nT(i,j,\Srm) \gets \indicator[|\hmu| > \nicefrac{1}{2}]$ \Comment{\redcolor{(The threshold of \nicefrac{1}{2} is due to the particular way to set $z_1$ and $z_2$, as prescribed by Theorem \ref{thrm:continuous_transitive} and Corollary \ref{corol:NRBN_subgaussian_transitive}.)}}
			\end{algorithmic}
		\end{algorithm}

	\subsection{Query Algorithm for Discrete Networks Under Imperfect Interventions}
	\label{app:query_imperfect}
	
		Algorithm \ref{algo:query_imperfect} shows how to answer a noisy query for discrete CBNs under imperfect interventions.
		
		\begin{algorithm}[ht]
			\caption{Noisy path query algorithm for discrete variables under imperfect interventions.}
			\label{algo:query_imperfect}
			\begin{algorithmic}[1]
				\Require{Nodes $i$ and $j$, number of interventional samples $m$, and constant $\gamma$}
				\Ensure{$\nQ(i,j)$}
				\State $\Li \gets$ emptyList()
				\For{$x_i \in Dom[\i]$}
				\State Try to intervene $\i$ with value $x_i$, and obtain $m$ pair samples $(x\si{1}_i, x\si{1}_j),\ldots,(x\si{m}_i, x\si{m}_j)$ of $\i$ and $\j$
				\State $\hat{p}_k = \frac{1}{\sum_{l=1}^m \indicator[x^{(l)}_i = x_i]} \sum_{l=1}^m \indicator[x^{(l)}_j = k \wedge x^{(l)}_i = x_i], \forall k \in Dom[X_j]$
				\State Add $\hat{\pbf}$ to the list $\Li$
				\EndFor
				\State $\nQ(i,j) \gets \indicator [( \exists\ \hat{\pbf}, \hat{\qbf} \in \Li)\ \lVert \hat{\pbf}-\hat{\qbf} \rVert_\infty > \gamma ]$
			\end{algorithmic}
		\end{algorithm}

\section{On Imperfect Interventions}
\label{app:imperfect_interventions}

	In this section we relax the assumption of perfect interventions and analyze the sample complexity of a noisy path query.  \cite{eaton2007exact} analyzed a general framework of interventions named as \emph{uncertain interventions}.
	In general terms, we model an imperfect intervention by adding some degree of uncertainty to the intervened variable. Note that the main distinction with respect to perfect interventions is that now the intervened variable is a random variable, meanwhile in perfect interventions the intervened variable is considered a constant.
	
	\paragraph{Discrete random variables.}
	For a discrete CBN, we assume that an intervention follows a Bernoulli trial. That is, when one wants to intervene a variable $\i$ with target value $v$, the probability that $\i$ takes the target value $v$ is $\phi_i$, i.e., $P(\i = v) = \phi_i$, and $P(\i \neq v) = 1-\phi_i$ otherwise.
	
	To answer a noisy path query under this setting, we modify lines $3$ and $4$ of Algorithm \ref{algo:query_discrete}. In line $3$, we now get pair samples $\{(x_{i}^{(1)},x_j^{(1)}), \ldots, (x_{i}^{(m)},x_j^{(m)})\}$. In line $4$, we know estimate $\pbf(\j|do(\i=x_i))$ as follows:  $\forall k \in Dom[X_j], \hat{p}_k = \frac{1}{\sum_{l=1}^m \indicator[x^{(l)}_i = x_i]} \sum_{l=1}^m \indicator[x^{(l)}_j = k \wedge x^{(l)}_i = x_i]$. For completeness, we include the algorithm in Appendix \ref{app:query_imperfect}. Finally, the number of interventional samples $m$ is prescribed by the following theorem.
	
	\begin{theorem}
		\label{thrm:imperfect_discrete}
		Let $\B=(\Grm,\THT)$, $r$, and $\gamma$ follow the same definition as in Theorem \ref{thrm:NRBN_discrete_general}. Let $\alpha$ be a constant such that for all $i \in \Vrm$, $1/2 \leq \alpha \leq \phi_i$, in terms of imperfect interventions.
		Let $\hat{\Grm}=(\Vrm,\hat{\Erm})$ be the output of Algorithm \ref{algo:learning_tbn}. Then for $\gamma > 0$ and a fixed probability of error $\delta \in (0,1)$, we have
		$	P(\TR(\Grm) = \hat{\Grm}) \geq 1-\delta, $
		provided that $m \in \bO{\frac{1}{\alpha\gamma^2} \left( \ln n + \ln \frac{ r}{\delta} \right)}$ interventional samples are used per $\delta$-noisy partially-correct path query in the \emph{modified} Algorithm \ref{algo:query_discrete} as described above.
	\end{theorem}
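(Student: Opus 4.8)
The plan is to reduce the analysis to the perfect-intervention case of Theorem~\ref{thrm:NRBN_discrete_general}, paying an extra factor $\alpha^{-1}$ for the samples that get discarded whenever an imperfect intervention fails to hit its target value. The first step is to check that, conditioned on an imperfect intervention actually reaching its target, the resulting sample has exactly the perfect-intervention law. An imperfect intervention with target value $x_i$ replaces the factor $P(\i\mid\Ui)$ in the joint by a fresh distribution on $Dom[\i]$ that puts mass $\phi_i$ on $x_i$ while leaving every other factor $P(X_k\mid X_{\pi_\Grm(k)})$, $k\neq i$, untouched; conditioning this joint on $\{\i=x_i\}$ therefore yields $\prod_{k\neq i} P(X_k\mid X_{\pi_\Grm(k)})$ evaluated at $\i=x_i$, which is precisely $P(X_{-i}\mid do(\i=x_i))$. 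Marginalizing, the retained samples of $\j$ in the modified estimator (which keeps only those $l$ with $x\si{l}_i = x_i$) are i.i.d.\ draws from $P(\j\mid do(\i=x_i))$, no matter how the failure mass of $\i$ is spread over $Dom[\i]\setminus\{x_i\}$. In particular, when there is no directed path from $i$ to $j$, Lemma~\ref{lemma:dseparation} gives $P(\j\mid do(\i=x_i)) = P(\j)$ for every $x_i$, so all these conditional laws collapse to $P(\j)$, while for $i\in\pi_\Grm(j)$ Lemma~\ref{lemma:probability} supplies two target values whose conditional laws differ by at least $\gamma$ in $\linf$.

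The second step handles the random, but with high probability sufficiently large, number of usable samples. Fix an ordered pair $(i,j)$ and a target value $x_i$, and let $M=\sum_{l=1}^m \indicator[x\si{l}_i = x_i]$ be the number of usable samples. Then $M\sim\mathrm{Binomial}(m,\phi_i)$ with $\phi_i\geq\alpha\geq 1/2$, and, by the first step, conditioned on $M=\mu$ the $\mu$ kept samples are i.i.d.\ from $P(\j\mid do(\i=x_i))$. A multiplicative Chernoff bound gives $M \geq m\alpha/2$ except with probability at most $e^{-m\alpha/8}$. Conditioned on $M\geq m\alpha/2$, Hoeffding's inequality applied coordinatewise together with a union bound over the at most $r$ elements of $Dom[\j]$ shows that the empirical PMF $\hat\pbf$ satisfies $\nrm{\hat\pbf - \pbf(\j\mid do(\i=x_i))}_\infty \leq \epsilon$ except with probability at most $2r\,e^{-2(m\alpha/2)\epsilon^2} = 2r\,e^{-m\alpha\epsilon^2}$. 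Hence every empirical PMF placed in $\Li$ is $\epsilon$-accurate except with probability at most $2r\,e^{-m\alpha\epsilon^2} + e^{-m\alpha/8}$.

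The third step propagates $\epsilon$-accuracy through the query and takes a union bound, exactly as in Theorem~\ref{thrm:NRBN_discrete_general}. Choosing $\epsilon=\gamma/3$, whenever all PMFs in $\Li$ are $\epsilon$-accurate the triangle inequality makes the final comparison of the modified Algorithm~\ref{algo:query_discrete} correct: for $i\in\pi_\Grm(j)$ the two distinguishing target values produce empirical PMFs at $\linf$-distance more than $\gamma-2\epsilon>0$, clearing the threshold, whereas when there is no directed path from $i$ to $j$ all PMFs lie within $2\epsilon<\gamma$ of one another and the query returns $0$; answers on transitive edges are left arbitrary and are stripped by the $\TR$ step in Algorithm~\ref{algo:learning_tbn}. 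A union bound over the at most $n^2$ ordered pairs and the at most $r$ target values per pair requires $2r\,e^{-m\alpha\gamma^2/9} + e^{-m\alpha/8} \leq \delta/\bO{n^2 r}$; since $\gamma\leq 1$ the first term dominates, and this holds once $m\,\alpha\,\gamma^2 \gtrsim \ln n + \ln(r/\delta)$, i.e.\ $m\in\bO{\frac{1}{\alpha\gamma^2}\left(\ln n + \ln\frac{r}{\delta}\right)}$, on which event Algorithm~\ref{algo:learning_tbn} outputs $\hat\Grm=\TR(\Grm)$.

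The only genuinely new ingredient beyond Theorem~\ref{thrm:NRBN_discrete_general} is the second step: one must treat the count $M$ and the kept samples jointly (they are conditionally i.i.d.\ given $M$) so that conditioning on the high-probability event $\{M\geq m\alpha/2\}$ is legitimate, and one must verify that the failure distribution of the intervened node never enters the estimator. I expect this conditioning-plus-Chernoff bookkeeping to be the part needing the most care; everything else is the perfect-intervention argument with $m$ replaced by $\Theta(m\alpha)$.
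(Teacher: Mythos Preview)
Your approach is essentially the paper's: the paper packages your second step as a standalone lemma (Lemma~\ref{lemma:noisy_mle}), using one-sided Hoeffding on the count $M$ and the DKW inequality (rather than coordinatewise Hoeffding plus a union bound over $r$) for the conditional PMF, and then plugs that lemma into the proof of Theorem~\ref{thrm:NRBN_discrete_general} verbatim. One small constant needs repair in your third step: with $\epsilon=\gamma/3$ and the algorithm's comparison threshold equal to $\gamma$, the parent case only yields $\nrm{\hat\pbf-\hat\qbf}_\infty \geq \gamma-2\epsilon=\gamma/3$, which does \emph{not} exceed the threshold; the paper takes accuracy $\gamma/8$ and threshold $\gamma/4$, and any choice with $2\epsilon<\text{threshold}<\gamma-2\epsilon$ works without changing the big-$\mathcal{O}$.
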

	
	In practice, knowing the value of each $\phi_i$ can be hard to obtain, hence our motivation to introduce a lower bound $\alpha$ in Theorem \ref{thrm:imperfect_discrete}.

	\paragraph{Continuous random variables.}
	For continuous CBNs, we model an imperfect intervention by assuming that the intervened variable is also a sub-Gaussian variable. That is, when one intervenes a variable $\i$ with target value $v$, $\i$ becomes a sub-Gaussian variable with mean $v$ and variance $\nu^2_i$. Finally, we continue using Algorithm \ref{algo:query_continuous} to answer noisy path queries under this new setting.
	
	\begin{theorem}
		\label{thrm:imperfect_continuous}
		Let $\B=(\Grm,\THT), \mu_j=0$,  and $\sigma^2_j$ follow the same definition as in Theorem \ref{thrm:NRBN_continuous_general}. Let $\mu_{j|do(\i=z)}$ and $\sigma^2_{j|do(\i=z)}$ denote the expected value and variance of $\j$ after perfectly intervening $\i$ with value $z$. Furthermore, let $\mu(\B,z) = \min_{(i,j)\in \Erm} | \mathbb{E}_{\i}[\mu_{j|do(\i=z)}] |$, and $\sigma^2(\B,z) = \max( \max_{(i,j)\in \Erm}  \mathbb{E}_{\i}[\sigma^2_{j|do(\i=z)}], \max_{j \in \Vrm} \sigma^2_j ).$
		Let $\hat{\Grm}=(\Vrm,\hat{\Erm})$ be the output of Algorithm \ref{algo:learning_tbn}. If there exist an upper bound $\sigmasub$ and a finite value $z$ such that $\sigma^2(\B,z) \leq \sigmasub$ and $\mu(\B,z) \geq 1$, then for a fixed probability of error $\delta \in (0,1)$, we have
		$
			P(\TR(\Grm) = \hat{\Grm}) \geq 1-\delta,
		$
		provided that $m \in \bO{\sigmasub \log \frac{n}{\delta}}$ interventional samples are used per $\delta$-noisy partially-correct path query in Algorithm \ref{algo:query_continuous}.
	
	\end{theorem}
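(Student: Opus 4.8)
The plan is to follow the proof of Theorem~\ref{thrm:NRBN_continuous_general} almost verbatim, isolating the only place where the randomness of the intervened node matters. Algorithm~\ref{algo:query_continuous} is run without modification: node $\i$ is targeted with value $z$, $m$ i.i.d.\ samples of $\j$ are drawn, the empirical mean $\hmu$ is formed, and the query is answered by $\nQ(i,j)=\indicator[|\hmu|>\nicefrac{1}{2}]$. By Definition~\ref{def:noisy_query_1} and the transitive-reduction argument preceding Lemma~\ref{lemma:dseparation}, it suffices to prove that a single query is correct with probability at least $1-\delta'$, with $\delta'=\delta/(cn^2)$ for a universal constant $c$, in the two relevant cases: (i) there is no directed path from $i$ to $j$, where the output should be $0$; and (ii) $i\in\pi_\Grm(j)$, where the output should be $1$. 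A union bound over the $\bO{n^2}$ queries then gives $\Pr{\TR(\Grm)=\hat{\Grm}}\ge 1-\delta$, since the transitive reduction step discards every (possibly misanswered) transitive edge.

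I would first identify the marginal law of $\j$ under an imperfect intervention. Such an intervention still severs the incoming edges of $i$ and only replaces the clamped constant by a sub-Gaussian $X_i$ with $\mathbb{E}[X_i]=z$; conditioned on $\{X_i=x_i\}$ the law of $\j$ is exactly $P(\j|do(\i=x_i))$, so the marginal is the mixture $\int P(\j|do(\i=x_i))\,dP_{\mathrm{imp}}(x_i)$. In case (i), Lemma~\ref{lemma:dseparation} makes every term of the mixture equal to $P(\j)$, so the marginal law — hence both the mean ($\mu_j=0$) and the variance ($\sigma_j^2\le\sigmasub$) — is unchanged, and $\j$ is the original sub-Gaussian variable. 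In case (ii) the sampled mean equals $\mathbb{E}_{X_i}[\mu_{j|do(\i=X_i)}]$, which the hypothesis bounds below by $\mu(\B,z)\ge 1$ in absolute value. In either case the true mean lies at distance at least $\nicefrac{1}{2}$ from the threshold $\nicefrac{1}{2}$, so the query is correct whenever $|\hmu-(\text{true mean})|<\nicefrac{1}{2}$.

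Next I would control $\Pr{|\hmu-(\text{true mean})|>\nicefrac{1}{2}}$ by a sub-Gaussian tail bound, which needs $\j$ (under the imperfect intervention) to be sub-Gaussian with variance proxy $\bO{\sigmasub}$. Case (i) is immediate since the law is unchanged. For case (ii), write $\j=\mathbb{E}[\j|X_i]+\big(\j-\mathbb{E}[\j|X_i]\big)$: the second summand is, conditionally on $X_i$, sub-Gaussian with proxy $\sigma^2_{j|do(\i=X_i)}$, whose expectation is the hypothesis quantity $\mathbb{E}_{\i}[\sigma^2_{j|do(\i=z)}]\le\sigmasub$; the first summand is a function of the sub-Gaussian $X_i$ which, for additive models — and more generally whenever $x_i\mapsto\mu_{j|do(\i=x_i)}$ is Lipschitz, with constant $|\{(\I-\odot_i\W)^{-1}\}_{ji}|$ in the ASGN case of Corollary~\ref{corol:NRBN_subgaussian} — is sub-Gaussian with proxy of order $\nu_i^2$ times the square of that constant, again $\bO{\sigmasub}$. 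A routine conditional-MGF computation shows the sum is sub-Gaussian with proxy $\bO{\sigmasub}$, so the sub-Gaussian Hoeffding inequality gives $\Pr{|\hmu-(\text{true mean})|>\nicefrac{1}{2}}\le 2\exp(-\Omega(m/\sigmasub))$. Setting this at most $\delta/(cn^2)$ and solving for $m$ yields $m\in\bO{\sigmasub\log(n/\delta)}$.

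The genuinely new and most delicate step is the third one, specifically case (ii): in Theorem~\ref{thrm:NRBN_continuous_general} the intervened node is constant and $\j$ is directly sub-Gaussian, whereas here $\j$ is a sub-Gaussian \emph{mixture} over the random intervention value, and one must verify the mixture stays sub-Gaussian with proxy controlled by $\mathbb{E}_{\i}[\sigma^2_{j|do(\i=z)}]$ plus a finite bound on the between-value variance $\mathrm{Var}_{X_i}[\mu_{j|do(\i=X_i)}]$ (finite because $X_i$ has bounded sub-Gaussian norm $\nu_i$ and the conditional mean is Lipschitz). Everything else is a line-by-line re-run of the perfect-intervention proof. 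This is also why no $\alpha^{-1}$-type penalty appears, in contrast to the discrete Theorem~\ref{thrm:imperfect_discrete}: the injected intervention noise is mean-zero, so it preserves the $\ge\nicefrac{1}{2}$ separation of means and only inflates a variance already absorbed into $\sigmasub$; in the discrete case, by contrast, merely a $\phi_i$-fraction of the samples realize the targeted value, shrinking the effective sample size by a factor $\alpha$.
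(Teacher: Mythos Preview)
Your approach is the same as the paper's: treat the two relevant cases (no directed path versus $i\in\pi_\Grm(j)$), bound the deviation of $\hmu$ from its mean by a sub-Gaussian Hoeffding inequality with $t=\nicefrac{1}{2}$, and finish with a union bound over the $\bO{n^2}$ queries. The paper's proof is essentially a line-by-line copy of the perfect-intervention proof, simply replacing $\mu_{j|do(\i=z)}$ and $\sigma^2_{j|do(\i=z)}$ by their $\mathbb{E}_{\i}[\,\cdot\,]$ counterparts; it then asserts that the variance (proxy) of $\hmu$ equals $\mathbb{E}_{\i}[\sigma^2_{j|do(\i=z)}]/m$ and applies Hoeffding directly.

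You are in fact more careful than the paper at the one non-trivial point. By the law of total variance, the marginal variance of $\j$ under an imperfect intervention is $\mathbb{E}_{\i}[\sigma^2_{j|do(\i)}]+\mathrm{Var}_{\i}[\mu_{j|do(\i)}]$, and the analogous statement holds for the sub-Gaussian proxy; the paper's proof silently drops the between-value term. Your decomposition $\j=\mathbb{E}[\j\mid X_i]+(\j-\mathbb{E}[\j\mid X_i])$ and the Lipschitz/ASGN argument is exactly what is needed to control that extra term and make the Hoeffding step legitimate. Note, though, that this Lipschitz condition is not among the stated hypotheses of the theorem (which only bounds $\mathbb{E}_{\i}[\sigma^2_{j|do(\i=z)}]$), so strictly speaking you are proving a slightly stronger, cleaner statement that specializes correctly to Corollary~\ref{corollary:imperfect_NRBN_subgaussian}; the paper's version simply absorbs this gap into the assumed bound $\sigmasub$ without comment.
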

	
	The motivation of the conditions in Theorem \ref{thrm:imperfect_continuous} are similar to Theorem \ref{thrm:NRBN_continuous_general}. Next, we show that ASGN models can fulfill the conditions above.
	
	\begin{corollary}
		\label{corollary:imperfect_NRBN_subgaussian}
		Under the settings given in Corollary \ref{corol:NRBN_subgaussian}. If for all $j \in \Vrm$,  $\nu_j^2 \leq \noisemax$  in terms of imperfect interventions. Then, for a fixed probability of error $\delta \in (0,1)$, we have $P(\TR(\Grm) = \hat{\Grm}) \geq 1-\delta$ provided that $m \in \bO{ \sigmasub \log \frac{n}{\delta}}$ interventional samples are used per $\delta$-noisy partially-correct path query in Algorithm \ref{algo:query_continuous}.
	\end{corollary}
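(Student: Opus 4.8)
The plan is to reduce the claim to Theorem~\ref{thrm:imperfect_continuous} by verifying that an ASGN network, with the parameter choices inherited from Corollary~\ref{corol:NRBN_subgaussian} ($z=1/\wmin$ and $\sigmasub=\noisemax\wmax$) plus the new hypothesis $\nu_j^2\le\noisemax$, satisfies all of that theorem's conditions. First I would record the structural facts. By Remark~\ref{remark:asgm}, marginally $\xbf=(\I-\W)^{-1}\nbf$, and since each noise coordinate is an independent zero-mean sub-Gaussian with full support on $\R$, every $\j$ is itself a zero-mean sub-Gaussian with full support on $\R$ (a linear combination of independent sub-Gaussians is sub-Gaussian, and invertibility of $\I-\W$ preserves full support); in particular $\mu_j=0$, as Theorem~\ref{thrm:imperfect_continuous} requires. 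Likewise, after a perfect intervention on $i$ with value $x_i$ the mutilated model is $\xbf=(\I-\odot_i\W)^{-1}\tilde\nbf$, where $\tilde\nbf$ agrees with $\nbf$ except that its $i$-th coordinate is the constant $x_i$; hence $\mu_{j|do(\i=x_i)}=\{(\I-\odot_i\W)^{-1}\}_{ji}\,x_i$ is linear in $x_i$, while $\sigma^2_{j|do(\i=x_i)}=\sum_{k\neq i}\{(\I-\odot_i\W)^{-1}\}_{jk}^2\,\sigma_k^2$ (with $\sigma_k^2$ the noise variance of node $k$) does not depend on $x_i$.

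Next I would check the two quantitative conditions. An imperfect intervention on $i$ only replaces the constant $x_i$ by a sub-Gaussian variable with mean $z$ and variance $\nu_i^2$, so by linearity $\mathbb{E}_{\i}[\mu_{j|do(\i=z)}]=\{(\I-\odot_i\W)^{-1}\}_{ji}\,z$, and for every edge $(i,j)\in\Erm$ we get $\abs{\mathbb{E}_{\i}[\mu_{j|do(\i=z)}]}=\abs{z}\,\abs{\{(\I-\odot_i\W)^{-1}\}_{ji}}\geq\abs{z}\,\wmin=1$, i.e.\ $\mu(\B,z)\geq 1$; positivity of $\wmin$ is guaranteed by faithfulness exactly as in the discussion around Figure~\ref{fig:asgn_faith}. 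For the variance, since $\sigma^2_{j|do(\i=x_i)}$ is independent of $x_i$ the outer expectation $\mathbb{E}_{\i}[\cdot]$ is inert, and bounding each noise variance by $\noisemax$ together with $\max_j\sum_k\{(\I-\odot_i\W)^{-1}\}_{jk}^2\leq\lVert(\I-\odot_i\W)^{-1}\rVert_{\infty,2}^2$ (and the analogous bound with $\I-\W$ for the marginal variances $\sigma^2_j$) shows both terms defining $\sigma^2(\B,z)$ are at most $\noisemax\wmax=\sigmasub$. This is essentially the same computation used to establish Corollary~\ref{corol:NRBN_subgaussian}; the only difference is the harmless $\mathbb{E}_{\i}[\cdot]$ wrappers. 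Thus $\sigma^2(\B,z)\leq\sigmasub$ and $\mu(\B,z)\geq 1$, so Theorem~\ref{thrm:imperfect_continuous} gives $\Pr{\TR(\Grm)=\hat{\Grm}}\geq 1-\delta$ with $m\in\bO{\sigmasub\log(n/\delta)}$.

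The one place needing care, and the main obstacle, is the extra dispersion injected by the imperfect intervention itself: under an imperfect intervention on $i$, $\j$ is sub-Gaussian with variance $\sum_{k\neq i}\{(\I-\odot_i\W)^{-1}\}_{jk}^2\,\sigma_k^2+\{(\I-\odot_i\W)^{-1}\}_{ji}^2\,\nu_i^2$, carrying the additional term $\{(\I-\odot_i\W)^{-1}\}_{ji}^2\,\nu_i^2$ relative to the perfect-intervention case of Corollary~\ref{corol:NRBN_subgaussian}. I would argue that the hypothesis $\nu_j^2\le\noisemax$ is precisely what makes this negligible: replacing the vanishing intervention variance of the perfect case by $\nu_i^2\le\noisemax$ amounts to allowing noise variance up to $\noisemax$ at node $i$, and this is already absorbed by the column-$i$ contribution inside $\lVert(\I-\odot_i\W)^{-1}\rVert_{\infty,2}^2\leq\wmax$. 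Hence the sub-Gaussian variance proxy of $\j$ under an imperfect intervention is still bounded by $\sigmasub$, the concentration step underlying Theorem~\ref{thrm:imperfect_continuous} goes through unchanged, and the sample complexity matches that of Corollary~\ref{corol:NRBN_subgaussian}.
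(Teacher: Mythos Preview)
Your proposal is correct and follows essentially the same route as the paper: both reduce to Theorem~\ref{thrm:imperfect_continuous} by verifying its mean and variance hypotheses for the ASGN model with the parameter choices $z=1/\wmin$ and $\sigmasub=\noisemax\wmax$ inherited from Corollary~\ref{corol:NRBN_subgaussian}. Your treatment is in fact more explicit than the paper's, since you isolate the extra dispersion term $\{(\I-\odot_i\W)^{-1}\}_{ji}^2\,\nu_i^2$ contributed by the random intervention and explain why $\nu_i^2\le\noisemax$ lets it be absorbed into $\sigmasub$, whereas the paper simply asserts that ``the same upper bound $\sigmasub=\noisemax\wmax$ works since $\nu_i^2\le\noisemax$.''
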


\section{Examples}
\label{app:examples}
	
	\subsection{Example for the use of faithfulness assumption}
		Consider the following ASGN network in Figure \ref{fig:asgn_faith}, assume that $X_1$ is intervened, then we have that the expected value of $X_3$ is $0$ regardless of the value of the intervention. This occurs because the effect is canceled via the directed paths $\{(1,2),(2,3)\}$ and $\{(1,3)\}$. 
		This motivated us to use the faithfulness assumption and rule out such ``pathological'' parameterizations. 
		Finally, in practice, the values of $\wmin$ and $\sigmasub$ are unknown. Fortunately, knowing a lower bound of $\wmin$ and an upper bound of $\sigmasub$ suffices for structure recovery.
		
		\begin{figure}[ht]
			\begin{center}
				\begin{tikzpicture}[scale=1.0]
				\tikzset{>=latex}
				\tikzstyle{vertex}=[circle, fill=white, draw, inner sep=0pt, minimum size=15pt]
				\node[vertex][label=center:$1$](x0) at (1.5,2) {};
				\node[vertex][label=center:$2$](x1) at (1,1) {};
				\node[vertex][label=center:$3$](x2) at (2,1) {};
				\tikzset{edge/.style={->,> = latex'}}
				\draw[edge] (x0) -- (x1) node[midway, above, sloped] {\tiny $+1$};		
				\draw[edge] (x1) -- (x2) node[midway, below, sloped] {\tiny $+1$};		
				\draw[edge] (x0) -- (x2) node[midway, above, sloped] {\tiny $-1$};		
				\end{tikzpicture}
			\end{center}
			\caption{An ASGN network in which the effect of $X_1$ on $X_3$ is none.}
			\label{fig:asgn_faith}
		\end{figure}
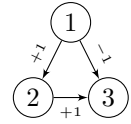
		
	\subsection{Example about the Number of Experiments in the Worst Case for Multiple-Vertex Interventions}
	
		Consider the following DAG in Figure \ref{fig:example_hard} of 6 binary variables.
		Such that, $P(A) = 0.5, P(B)=0.2, P(C)=0.1$.
		Let us also assume that $P(X_1|\neg A \neg B \neg C) = 0.4$, and for any other combination of $A,B,C$ we have $P(X_1|\ \cdot \ ) = 0.8$.
		
		Let us say that we perform a multiple-vertex intervention of $A,B,C$, and that we want to unveil the causal edge $(A,X_1)$.
		For this DAG we have that $P(X_1|do(A),do(B),do(C)) = P(X_1|ABC)$.
		Next let us say that we randomly select the configuration $A,\neg B, C$ for the intervention.
		Then $P(X_1|do(A\neg B C)) = P(X_1|A\neg BC) = 0.8$, in order to discover the causal edge, we also perform the following intervention, $P(X_1|do(\neg A\neg B C)) = P(X_1|\neg A\neg BC) = 0.8$.
		Which results in an ``independence'' or apparent no causal effect.
		In order to unveil the causal edge $(A,X_1)$, it is required to intervene with the configurations $A,\neg B, \neg C$ and $\neg A,\neg B, \neg C$, which in the worst case may be a single configuration out of an exponential number of possible configurations that allows to find the \textit{direct} causal effect.
		
		\begin{figure}[ht]
			\begin{center}
				\begin{tikzpicture}[scale=1.0]
				\tikzset{>=latex}
				\tikzstyle{vertex}=[circle, fill=white, draw, inner sep=0pt, minimum size=15pt]
				\node[vertex][label=center:$A$](A) at (1,2) {};
				\node[vertex][label=center:$B$](B) at (-1,1) {};
				\node[vertex][label=center:$C$](C) at (3,1) {};
				\node[vertex][label=center:$X_1$](x1) at (-0.5,0) {};
				\node[vertex][label=center:$X_2$](x2) at (1,0) {};
				\node[vertex][label=center:$X_3$](x3) at (2.5,0) {};
				\tikzset{EdgeStyle/.style={->}}
				\Edge(A)(B)
				\Edge(A)(C)
				\Edge(A)(x1)
				\Edge(B)(x1)
				\Edge(C)(x1)
				\Edge(A)(x2)
				\Edge(B)(x2)
				\Edge(C)(x2)
				\Edge(A)(x3)
				\Edge(B)(x3)
				\Edge(C)(x3)
				\end{tikzpicture}
			\end{center}
			\caption{DAG of 6 variables where we perform a multiple-vertex intervention.}
			\label{fig:example_hard}
		\end{figure}

\section{On Latent Confounders}
\label{app:latent}
It is well-known that the existence of confounders imposes the most crucial problem for inferring causal relationships from observational data \citep{louizos2017causal, Pearl:2009:CMR:1642718}. 
However, since we perform single-vertex interventions for every node in the CBN, the existence of hidden confounders does not impose a problem.
In the leftmost graph of Figure \ref{fig:confounders}, $X$ and $Y$ are associated observationally due to a hidden common cause, but neither of them is a cause of the other. By intervening $X$ or $Y$, we remove the ``hidden edges''. As a consequence, we are able to infer that neither $X$ nor $Y$ is a cause. The middle graph shows an association between $X$ and $Y$, and the need to intervene $X$ in order to discover that $X$ is a cause of $Y$. Finally, the rightmost graph shows that even in more complex latent configurations, by intervening $X$ we are removing any association between $X$ and $Y$ due to confounders.

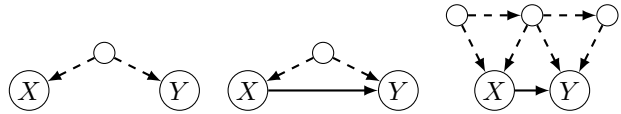
\begin{figure}[ht]
	\begin{center}
		\begin{tikzpicture}[scale=1.0]
		\tikzset{>=latex}
		\tikzstyle{vertex}=[circle, fill=white, draw, inner sep=0pt, minimum size=15pt]
		\node[vertex][label=center:$X$](x0) at (0,0) {};
		\node[vertex][label=center:$Y$](x1) at (2,0) {};
		\node[vertex][minimum size=8pt](x2) at (1,0.5) {};
		\tikzset{EdgeStyle/.style={->,dashed}}
		\Edge(x2)(x0)
		\Edge(x2)(x1)
		\end{tikzpicture}
		\hspace{0.05in}
		\begin{tikzpicture}[scale=1.0]
		\tikzset{>=latex}
		\tikzstyle{vertex}=[circle, fill=white, draw, inner sep=0pt, minimum size=15pt]
		\node[vertex][label=center:$X$](x0) at (0,0) {};
		\node[vertex][label=center:$Y$](x1) at (2,0) {};
		\node[vertex][minimum size=8pt](x2) at (1,0.5) {};
		\tikzset{EdgeStyle/.style={->,dashed}}
		\Edge(x2)(x0)
		\Edge(x2)(x1)
		\tikzset{EdgeStyle/.style={->}}
		\Edge(x0)(x1)
		\end{tikzpicture}
		\hspace{0.05in}
		\begin{tikzpicture}[scale=1.0]
		\tikzset{>=latex}
		\tikzstyle{vertex}=[circle, fill=white, draw, inner sep=0pt, minimum size=15pt]
		\node[vertex][label=center:$X$](x0) at (0.5,0) {};
		\node[vertex][label=center:$Z$](xz) at (1.5,0) {};
		\node[vertex][label=center:$Y$](x1) at (2.5,0) {};
		\node[vertex][minimum size=8pt](x2) at (1,1) {};
		\node[vertex][minimum size=8pt](x3) at (0,1) {};
		\node[vertex][minimum size=8pt](x4) at (2,1) {};
		\tikzset{EdgeStyle/.style={->,dashed}}
		\Edge(x2)(x0)
		\Edge(x2)(x1)
		\Edge(x3)(x0)
		\Edge(x3)(x2)
		\Edge(x4)(xz)
		\Edge(x4)(x1)
		\tikzset{EdgeStyle/.style={->}}
		\Edge(x0)(xz)
		\Edge(xz)(x1)
		\end{tikzpicture}
	\end{center}
	\caption{Examples of a latent configurations that associate the variables $X$ and $Y$.}
	\label{fig:confounders}
\end{figure}

\section{Detailed Proofs}
\label{appendix_proofs}

We now present the proofs of Propositions, Theorems and Corollaries from our main text.

\subsection{Proof of Proposition \ref{lemma:dseparation}}
\label{proof:dseparation}

\begin{proof}
\redcolor{The proof follows directly from rule $3$ of do-calculus \cite{Pearl:2009:CMR:1642718}, which states that $P(\j|do(\i=x_i)) = P(\j)$ if $(\i \perp \j)$ in the mutilated graph after the intervention on $\i$. Since there is no directed path from $i$ to $j$,  in the mutilated graph there is either no path or a path with a v-structure between $i$ and $j$, which implies the independence of $\i$ and $\j$.}

For clarity, we also provide a longer (and equivalent) proof. \redcolor{The proof follows a d-separation argument. Let $\bar{\B}$ be the network after we perform an intervention on $\i$ with value $x_i$, i.e., $\bar{\B}$ has the edge set $E \setminus \{(p_i,i)\ |\ p_i \in \pi_G(i) \}$. Let $anc_G(i)$ and $anc_G(j)$ be the ancestor set of $i$ and $j$ respectively. Now, if there is no directed path from $i$ to $j$ in $\B$ then there is no directed path in $\bar{\B}$ either, therefore, $i \notin anc_G(j)$. Also, $anc_G(i) = \varnothing$ as a consequence of intervening $\i$. Next, we follow the d-separation procedure to determine if $\i$ and $\j$ are marginally independent in $\bar{\B}$. Since $anc_G(i)=\varnothing$, the ancestral graph of $i$ consists of just $i$ itself in isolation, moralizing and disorienting the edges of the ancestral graph of $j$ will not create a path from $i$ to $j$. Thus, guaranteeing the independence of $\i$ and $\j$, i.e., $P(\j) = P(\j|\i)$ in $\bar{\B}$. Finally, since $P(\j|\Uj)$ is fully specified by the parents of $j$ and these parents are not affected by $i$, we have that the marginal of $\j$ in $\B$ remains unchanged in $\bar{\B}$, i.e., $P(\j|do(\i=x_i)) = P(\j)$.}
\end{proof}

\subsection{Proof of Proposition \ref{lemma:probability}}
\label{proof:probability_claims}

\begin{proof}
Here we assume faithfulness in the post-interventional distribution.
Both claims follow a proof by contradiction.
For Claim 1, if for all $x_i \in Dom[\i]$ we have that $P(\j) = P(\j|do(\i=x_i))$ then $\i$ would not be a cause of $\j$, which contradicts the fact that $i \in \pi_\Grm(j)$.
For Claim 2, if for all $x_i,x'_i \in Dom[\i]$ we have that $P(\j|do(\i=x_i)) = P(\j|do(\i=x'_i))$ then in the mutilated graph we have that $P(\j)=P(\j|\i=x_i)$ for all $x_i$, which implies that $\i$ would not be a cause of $\j$, thus contradicting the fact that $i \in \pi_\Grm(j)$.
\end{proof}

\subsection{Proof of Proposition \ref{prop:trees}}
\label{proof:trees}
The proof follows similar arguments to the proof of Theorem \ref{thrm:NRBN_discrete_general}.

\subsection{Proof of Theorem \ref{thrm:NRBN_discrete_general}}
\label{proof:NRBN_discrete_general}

To answer a path query in a discrete CBN, our algorithm compares two empirical PMFs, therefore, we need a good estimation of these PMFs. The following lemma shows the sample complexity to estimate several PMFs simultaneously by using maximum likelihood estimation.
\begin{lemma}
	\label{lemma:tau_condition}
	Let $Y_1,\ldots,Y_{L}$ be $L$ random variables, such that w.l.o.g. the domain of each variable, $Dom[Y_i]$, is a finite subset of $\mathbb{Z}^+$. Also, let $y^{(1)}_i,\ldots,y^{(m)}_i$ be $m$ independent samples of $Y_i$. The maximum likelihood estimator, $\hat{\pbf}(Y_i)$, is obtained as follows:
	\[
	\hat{\prm}_j(Y_i) = \frac{1}{m} \sum_{k=1}^m \indicator[y^{(k)}_i = j], \quad j \in Dom[Y_i].
	\]
	Then, for fixed values of $t > 0$ and $\delta \in (0,1)$, and provided that $m \geq \frac{2}{t^2}\ln \frac{2L}{\delta}$, we have
	\[
	\Pr{ (\forall i \in \{1\ldots L\}) \ \norm{ \hat{\pbf}(Y_i) - \pbf(Y_i) }_\infty \leq t }  \geq 1-\delta.
	\]
\end{lemma}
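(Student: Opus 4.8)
The plan is to prove a uniform (in $i$) $\ell_\infty$ concentration bound for the empirical PMFs via a union bound over the $L$ variables and over the entries of each PMF, combined with a standard Hoeffding-type tail bound for each individual entry. First I would fix a single variable $Y_i$ and a single value $j \in Dom[Y_i]$, and observe that $\hat{\prm}_j(Y_i) = \frac{1}{m}\sum_{k=1}^m \indicator[y_i^{(k)} = j]$ is an average of $m$ i.i.d.\ Bernoulli$(\prm_j(Y_i))$ random variables, each taking values in $[0,1]$. Hoeffding's inequality then gives $\Pr{ |\hat{\prm}_j(Y_i) - \prm_j(Y_i)| > t } \leq 2\exp(-2mt^2)$.

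Next I would take a union bound. The event $\norm{\hat{\pbf}(Y_i) - \pbf(Y_i)}_\infty > t$ is the union over $j \in Dom[Y_i]$ of the events $|\hat{\prm}_j(Y_i) - \prm_j(Y_i)| > t$, and we further union over $i \in \{1,\ldots,L\}$. Writing $r_i = |Dom[Y_i]|$ and $r = \max_i r_i$ (the statement as phrased controls $L$ directly; in the application $r$ plays the role of the per-variable factor), the total number of events is at most $\sum_{i=1}^L r_i$. To match the clean bound $m \geq \frac{2}{t^2}\ln\frac{2L}{\delta}$ stated in the lemma, one absorbs the domain-size factor into the notation or bounds $\sum_i r_i$ appropriately; in the cleanest reading, one simply treats the collection of (variable, value) pairs as the index set of size $L'$ and requires $m \geq \frac{2}{t^2}\ln\frac{2L'}{\delta}$. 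Then
\[
\Pr{ \exists i:\ \norm{\hat{\pbf}(Y_i)-\pbf(Y_i)}_\infty > t } \leq L' \cdot 2\exp(-2mt^2) \leq \delta
\]
whenever $2\exp(-2mt^2) \leq \delta/L'$, i.e.\ $m \geq \frac{1}{2t^2}\ln\frac{2L'}{\delta}$. Taking complements yields the claimed statement.

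The only genuine subtlety — and the step I would be most careful about — is bookkeeping the domain-size factor: the entrywise union bound introduces a factor $|Dom[Y_i]|$ per variable that must be tracked to land the $\ln(nr)$ (equivalently $\ln n + \ln(r/\delta)$) sample complexity of Theorem~\ref{thrm:NRBN_discrete_general}, where $L$ will itself be $\mathcal{O}(n)$ (the number of node pairs / PMFs generated across all queries and domain values). Everything else is routine: Hoeffding for bounded i.i.d.\ averages, followed by a union bound and solving for $m$. No concentration beyond Hoeffding is needed, and faithfulness / the CBN structure play no role in this particular lemma — it is a purely statistical estimation fact that is then plugged into the structural argument elsewhere.
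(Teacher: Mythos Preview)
Your approach is correct in spirit but takes a genuinely different route from the paper, and as written it does not quite recover the stated bound. The paper does \emph{not} apply Hoeffding entrywise and then union bound over the $|Dom[Y_i]|$ entries. Instead, it applies the Dvoretzky--Kiefer--Wolfowitz inequality to the empirical CDF $\hat{F}$ of $Y_i$, obtaining $\Pr(\sup_j |\hat{F}_j - F_j| > t) \leq 2e^{-2mt^2}$ uniformly over all $j$ \emph{without} any domain-size factor. It then passes from CDF to PMF via $|\hat{\prm}_j - \prm_j| \leq |\hat{F}_j - F_j| + |\hat{F}_{j-1} - F_{j-1}|$, which costs a factor of $2$ in $t$ (hence the $2/t^2$ instead of your $1/(2t^2)$), and finally takes a union bound only over the $L$ variables. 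This is precisely why the lemma reads $m \geq \frac{2}{t^2}\ln\frac{2L}{\delta}$ with no $r$ inside the logarithm.

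Your entrywise-Hoeffding route is valid, but the domain-size factor it introduces is not ``bookkeeping'' that can be absorbed into the notation: it genuinely yields $m \geq \frac{1}{2t^2}\ln\frac{2L'}{\delta}$ with $L' = \sum_i |Dom[Y_i]| \leq Lr$, which is a different (and, in the $r$-dependence, weaker) statement than the lemma. For the downstream Theorem~\ref{thrm:NRBN_discrete_general} this is harmless---the final sample complexity is $\mathcal{O}(\gamma^{-2}(\ln n + \ln(r/\delta)))$ either way, since an extra $\ln r$ is already present---but to prove Lemma~\ref{lemma:tau_condition} exactly as stated you need the DKW trick or some other device that controls the whole PMF in one shot.
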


\hyphenation{Dvo-retz-ky}
\begin{proof}
	We use the Dvoretzky-Kiefer-Wolfowitz inequality \citep*{massart1990tight,dvoretzky1956asymptotic}:
	\[ \Pr{ \sup_{j \in Dom[Y_i]} \abs{ \hat{F}_j(Y_i) - F_j(Y_i) } > t } \leq 2 \exp{-2mt^2}, \quad t > 0,\]
	where \( \hat{F}_j(Y_i) = \sum_{k \leq j} \hat{\prm}_k(Y_i)$ and $F_j(Y_i) = \sum_{k \leq j} \prm_k(Y_i) \). Since \( \hat{\prm}_{j}(Y_i) = \hat{F}_j(Y_i) - \hat{F}_{j-1}(Y_i) \) and \( \prm_{j}(Y_i) = F_j(Y_i) - F_{j-1}(Y_i) \), we have
	\begin{align*}
	\abs{\hat{\prm}_{j}(Y_i) - \prm_{j}(Y_i)} &= \abs{ \left(\hat{F}_j(Y_i) - \hat{F}_{j-1}(Y_i) \right) - \left(F_j(Y_i) - F_{j-1}(Y_i)\right) } \\
	&\leq \abs{ \hat{F}_j(Y_i) - F_j(Y_i) } + \abs{ \hat{F}_{j-1}(Y_i) - F_{j-1}(Y_i) }
	\end{align*}
	therefore, for a specific $i$, we have
	\[ \Pr{ \norm{ \hat{\pbf}(Y_i) - \pbf(Y_i) }_\infty > t } \leq 2 \exp{-mt^2/2}, \quad t > 0. \]
	
	Then by the union bound, we have
	\[ \Pr{ (\exists i \in \{1\ldots L\}) \ \norm{ \hat{\pbf}(Y_i) - \pbf(Y_i) }_\infty > t  } \leq 2L \exp{-mt^2/2}, \quad t > 0. \]
	
	Let \( \delta = 2 L \exp{-mt^2/2} \), then for \( m \geq \frac{2}{t^2}\ln \frac{2L}{\delta} \), we have
	\[ \Pr{ (\forall i \in \{1\ldots L\}) \ \norm{ \hat{\pbf}(Y_i) - \pbf(Y_i) }_\infty \leq t  } \geq 1-\delta, \quad \delta \in (0,1),\ t > 0. \]
	Which concludes the proof of Lemma \ref{lemma:tau_condition}.
\end{proof}

Lemma \ref{lemma:tau_condition} states that simultaneously for all $L$ PMFs, the maximum likelihood estimator $\hat{\pbf}(Y_i)$ is at most $t$-away of $\pbf(Y_i)$ in $\linf$-norm with probability at least $1-\delta$. Next, we provide the proof of Theorem \ref{thrm:NRBN_discrete_general}.

\begin{proof}
We analyze a path query $\tilde{Q}(i,j)$  for nodes $i,j \in \Vrm$. From the contrapositive of Proposition \ref{lemma:dseparation} we have that if $P(\j|do(\i=x_i)) \neq P(\j)$ then there exists a directed path from $i$ to $j$. To detect the latter, we opt to use Claim 2 from Proposition \ref{lemma:probability}.

Let $\pbf_{ij}\si{k} = P(\j|do(\i=x_k))$ for all $i,j \in \Vrm$ and $x_k\in Dom[\i]$, and let  $\hat{\pbf}_{ij}\si{k}$  be the maximum likelihood estimation of $\pbf_{ij}\si{k}$. Also, let $\tau = \frac{\gamma}{2}$ for convenience. Next, using Lemma $\ref{lemma:tau_condition}$ with $t=\tau/4$ and $L=rn^2$, we have
\[
\Pr{ \left( \forall i,j \in \Vrm, \forall x_k \in Dom[\i] \right) \norm{ \hat{\pbf}_{ij}\si{k} - \pbf_{ij}\si{k} }_\infty \leq \tau/4 } \geq 1-\delta.
\]
That is, with probability at least $1-\delta$, simultaneously for all $i,j,k$, the estimators $\hat{\pbf}_{ij}\si{k}$ are at most $\tau/4$-away from the true distributions $\pbf_{ij}\si{k}$ in $\linf$ norm, provided that $m \geq \frac{32}{\tau^2} ( 2 \ln n + \ln \frac{2 r}{\delta})$ samples are used in the estimation.

Now, we analyze the two cases that we are interested to answer with high probability. First, let $i \in \pi_\Grm(j)$. We have that for any two distributions $\pbf_{ij}\si{u},\pbf_{ij}\si{v}$ where $x_u,x_v\in Dom[\i]$, either $\pbf_{ij}\si{u} = \pbf_{ij}\si{v}$ or $ \|\pbf_{ij}\si{u} -\pbf_{ij}\si{v}\|_{\infty} > \tau$ (recall the definition of $\gamma$ and $\tau$). Next, for a specific $i,j$, we show how to test if two distributions $\pbf_{ij}\si{u}, \pbf_{ij}\si{v}$ are equal or not. Let us assume $\pbf_{ij}\si{u} = \pbf_{ij}\si{v}$, then we have
\begin{align*}
\norm{ \hat{\pbf}_{ij}\si{u} -  \hat{\pbf}_{ij}\si{v} }_\infty &= \norm{ \hat{\pbf}_{ij}\si{u} - \pbf_{ij}\si{u} - \left(  \hat{\pbf}_{ij}\si{v} -\pbf_{ij}\si{v} \right) }_\infty \\
&\leq \norm{ \hat{\pbf}_{ij}\si{u} - \pbf_{ij}\si{u} }_\infty + \norm{ \hat{\pbf}_{ij}\si{v} -\pbf_{ij}\si{v}}_\infty \\
&\leq \tau/2.
\end{align*}
Therefore, if $\| \hat{\pbf}_{ij}\si{u} -  \hat{\pbf}_{ij}\si{v}\|_\infty > \tau/2$ then w.h.p. $\pbf_{ij}\si{u} \neq \pbf_{ij}\si{v}$. On the other hand, if $\| \hat{\pbf}_{ij}\si{u} -  \hat{\pbf}_{ij}\si{v}\|_\infty \leq \tau/2$ then w.h.p. we have:
\begin{align*}
\norm{ \pbf_{ij}\si{u} -  \pbf_{ij}\si{v} }_\infty &= \norm{ \pbf_{ij}\si{u} - \hat{\pbf}_{ij}\si{u} - \left(  \pbf_{ij}\si{v} -\hat{\pbf}_{ij}\si{v} \right) + \hat{\pbf}_{ij}\si{u} - \hat{\pbf}_{ij}\si{v} }_\infty \\
&\leq \norm{ \hat{\pbf}_{ij}\si{u} - \pbf_{ij}\si{u}}_\infty + \norm{ \hat{\pbf}_{ij}\si{v} -\pbf_{ij}\si{v}}_\infty + \norm{ \hat{\pbf}_{ij}\si{u} -\hat{\pbf}_{ij}\si{v}}_\infty\\
&\leq \tau.
\end{align*}
From the definition of $\gamma$ and $\tau$, we have $\| \pbf_{ij}\si{u} - \pbf_{ij}\si{v} \|_\infty > \tau$ for any pair $\pbf_{ij}\si{u} \neq \pbf_{ij}\si{v}$, then w.h.p. we have that $\pbf_{ij}\si{u} = \pbf_{ij}\si{v}$.

Second, let be the case that there is no directed path from $i$ to $j$. Then, following Proposition $\ref{lemma:dseparation}$, we have that all the distributions $\pbf_{ij}\si{k}, \forall x_k \in Dom[\i]$, are equal. Similarly as in the first case, we have that if $\| \hat{\pbf}_{ij}\si{u} -  \hat{\pbf}_{ij}\si{v}\|_\infty > \tau/2$ then w.h.p. $\pbf_{ij}\si{u} \neq \pbf_{ij}\si{v}$, and equal otherwise.

Next, note that since Algorithm \ref{algo:query_discrete} compares pair of distributions, the provable guarantee of \textit{all} queries (after eliminating the transitive edges) is directly related to the estimation of \textit{all} PMFs with probability of error at most $\delta$, i.e., we have that
\[
\Pr{ \left(\forall j = 1,\ldots,n \wedge (i \in \pi_\Grm(j) \vee j \notin desc_\Grm(i)) \right)  \tilde{Q}(i,j) = \Q(i,j)} \geq 1-\delta,\]
where $desc_\Grm(i)$ denotes the descendants of $i$. Finally, note that we are estimating each distribution by using $m \geq \frac{32}{\tau^2} ( 2 \ln n + \ln \frac{2 r}{\delta} )$ samples, i.e., $m \in \bO{\frac{1}{\gamma^2} (\ln n + \ln \frac{ r}{\delta})}$. However, for each query $\tilde{Q}(i,j)$ in Algorithm \ref{algo:query_discrete}, we estimate a maximum of $r$ distributions, as a result, we use $\frac{32r}{\tau^2} ( 2 \ln n + \ln \frac{2 r}{\delta} )$ interventional samples in total per query.
\end{proof}

\subsection{Proof of Theorem \ref{thrm:NRBN_continuous_general}}
\label{proof:NRBN_continuous_general}

\begin{proof}
From the contrapositive of Proposition \ref{lemma:dseparation} we have that if $P(\j|do(\i=x_i)) \neq P(\j)$ then there exists a directed path from $i$ to $j$. To detect the latter, we opt to use Claim 1 from Proposition \ref{lemma:probability}, i.e., using expected values. Recall from the characterization of the BN that there exist a finite value $z$ and upper bound $\sigmasub$, such that $\mu(\B,z) \geq 1$ and $\sigma^2(\B,z) \leq \sigmasub$. 
Let $x^{(1)}_j,\ldots, x^{(m)}_j$ be $m$ i.i.d. samples of $\j$ after intervening $\i$ with $z$, and let $\mu_{j|do(\i=z)}$ and $\sigma^2_{j|do(\i=z)}$ be the mean and variance of $\j$ respectively. Also, let $\hmu_{j|do(\i=z)} = \frac{1}{m}\sum_{k=1}^m x^{(k)}_j$ be the empirical expected value of $\j$.

Now, we analyze the two cases that we are interested to answer with high probability. First, let $i \in \pi_\Grm(j)$. Clearly, $\hmu_{j|do(\i=z)}$ has expected value $|\Ev{\hmu_{j|do(\i=z)}}| = |\mu_{j|do(\i=z)}| \geq 1$, and variance $\hat{\sigma}^2_{j|do(\i=z)} = \sigma^2_{j|do(\i=z)}/m \leq \sigmasub/m$. Then, using  Hoeffding's inequality we have
\begin{align}
\Pr{\abs{ \hmu_{j|do(\i=z)} - \mu_{j|do(\i=z)} } \geq t } &\leq  2\exp{-t^2/(2\hat{\sigma}_{j|do(\i=z)}^2)} \nonumber\\
&\leq 2\exp{-mt^2/(2\sigmasub)}. \label{ineq:hoeff_local}
\end{align}

Second, if there is no directed path from $i$ to $j$, then by using Proposition \ref{lemma:dseparation}, we have $\mu_{j|do(\i=z)} = \mu_j = 0$ and $\sigma^2_{j|do(\i=z)} = \sigma^2_j \leq \sigmasub$.

As we can observe from both cases described above, the true mean $\mu_{j|do(\i=z)}$ when $i \in \pi_\Grm(j)$ is at least separated by $1$ from the true mean when there is no directed path. Therefore, to estimate the mean, a suitable value for $t$ in inequality \eqref{ineq:hoeff_local} is $t \leq 1/2$. The latter allows us to state that if $|\hmu_{j|do(\i=z)}| > 1/2$ then $\tilde{Q}(i,j)=1$, and $\tilde{Q}(i,j)=0$ otherwise. Replacing $t=1/2$ and restating inequality \eqref{ineq:hoeff_local}, we have that for a specific pair of nodes $(i,j)$, if $i \in \pi_\Grm(j)$ or if $j \notin desc_\Grm(i)$ ($desc_\Grm(i)$ denotes the descendants of $i$), then	
\[ \Pr{ \Q(i,j) \neq \tilde{Q}(i,j) } \leq  2\exp{-m/(8\sigmasub)}. \]
The latter inequality is for a single query. Using the union bound we have
\[ \Pr{ \left(\exists j = 1,\ldots,n \wedge (i \in \pi_\Grm(j) \vee j \notin desc_\Grm(i)) \right) \   \tilde{Q}(i,j) \neq \Q(i,j) } \leq 2n^2 \exp{-m/(8\sigmasub)}. \]
Now, let $\delta = 2n^2 \exp{-m/(8\sigmasub)}$, if $m \geq 8\sigmasub \log \frac{2n^2}{\delta}$ then
\[ \Pr{ \left(\forall j = 1,\ldots,n \wedge (i \in \pi_\Grm(j) \vee j \notin desc_\Grm(i)) \right)  \tilde{Q}(i,j) = \Q(i,j)} \geq 1-\delta .\]
That is, with probability of at least $1-\delta$, the path query $\tilde{Q}(i,j)$ (in Algorithm \ref{algo:query_continuous}) is equal to $\Q(i,j)$ for all $n^2$ performed queries in which either $i \in \pi_\Grm(j)$, or there is no directed path from $i$ to $j$. Note also that the probability at least $1-\delta$ is guaranteed after we remove the transitive edges in the network. Therefore, we obtain $m \geq 8\sigmasub (2\log n + \log \frac{2}{\delta})$, i.e., $m \in \bO{\sigmasub \log \frac{n}{\delta}}$.
\end{proof}

\subsection{Proof of Theorem \ref{thrm:discrete_transitive}}
The proof follows the same arguments given in the proof of Theorem \ref{thrm:NRBN_discrete_general}. For a pair of nodes $i,j$, Algorithm \ref{algo:transitive_edges} sets $\Srm=\hat{\pi}_{\Grm}(j)$. If $\Srm$ is already the true parent set of $j$, then $\i$ will only have effect on $\j$ if $i \in \Srm$. If $\Srm$ is a subset of the true parent set, then $\i$ will only have effect on $\j$ if there exists a transitive edge $(i,j)$. This is because by intervening $\Srm$ we are blocking any possible effect of $\i$ on $\j$ through any node in $\Srm$, and since non-transitive edges are already recovered then $(i,j)$ must be a transitive edge if there exists some effect. This effect is detected as in Theorem \ref{thrm:NRBN_discrete_general}, i.e., through the $\linf$-norm of difference of  empirical marginals of $\j$.

\subsection{Proof of Theorem \ref{thrm:continuous_transitive}}
The proof follows the same arguments given in the proof of Theorem \ref{thrm:NRBN_continuous_general}. For a pair of nodes $i,j$, Algorithm \ref{algo:transitive_edges} sets $\Srm=\hat{\pi}_{\Grm}(j)$. If $\Srm$ is already the true parent set of $j$, then $\i$ will only have effect on $\j$ if $i \in \Srm$. If $\Srm$ is a subset of the true parent set, then $\i$ will only have effect on $\j$ if there exists a transitive edge $(i,j)$. This is because by intervening $\Srm$ we are blocking any possible effect of $\i$ on $\j$ through any node in $\Srm$, and since non-transitive edges are already recovered then $(i,j)$ must be a transitive edge if there exists some effect. This effect is detected as in Theorem \ref{thrm:NRBN_continuous_general}, i.e., through the absolute value of the difference of the empirical means of $\j$.

\subsection{Proof of Theorem \ref{thrm:imperfect_discrete}}
\label{proof:imperfect_discrete}

To prove Theorem \ref{thrm:imperfect_discrete} we first derive a lemma that specifies the number of samples to obtain a good approximation with guarantees of conditional PMFs.

\begin{lemma}
\label{lemma:noisy_mle}
Let $Y_1,\ldots,Y_{L}$ be $L$ discrete random variables, such that w.l.o.g. the domain of each variable, $Dom[Y_i]$, is a finite subset of $\mathbb{Z}^+$. Let $Z_1,\ldots,Z_{L}$ be $L$ Bernoulli random variables, such that each variable fulfills $P(Z_i = 1) \geq \alpha \geq 1/2$. Also, let $(z^{(1)}_i, y^{(1)}_i),\ldots,(z^{(m)}_i, y^{(m)}_i)$ be $m$ pair of independent samples of $Z_i$ and $Y_i$. The conditional maximum likelihood estimator, $\hat{\pbf}(Y_i|Z_i=1)$, is obtained as follows:
\[
\hat{\prm}_j(Y_i|Z_i=1) = \frac{1}{\sum_{k=1}^m z^{(k)}_i} \sum_{k=1}^m \indicator[y^{(k)}_i = j \wedge z^{(k)}_i ], \quad j \in Dom[Y_i].
\]
Then, for fixed values of $t, \delta \in (0,1)$, and provided that $m \geq \frac{4}{\alpha t^2}\ln \frac{4L}{\delta}$, we have
\[
\Pr{ (\forall i \in \{1\ldots L\}) \ \norm{ \hat{\pbf}(Y_i|Z_i=1) - \pbf(Y_i|Z_i=1) }_\infty \leq t  } \geq 1-\delta.
\]
\end{lemma}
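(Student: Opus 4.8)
The plan is to reduce this to the unconditional estimation bound already established in Lemma~\ref{lemma:tau_condition}, by conditioning on the number of ``successful'' interventions. Fix $i$ and set $M_i = \sum_{k=1}^m z_i^{(k)}$, so that $M_i \sim \mathrm{Binomial}(m,\phi_i)$ with $\phi_i = P(Z_i=1) \geq \alpha \geq 1/2$. The crucial observation is that, conditioned on $\{M_i = m'\}$, the retained sub-collection $\{\, y_i^{(k)} : z_i^{(k)}=1 \,\}$ consists of $m'$ i.i.d.\ draws from the conditional law $\pbf(Y_i\mid Z_i=1)$ (a standard consequence of the pairs being i.i.d.), and $\hat{\pbf}(Y_i\mid Z_i=1)$ is precisely the empirical PMF of those $m'$ samples. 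Hence the Dvoretzky--Kiefer--Wolfowitz argument used in the proof of Lemma~\ref{lemma:tau_condition}, applied to a single variable, gives for every $m'\geq 1$
\[
\Pr{ \norm{\hat{\pbf}(Y_i\mid Z_i=1) - \pbf(Y_i\mid Z_i=1)}_\infty > t \mid M_i = m' } \leq 2\exp{-m' t^2/2}.
\]

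Next I would lower-bound $M_i$. Since $\mathbb{E}[M_i] = m\phi_i \geq m\alpha$, Hoeffding's inequality yields $\Pr{M_i \leq m\alpha/2} \leq \Pr{M_i - m\phi_i \leq -m\phi_i/2} \leq \exp{-m\phi_i^2/2} \leq \exp{-m\alpha^2/2}$, which, using $\alpha \geq 1/2$ and $t \in (0,1)$, is at most $\exp{-m\alpha t^2/4}$. On the complementary event $M_i \geq m\alpha/2$ the estimator is well-defined (no division by zero), and substituting $m' \geq m\alpha/2$ into the conditional bound above shows its $\linf$ deviation exceeds $t$ with probability at most $2\exp{-m\alpha t^2/4}$. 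Summing the two contributions for a single $i$,
\[
\Pr{ \norm{\hat{\pbf}(Y_i\mid Z_i=1) - \pbf(Y_i\mid Z_i=1)}_\infty > t } \leq 3\exp{-m\alpha t^2/4},
\]
and a union bound over $i=1,\dots,L$ gives failure probability at most $3L\exp{-m\alpha t^2/4} \leq 4L\exp{-m\alpha t^2/4}$; forcing this to be at most $\delta$ yields exactly $m \geq \frac{4}{\alpha t^2}\ln\frac{4L}{\delta}$.

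The \emph{main obstacle} — and essentially the only non-routine step — is the random denominator $M_i$: one must argue carefully that the estimator is almost surely defined on the high-probability event, and that conditioning on the number of successes leaves the retained $Y$-samples i.i.d.\ with the correct conditional law, so that DKW applies verbatim. Once that conditioning is in place, the remainder is a mechanical combination of DKW, the binomial tail bound, and the union bound. This lemma then plays, in the proof of Theorem~\ref{thrm:imperfect_discrete}, the role that Lemma~\ref{lemma:tau_condition} played in the proof of Theorem~\ref{thrm:NRBN_discrete_general}; the degradation from an $\exp{-m' t^2/2}$ bound to an $\exp{-m\alpha t^2/4}$ bound caused by the loss of effective sample size is precisely what introduces the $\alpha^{-1}$ factor in the final sample complexity.
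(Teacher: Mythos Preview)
Your proof is correct and follows essentially the same approach as the paper: lower-bound the random success count $M_i$ via Hoeffding, apply the DKW-based bound from Lemma~\ref{lemma:tau_condition} conditionally on the effective sample size, combine the two failure probabilities, and finish with a union bound over $i$. The only cosmetic difference is that the paper conditions on the event $\{M_i/m \geq \alpha-\epsilon\}$ with $\epsilon=\alpha/2$ and splits $\delta$ in half, whereas you condition on the exact value $M_i=m'$ and bound the sum directly; your version is arguably cleaner about why DKW applies to the retained subsample, but the arithmetic and final bound coincide.
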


\begin{proof}
First, we analyze a pair of variables $Z_i, Y_i$. Let $\Event_1 = \{ \frac{1}{m}\sum_{k=1}^m z_i^{(k)} \geq \alpha - \epsilon \}$. Next, using the one-sided Hoeffding's inequality, we have
\[
P(\Event_1) \geq 1 - e^{-2\epsilon^2 m}.
\]
Now, let the event $  \Event_2 = \{ \lVert \hat{\pbf}(Y_i|Z_i=1) - \pbf(Y_i|Z_i=1) \rVert_\infty \leq t \}$. Using Lemma \ref{lemma:tau_condition} (see Proof \ref{proof:NRBN_discrete_general}), we obtain
\[
P(\Event_2 | \Event_1) \geq 1 - 2 e^{-m (\alpha-\epsilon) t^2 / 2}.
\]
Then, by the law of total probability, we have
\begin{align*}
P(\Event_2) &\geq P(\Event_2|\Event_1)P(\Event_1) \\
            &\geq 1 - e^{-2\epsilon^2 m} - 2 e^{-m (\alpha-\epsilon) t^2 / 2}.
\end{align*}
Let $\frac{\delta}{2} = e^{-2\epsilon^2 m}$, and $\frac{\delta}{2} = 2 e^{-m (\alpha-\epsilon) t^2 / 2}$. Then provided that $m \geq \max(\frac{1}{2\epsilon^2} \ln \frac{2}{\delta}, \frac{2}{(\alpha - \epsilon)t^2} \ln \frac{4}{\delta})$,
\[
P(\Event_2) \geq 1 - \delta.
\]
For $\epsilon = \frac{\alpha}{2}$, and $t \in (0,1)$, we can simplify the bound on $m$ to be $m \geq \frac{4}{\alpha t^2} \ln \frac{4}{\delta}$. Finally, using union bound and provided that $m \geq \frac{4}{\alpha t^2} \ln \frac{4L}{\delta}$, we have
\[
\Pr{ (\forall i \in \{1\ldots L\}) \ \norm{ \hat{\pbf}(Y_i|Z_i=1) - \pbf(Y_i|Z_i=1) }_\infty \leq t  } \geq 1-\delta.
\]
Which concludes the proof.
\end{proof}

Now follows the proof of  Theorem \ref{thrm:imperfect_discrete}.

\begin{proof}[Proof of Theorem \ref{thrm:imperfect_discrete}]
The proof follows the same steps as in the proof of Theorem \ref{thrm:NRBN_discrete_general} (Appendix \ref{proof:NRBN_discrete_general}). The difference is that we now use the sample complexity given by Lemma \ref{lemma:noisy_mle} instead of Lemma \ref{lemma:tau_condition}. Therefore,  for a query $\tilde{Q}(i,j)$ we obtain a sample complexity of $m \in \bO{\frac{1}{\alpha\gamma^2} \left(  \ln n + \ln \frac{ r}{\delta} \right)}$.
\end{proof}

\subsection{Proof of Theorem \ref{thrm:imperfect_continuous}}
\label{proof:imperfect_continuous}

\begin{proof}
Recall from the characterization of the BN that there exist a finite value $z$ and upper bound $\sigmasub$, such that $\mu(\B,z) \geq 1$ and $\sigma^2(\B,z) \leq \sigmasub$. 
Let $x^{(1)}_j,\ldots, x^{(m)}_j$ be $m$ i.i.d. samples of $\j$ after trying to intervene $\i$ with value $z$. Let $\mu_{j|do(\i=z)}$ and $\sigma^2_{j|do(\i=z)}$ be the mean and variance of $\j$ respectively, after perfectly intervening $\i$ with value $z$. Also, let $\hmu = \frac{1}{m}\sum_{k=1}^m x^{(k)}_j$ be the empirical expected value of $\j$.

Now, we analyze the two cases that we are interested to answer with high probability. First, let $i \in \pi_\Grm(j)$. Clearly, $\hmu$ has expected value $|\Ev{\hmu}| = |\mathbb{E}_{\i}[ \mu_{j|do(\i=z)} ]| \geq 1$, and variance $\hat{\sigma}^2 = \mathbb{E}_{\i}[ \sigma^2_{j|do(\i=z)} ]/m \leq \sigmasub/m$. Then, using  Hoeffding's inequality we have
\begin{align}
\Pr{\abs{ \hmu - \Ev{\hmu} } \geq t } &\leq  2\exp{-t^2/(2\hat{\sigma}^2)} \nonumber\\
&\leq 2\exp{-mt^2/(2\sigmasub)}. \label{ineq:hoeff_imperfect}
\end{align}

Second, if there is no directed path from $i$ to $j$, then by using Proposition \ref{lemma:dseparation}, we have $\mathbb{E}_{\i}[\mu_{j|do(\i=z)}] = \mathbb{E}_{\i}[\mu_j] = 0$ and $\mathbb{E}_{\i}[\sigma^2_{j|do(\i=z)}] = \mathbb{E}_{\i}[\sigma^2_j] \leq \sigmasub$.

As we can observe from both cases described above, the true mean $\mathbb{E}_{\i}[\mu_{j|do(\i=z)}]$ when $i \in \pi_\Grm(j)$ is at least separated by $1$ from the true mean when there is no directed path. Therefore, to estimate the mean, a suitable value for $t$ in inequality \eqref{ineq:hoeff_imperfect} is $t \leq 1/2$. The latter allows us to state that if $|\hmu| > 1/2$ then $\tilde{Q}(i,j)=1$, and $\tilde{Q}(i,j)=0$ otherwise. Replacing $t=1/2$ and restating inequality \eqref{ineq:hoeff_imperfect}, we have that for a specific pair of nodes $(i,j)$, if $i \in \pi_\Grm(j)$ or if $j \notin desc_\Grm(i)$ ($desc_\Grm(i)$ denotes the descendants of $i$), then	
\[ 
\Pr{ \Q(i,j) \neq \tilde{Q}(i,j) } \leq  2\exp{-m/(8\sigmasub)}. 
\]
The latter inequality is for a single query. Using the union bound we have
\[
 \Pr{ \left(\exists j = 1,\ldots,n \wedge (i \in \pi_\Grm(j) \vee j \notin desc_\Grm(i)) \right) \   \tilde{Q}(i,j) \neq \Q(i,j) } \leq 2n^2 \exp{-m/(8\sigmasub)}. 
 \]
Now, let $\delta = 2n^2 \exp{-m/(8\sigmasub)}$, if $m \geq 8\sigmasub \log \frac{2n^2}{\delta}$ then
\[ \Pr{ \left(\forall j = 1,\ldots,n \wedge (i \in \pi_\Grm(j) \vee j \notin desc_\Grm(i)) \right)  \tilde{Q}(i,j) = \Q(i,j)} \geq 1-\delta .\]
That is, with probability of at least $1-\delta$, the path query $\tilde{Q}(i,j)$ (in Algorithm \ref{algo:query_continuous}) is equal to $\Q(i,j)$ for all $n^2$ performed queries in which either $i \in \pi_\Grm(j)$, or there is no directed path from $i$ to $j$. Note also that the probability at least $1-\delta$ is guaranteed after we remove the transitive edges in the network. Therefore, we obtain $m \geq 8\sigmasub (2\log n + \log \frac{2}{\delta})$, i.e., $m \in \bO{\sigmasub \log \frac{n}{\delta}}$.
\end{proof}

\subsection{Proof of Corollary \ref{corol:NRBN_subgaussian}}
\label{proof:NRBN_subgaussian}

\begin{proof}
Let us first analyze the expected value $\mu_j$ of each variable $\j$ in the network before performing any intervention. From the definition of the ASGN model we have that the expected value of $\j$ is $\mu_j = \sum_{p \in \pi_\Grm(j)} \Wrm_{jp} \mu_p$, and from the topological ordering of the network we can observe that the variables without parents have zero mean since these are only affected by a sub-Gaussian noise with zero mean. Therefore, following this ordering we have that the mean of every variable $\j$ is $\mu_j = 0$.

Recall from Remark \ref{remark:asgm} that we can write the model as: $X = \W X + N$, which is equivalent to $X = (\I - \W)^{-1}N$. Let $\BB = (\I - \W)^{-1}$, then $\BB_{ji}$ denotes the total \emph{weight effect} of  the noise $N_i$ on the node $j$. Furthermore, let  $\odot_i \BB = (\I - \odot_i\W)^{-1}$ and similarly $\{\odot_i \BB\}_{jk}$ denotes the total weight effect of  the noise $N_k$ on the node $j$ after intervening the node $i$.  

Next, we analyze if $z = 1/\wmin$, and $\sigmasub = \noisemax w_{max}$ fulfill the conditions given in Theorem \ref{thrm:NRBN_continuous_general}.
First, let $i \in \pi_\Grm(j)$, i.e., $(i,j)\in \Erm$.  Since $\wmin = \min_{(i,j)\in \Erm} |\{\odot_i \BB\}_{ji}|$, we have $| \mu_{j|do(\i=z)} | = |\{\odot_i \BB\}_{ji}| \times |z| = |\{\odot_i \BB\}_{ji}|/\wmin$. Since $\wmin \leq |\{\odot_i \BB\}_{ji}|$ for any $(i,j) \in \Erm$, we have that $\mu(\B,z) \geq 1$. Let $\upsilon_{j|do(\i=z)}$ be the variance of $\j$ after intervening $\i$, then we have that
$ \upsilon^2_{j|do(\i=z)} =  \sum_{p \in \Vrm \setminus {i}} (\{ \odot_i \BB\ \}_{jp})^2  \noisej,
$
similarly, the variance of $j$ without any intervention is 
$\upsilon^2_j = \sum_{p \in \Vrm \setminus {i}} ( \BB_{jp} )^2  \noisej.
$
Then $\max_{(i,j)\in \Erm} \upsilon^2_{j|do(\i=z)} \leq \max_{i \in \Vrm} \noisemax \lVert \odot_i \BB \rVert^2_{\infty,2}$, and $\max_{j \in \Vrm} \upsilon^2_j \leq  \noisemax \lVert  \BB \rVert^2_{\infty,2}$, which results in $\sigmasub = \noisemax w_{max}$.

Second, let be the case that there is no directed path from $i$ to $j$. Then from Proposition \ref{lemma:dseparation}, $\i$ and $\j$ are independent after intervening $\i$, i.e., $\mu_{j|do(\i=z)} = \mu_j = 0$, and $\upsilon^2_{j|do(\i=z)} = \upsilon^2_j \leq \sigmasub$.

As shown above, for these values of $z= 1/\wmin$ and $\sigmasub = \noisemax \wmax$, we fulfill the conditions given in Theorem \ref{thrm:NRBN_continuous_general}, which concludes our proof.
\end{proof}

\subsection{Proof of Corollary \ref{corol:NRBN_subgaussian_transitive}}
\label{proof:corollary_transitive}

For a pair of nodes $i,j$, Algorithm \ref{algo:transitive_edges} sets $\Srm=\hat{\pi}_{\Grm}(j)$. If $\Srm$ is already the true parent set of $j$, then $\i$ will only have effect on $\j$ if $i \in \Srm$. If $\Srm$ is a subset of the true parent set, then $\i$ will only have effect on $\j$ if there exists a transitive edge $(i,j)$. This is because by intervening $\Srm$ we are blocking any possible effect of $\i$ on $\j$ through any node in $\Srm$, and since non-transitive edges are already recovered then $(i,j)$ must be a transitive edge if there exists some effect. Thus,  $\wmin = \min_{ij} |\Wrm_{ij}|$ is enough to ensure a mean of at least $1$ for $\j$, since only $\i$ is intervened with value $z_2=1/\wmin$ while the other nodes in $\Srm$ are intervened with value $z_1=0$. Finally, because the value of $\wmax$ takes the maximum across all possible interventions of subsets of the parent set of $j$, then $\sigmasub$ is an upper bound and similar arguments as in Corollary \ref{corol:NRBN_subgaussian} hold.

\subsection{Proof of Corollary \ref{corollary:imperfect_NRBN_subgaussian}}
\label{proof:imperfect_NRBN_subgaussian}

\begin{proof}
To prove the corollary we need to show that for $z=1/\wmin$ and $\sigmasub = \noisemax \wmax$, the conditions $\mu(\B,z) \geq 1$ and $\sigma^2(\B,z) \leq \sigmasub$ hold, similarly to Proof \ref{proof:NRBN_subgaussian}.

For the case when $i \in \pi_\Grm(j)$, now $\i$ (the intervened variable)  is a sub-Gaussian variable with mean $z$ and variance $\nu_i^2$, we clearly have that the same upper bound $\sigmaub= \noisemax \wmax$ works since $\nu^2_i \leq \noisemax$. Likewise, the value $z$ is properly set since the value of $\wmin$ is  $\wmin = \min_{(i,j)\in \Erm} |\{\odot_i \BB\}_{ji}|$.

For the case when there is no directed path from $i$ to $j$, we have that $\i$ and $\j$ are independent after intervening $\i$, i.e., $\Ev{\j} = \mu_j = 0$, and $\Var{\j} = \upsilon^2_j \leq \sigmasub$.

From these analyses we conclude that the ASGN model fulfills the conditions given in Theorem \ref{thrm:imperfect_continuous}. Which concludes our proof.
\end{proof}

\section{Experiments}
\label{app:experiments_section}

\subsection{Experiments on Synthetic CBNs}
\label{appendix:experiments_synthetic}

In this section, we validate our theoretical results on synthetic data for perfect and imperfect interventions by using Algorithms \ref{algo:learning_tbn}, \ref{algo:query_discrete}, and \ref{algo:query_continuous}. Our objective is to characterize the number of interventional samples per query needed by our algorithm for learning the transitive reduction of a CBN exactly. 

Our experimental setup is as follows. We sample a random transitively reduced DAG structure $\Grm$ over $n$ nodes.
We then generate a CBN as follows: for a discrete CBN, the domain of a variable $\i$ is $Dom[\i] = \{1,\ldots,d\}$, where $d$ is the size of the domain, which is selected uniformly at random from $\{2,\ldots,5\}$, i.e., $r=5$ in terms of Theorem \ref{thrm:NRBN_discrete_general}. Then, each row of a CPT is generated uniformly at random. Finally, we ensure that the generated CBN fulfills $\gamma \geq 0.01$.
For a continuous CBN, we use Gaussian noises following the ASGN model as described in Definition \ref{def:subgaussian}, where each noise variable $N_i$ is Gaussian with mean $0$ and variance selected uniformly at random from $[1,5]$, i.e., $\noisemax = 5$, in terms of Corollary \ref{corol:NRBN_subgaussian}. The edge weights $\Wrm_{ij}$ are selected uniformly at random from $[-1.25,-0.01] \cup [0.01,1.25]$ for all $(i,j) \in \Erm$. We ensure that $\W$ fulfills $\lVert(\I-\W)^{-1}\rVert^2_{2,\infty} \leq 20$. 
After generating a CBN, one can now intervene a variable, and sample accordingly to a given query. Finally, we set $\delta = 0.01$, and estimate the probability $P(\Grm=\hat{\Grm})$ by computing the fraction of times that the learned DAG structure $\hat{\Grm}$ matched the true DAG structure $\Grm$ exactly, across 40 randomly sampled BNs. We repeated this process for $n \in \{20,40,60\}$. 
The number of samples per query was set to $e^C\log nr$ for discrete BNs, and $e^C\log n$ for continuous BNs, where $C$ was the control parameter, chosen to be in $[0,16]$. Figure \ref{fig:experiments_appendix} shows the results of the structure learning experiments. We can observe that there is a sharp phase transition from recovery failure to success in all cases, and that the $\log n$ scaling holds in practice, as prescribed by Theorems \ref{thrm:NRBN_discrete_general} and \ref{thrm:NRBN_continuous_general}.

Similarly, for imperfect interventions we work under the same experimental settings described above.
For a discrete BN, we additionally set $\alpha = 0.9$ in terms of Theorem \ref{thrm:imperfect_discrete}. 
Whereas for a continuous BN, we  set $\nu^2_i = \sigma^2_i$ for all $i \in \Vrm$, in terms of \ref{corollary:imperfect_NRBN_subgaussian}. Figure \ref{fig:experiments_appendix} shows the results of the structure learning experiments. We can observe that the sharp phase transition from recovery failure to success and the $\log n$ scaling is also preserved, as prescribed by Theorems \ref{thrm:imperfect_discrete} and  \ref{thrm:imperfect_continuous}.

\begin{figure}[!ht]
	\centering
	\begin{minipage}[c]{0.3\linewidth}
		\centering
		\includegraphics[width=\linewidth]{./figures/new_discrete.pdf}
	\end{minipage}
	\hspace{0.5in}
	\begin{minipage}[c]{0.3\linewidth}
		\centering
		\includegraphics[width=\linewidth]{./figures/new_continuous.pdf}  
	\end{minipage}
	
	\begin{minipage}[c]{0.3\linewidth}
		\centering
		\includegraphics[width=\linewidth]{./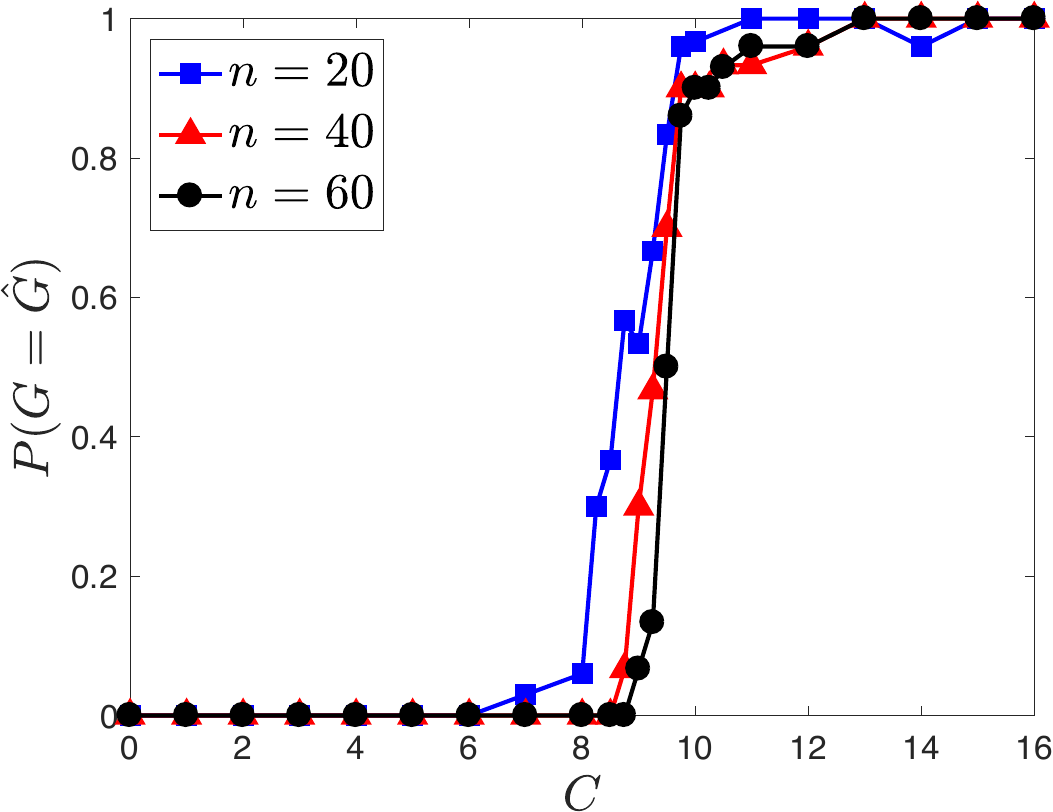}
	\end{minipage}
	\hspace{0.5in}
	\begin{minipage}[c]{0.3\linewidth}
		\centering
		\includegraphics[width=\linewidth]{./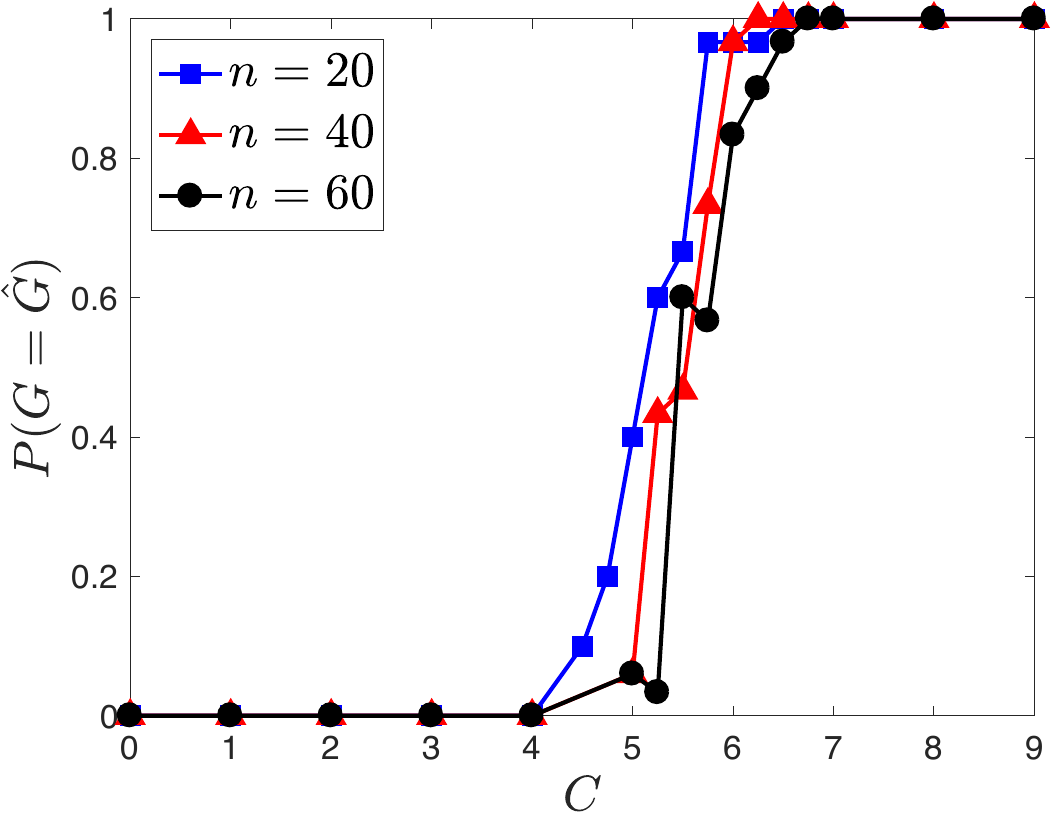}  
	\end{minipage}

	\caption{(Left, Top) Probability of correct structure recovery of the transitive reduction of a discrete CBN vs. number of samples per query, where the latter was set to $e^C\log nr$, with all CBNs having $r=5$ and $\gamma \geq 0.01$. (Right, Top) Similarly, for continuous CBNs, the number of samples per query was set to $e^C\log n$, with all CBNs having $\lVert(\I-\W)^{-1}\rVert^2_{2,\infty} \leq 20$. (Left, Bottom) Results for imperfect interventions for discrete CBNs under same settings as in perfect interventions and $\alpha =0.9$. (Right, Bottom) Results for imperfect interventions for continuous CBNs under same settings as in perfect interventions and $\nu^2_i = \sigma^2_i, \forall i \in V$. Finally, we observe that there is a sharp phase transition from recovery failure to success in all cases, and the $\log n$ scaling holds in practice, as prescribed by Theorems \ref{thrm:NRBN_discrete_general}, \ref{thrm:NRBN_continuous_general}, \ref{thrm:imperfect_discrete}, and \ref{thrm:imperfect_continuous}.}
	\label{fig:experiments_appendix}
\end{figure}

\subsection{Most Benchmark BNs Have Few Transitive Edges}
\label{appendix_proportion}

In this section we compute some attributes of $21$   benchmark networks, which are publicly available at \url{http://compbio.cs.huji.ac.il/Repository/networks.html} and \url{http://www.bnlearn.com/bnrepository/}.  These  benchmark BNs contain the DAG structure and the conditional probability tables.  Several prior works also used these BNs and  evaluated DAG recovery by sampling data \emph{observationally} by using the joint probability distribution \citep{Brenner13,Tsamardinos06}.

Table \ref{tab:proportions} reports the number of vertices, $|\Vrm|$, the number of edges, $|\Erm|$, the number of transitive edges, $|\mathrm{RE}|$, and the ratio, $\mathrm{|RE|}/|\Erm|$. Finally, the mean and median of the ratios is presented. A median of $0.48\%$ indicates that more than half of these networks have a number of transitive edges less than $0.50\%$ of the total number of edges. In other words, our methods provide guarantees for exact learning of at least $99.5\%$ of the true structure for many of these benchmark networks.

\begin{table}[!ht]
\centering
\caption{For each network we show the number of vertices, $|\Vrm|$, the number of edges, $|\Erm|$, the number of transitive edges, $|\mathrm{RE}|$, and the ratio, $|\mathrm{RE}|/|\Erm|$.}
\label{tab:proportions}
\begin{tabular}{@{}lrrrr@{}}
\toprule
Network    & $|\Vrm|$ & $|\Erm|$ & $|\mathrm{RE}|$ & $|\mathrm{RE}|/|\Erm|$ \\ \midrule
Alarm & 37 & 46 & 4 & 8.70\% \\
Andes & 223 & 338 & 45 & 13.31\% \\
Asia & 8 & 8 & 0 & 0.00\% \\
Barley & 48 & 84 & 14 & 16.67\% \\
Cancer & 5 & 4 & 0 & 0.00\% \\
Carpo & 60 & 74 & 0 & 0.00\% \\
Child & 20 & 25 & 1 & 4.00\% \\
Diabetes & 413 & 602 & 48 & 7.97\% \\
Earthquake & 5 & 4 & 0 & 0.00\% \\
Hailfinder & 56 & 66 & 4 & 6.06\% \\
Hepar2 & 70 & 123 & 16 & 13.01\% \\
Insurance & 27 & 52 & 12 & 23.08\% \\
Link & 724 & 1125 & 0 & 0.00\% \\
Mildew & 35 & 46 & 6 & 13.04\% \\
Munin1 & 186 & 273 & 1 & 0.37\% \\
Munin2 & 1003 & 1244 & 6 & 0.48\% \\
Munin3 & 1041 & 1306 & 6 & 0.46\% \\
Munin4 & 1038 & 1388 & 6 & 0.43\% \\
Pigs & 441 & 592 & 0 & 0.00\% \\
Water & 32 & 66 & 0 & 0.00\% \\
Win95pts & 76 & 112 & 8 & 7.14\% \\
\hline
Average & & & & 5.46\%\\
Median & & & & 0.48\%\\
\bottomrule
\end{tabular}
\end{table}

\subsection{DAG Recovery on Benchmark BNs}
\label{appendix_experiments}

In this section we test Algorithms \ref{algo:learning_tbn}, \ref{algo:query_discrete}, \ref{algo:query_continuous}, \ref{algo:query_discrete_transitive}, \ref{algo:query_continuous_transitive}, and \ref{algo:transitive_edges}, on  benchmark networks that may contain transitive edges. The networks are publicly available at \url{http://www.bnlearn.com/bnrepository/}. These standard  benchmark BNs contain the DAG structure and the conditional probability distributions. We sample data \emph{interventionally} by using the manipulation theorem \citep{Pearl:2009:CMR:1642718}. We then compare the learned DAG versus the true DAG. Several prior works used these BNs and also evaluated DAG recovery by sampling data \emph{observationally} by using the joint probability distribution \citep{Brenner13,Tsamardinos06}. 

\paragraph{Discrete networks.}
We first present experiments on discrete BNs. For each network we set the number of samples $m = e^{12} \log nr$, and ran Algorithm \ref{algo:learning_tbn} once. After learning the transitive reduction, we ran Algorithm \ref{algo:transitive_edges} to learn the missing transitive edges.
For the true edge set $\Erm$ and recovered edge set $\tilde{\Erm}$, we define the edge precision as $|\tilde{\Erm} \cap \Erm|/|\tilde{\Erm}|$, and the edge recall as $|\tilde{\Erm} \cap \Erm|/|\Erm|$. The F1 score was computed from the previously defined precision and recall. 
As we can observe in Table \ref{tab:realworld}, all of the networks achieved an edge precision of $1.0$, which indicates that all the edges that our algorithm learned are indeed part of the true network. Finally, all networks also achieved an  edge recall of $1.0$, which indicates that all edges (including the transitive edges) were correctly recovered. 

\begin{table}[!ht]
\centering
\caption{Results on benchmark discrete networks. For each network, we show the number of nodes, $n$, the number of edges, $|\Erm|$, the number of transitive edges, $|\mathrm{RE}|$, the maximum domain size, $r$, the edge precision, $|\tilde{\Erm} \cap \Erm|/|\tilde{\Erm}|$, the edge recall, $|\tilde{\Erm} \cap \Erm|/|\Erm|$, and the F1 score.}
\label{tab:realworld}
\begin{tabular}{@{}lrrrrccc@{}}
\toprule
Network    & $n$ & $|\Erm|$ & $|\mathrm{RE}|$ & $r$ & \begin{tabular}[c]{@{}c@{}}Edge\\ precision\end{tabular} & \begin{tabular}[c]{@{}c@{}}Edge\\ recall\end{tabular} & F1 score \\ \midrule
Carpo      & $60$  & $74$    & $0$                                                                    & $4$   & $1.00$                                                    & $1.00$                                                 & $1.00$    \\
Child      & $20$  & $25$    & $1$                                                                    & $6$   & $1.00$                                                    & $1.00$                                                 & $1.00$    \\
Hailfinder & $56$  & $66$    & $4$                                                                    & $11$  & $1.00$                                                    & $1.00$                                                 & $1.00$    \\
Win95pts   & $76$  & $112$   & $8$                                                                    & $2$   & $1.00$                                                    & $1.00$                                                 & $1.00$    \\ \bottomrule
\end{tabular}
\end{table}

\paragraph{Additive Gaussian networks.}
Next, we present experiments on continuous BNs. For each network we set the number of samples $m = e^{C} \log n$, and ran Algorithm \ref{algo:learning_tbn} once. 
For the true edge set $\Erm$ and recovered edge set $\tilde{\Erm}$, we define the edge precision as $|\tilde{\Erm} \cap \Erm|/|\tilde{\Erm}|$, and the edge recall as $|\tilde{\Erm} \cap \Erm|/|\Erm|$. The F1 score was computed from the previously defined precision and recall. 
As we can observe in Table \ref{tab:realworld_gaussian},  both networks achieved an edge precision of $1.0$, which indicates that all the edges that our algorithm learned are indeed part of the true network. Finally, both networks also achieved an  edge recall of $1.0$, which indicates that all edges (including the transitive edges) were correctly recovered. 

\begin{table}[!ht]
	\centering
	\caption{Results on benchmark continuous networks. For each network, we show the number of nodes, $n$, the number of edges, $|\Erm|$, the number of transitive edges, $|\mathrm{RE}|$, the constant $C$, the maximum domain size, $r$, the edge precision, $|\tilde{\Erm} \cap \Erm|/|\tilde{\Erm}|$, the edge recall, $|\tilde{\Erm} \cap \Erm|/|\Erm|$, and the F1 score.}
	\label{tab:realworld_gaussian}
	\begin{tabular}{@{}lrrrrccc@{}}
		\toprule
		Network    & $n$ & $|\Erm|$ & $|\mathrm{RE}|$ & $C$ & \begin{tabular}[c]{@{}c@{}}Edge\\ precision\end{tabular} & \begin{tabular}[c]{@{}c@{}}Edge\\ recall\end{tabular} & F1 score \\ \midrule
		Magic-Irri      & $64$  & $102$    & $25$ & $11$ & $1.00$ & $1.00$ & $1.00$    \\
		Magic-Niab      & $44$  & $66$    & $12$  & $7$ & $1.00$ & $1.00$ & $1.00$    \\
		\bottomrule
	\end{tabular}
\end{table}

\subsection{DAG Recovery on Real-World Gene Perturbation Datasets}
\label{appendix_nature}

In this section we show experimental results on real-world interventional data. We selected 14 yeast genes from the gene perturbation data in ``Transcriptional regulatory code of a eukaryotic genome'' \citep{harbison2004transcriptional}. A few observations from the learned BN shown in Figure \ref{fig:gene} are: the gene YFL044C reaches 2 genes directly and has an indirect influence on all 11 remaining genes; finally, the genes YML081W and YNR063W are reached by almost all other genes.

\begin{figure}[!ht]
	\centering
    \includegraphics[width=0.3\linewidth]{./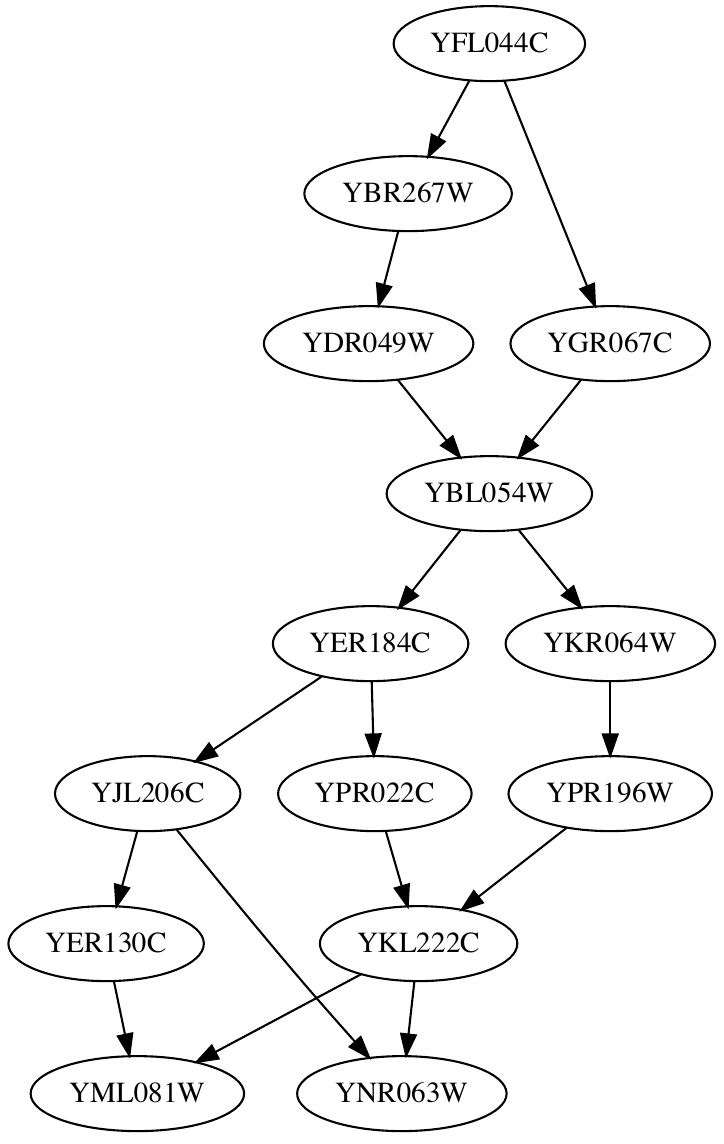}  
	\caption{DAG structure recovered from interventional data in \cite{harbison2004transcriptional}. The nodes correspond to yeast genes.}
	\label{fig:gene}
\end{figure}

Next we show experimental results on real-world gene perturbation data from \citet{xiao2015gene}. 
Figure \ref{fig:mouse} shows the learned DAGs for genes from mouses (Left) and humans (Right).
For mouse genes we analyzed 17 genes and we can observe the following: the gene Spint1 reaches 3 genes directly and all other genes indirectly; finally, the genes Tgm2, Ifnb1, Tgfbr2 and Hmgn1 are the most influenced genes.
For human genes we analyzed 17 genes and we observe the following:  the gene CTGF reaches 1 gene directly and all the remaining genes indirectly; finally, the gene HNRNPA2B1 is reached by all genes.

\begin{figure}[!ht]
	\centering
	
	\begin{minipage}[c]{0.45\linewidth}
		\centering
		\includegraphics[width=0.65\linewidth]{./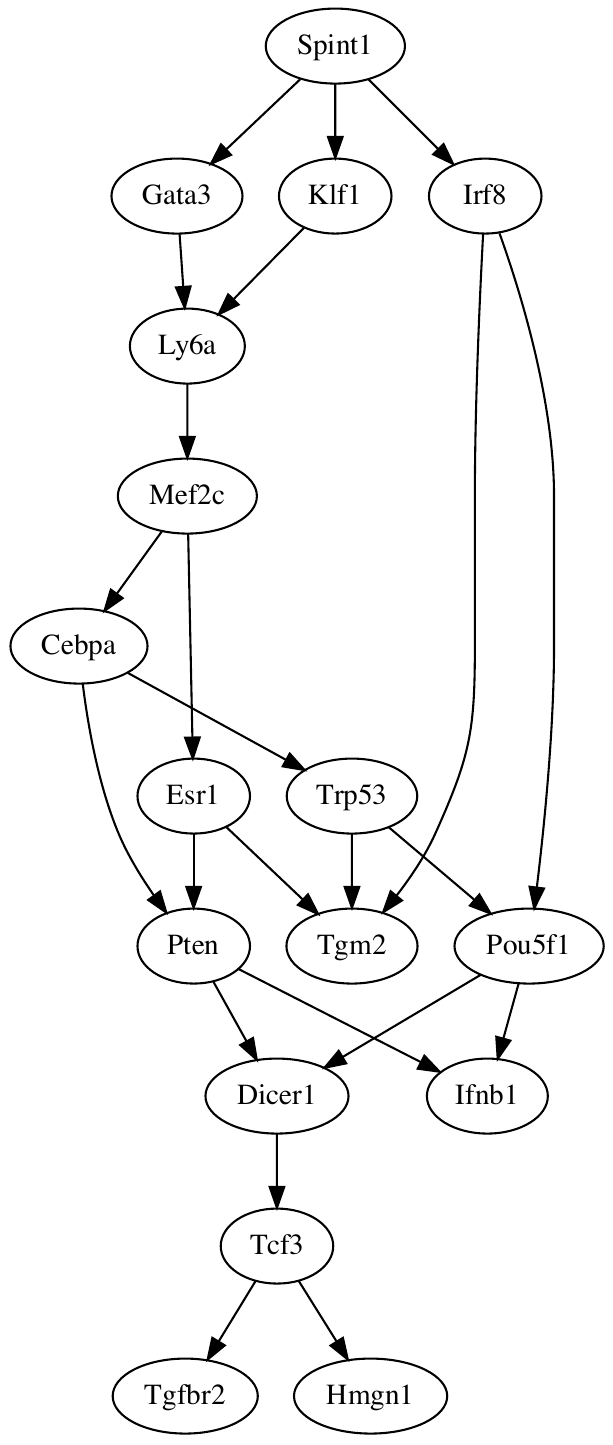}
	\end{minipage}
	\hspace{0.1in}
	\begin{minipage}[c]{0.45\linewidth}
		\centering
		\includegraphics[width=0.42\linewidth]{./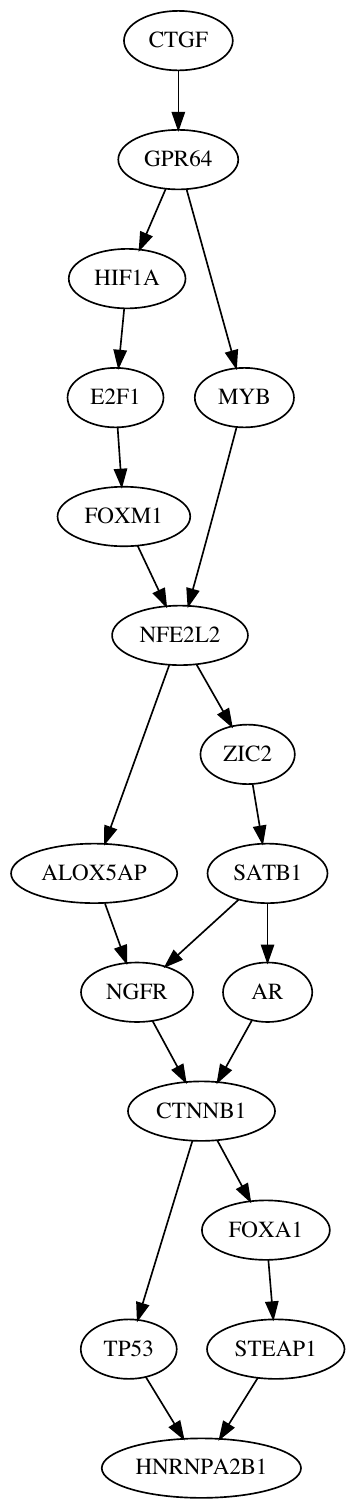}  
	\end{minipage}

	\caption{DAG structure recovered from interventional data in \cite{xiao2015gene}. (Left) Nodes correspond to mouse genes. (Right) Nodes correspond to human genes.}
	\label{fig:mouse}
\end{figure}

	\end{appendices}

\end{document}